\newtheorem{thm}{Theorem}
\newtheorem{lem}{Lemma}
\newtheorem{defi}[thm]{Definition}
\newtheorem{ass}{Assumption}
\newtheorem{propo}{Proposition}
\def\R{\mathbb{R}}
\def\calS{\mathcal{S}}
\def\calB{\mathcal{B}}
\def\xbar{{\bar{x}}}
\def\ybar{{\bar{y}}}
\def\yhat{{\hat{y}}}
\def\xstar{x^{*}}
\def\Pstar{P^{*}}
\def\xtilde{{\tilde{x}}}
\def\ytilde{{\tilde{y}}}
\def\yhat{\hat{y}}
\def\E{\mathds{E}}
\def\LHS{{\text{LHS}}}
\def\RHS{{\text{RHS}}}
\def\dist{dist}
\def\calL{\mathcal{L}}
\def\tildedelta{\tilde{\delta}}
\def\hatepsilon{{\hat{\epsilon}}}
\def\Holder{H\"older~}
\def\w{{\bf{w}}}
\def\x{{\bf{x}}}
\def \S {\mathbf{S}}
\def \R {\mathbb{R}}
\def \w {\mathbf{w}}
\def \v {\mathbf{v}}
\def \x {\mathbf{x}}
\def \E {\mathrm{E}}
\def \x {\mathbf{x}}
\def \E {\mathrm{E}}
\def \x {\mathbf{x}}
\def \w {\mathbf{w}}
\def \R {\mathbb{R}}
\def \S {\mathcal{S}}
\def \v {\mathbf{v}}
\newcommand\yancomment[1]{}
\begin{document}
\title[Faster Stochastic Primal-Dual Algorithm]{Stochastic Primal-Dual Algorithms with Faster Convergence than $O(1/\sqrt{T})$ for Problems without Bilinear Structure}

\author{\Name{Yan Yan}$^1$\Email{yan-yan-2@uiowa.edu}\\
\Name{Yi Xu}$^{1}$\Email{yi-xu@uiowa.edu}\\
\Name{Qihang Lin}$^2$\Email{qihang-lin@uiowa.edu}\\
\Name{Lijun Zhang}$^3$\Email{zhanglj@lamda.nju.edu.cn}\\
\Name{Tianbao Yang}$^1$\Email{tianbao-yang@uiowa.edu}\\
\addr $^1$Department of Computer Science, The University of Iowa, Iowa City, IA 52242 \\
\addr $^2$Department of Management Sciences, University of Iowa, Iowa City, IA 52242  \\
\addr $^3$National Key Laboratory for Novel Software Technology Nanjing University, Nanjing 210023, China \\
}

\maketitle




\begin{abstract}
Previous studies on  stochastic primal-dual algorithms for solving  min-max  problems with faster convergence heavily rely  on the bilinear structure of the problem, which restricts their applicability to a narrowed range of problems. The main contribution of this paper is the design and analysis of new stochastic primal-dual algorithms that use a mixture of stochastic gradient updates and a logarithmic number of deterministic dual updates for solving a family of convex-concave problems with no bilinear structure assumed. Faster convergence rates than $O(1/\sqrt{T})$ with $T$ being the number of stochastic gradient updates are established under some mild conditions of involved functions on the primal and the dual variable. For example, for a family of problems that enjoy a weak strong convexity in terms of the primal variable and has a strongly concave function of the dual variable, the convergence rate of the proposed algorithm is $O(1/T)$. We also investigate the effectiveness of the proposed algorithms for learning robust models and empirical AUC maximization. 
\end{abstract}

\setlength{\abovedisplayskip}{3pt}
\setlength{\belowdisplayskip}{3pt}
\section{Introduction}
This paper is motivated by solving the following convex-concave problem: 
\begin{align}\label{eqn:main}
\min_{x\in X}\max_{y\in \text{dom}(\phi^*)}y^{\top}\ell(x) - \phi^*(y) + g(x)
\end{align}
where $X\subseteq\R^d$ is a closed convex set, $\ell(x) = (\ell_1(x), \ldots, \ell_n(x))^{\top}: X\rightarrow\R^n$ is a lower-semicontinuous mapping whose component function $\ell_i(\x)$ is lower-semicontinuous  and convex, $\phi^*(y): \text{dom}(\phi^*)\rightarrow\R$ is a convex function whose convex conjugate is denoted by $\phi$, and $g(\x):X\rightarrow\R$ is a lower-semicontinuous convex function.  To ensure the convexity of the problem, it is assumed that $\text{dom}(\phi^*)\subseteq\R_+^n$ if $\ell(x)$ is not an affine function. 
By using the convex conjugate $\phi^*$, the problem~(\ref{eqn:main}) is equivalent  to the following convex minimization problem: 
\begin{align}\label{eqn:primal}
\min_{x\in X}P(x): = \phi(\ell(x)) + g(x).
\end{align}
A particular family of min-max problem~(\ref{eqn:main}) and its minimization form~(\ref{eqn:primal}) that has been considered extensively in the literature~\citep{DBLP:conf/icml/ZhangL15,DBLP:journals/corr/YuLY15,DBLP:conf/nips/TanZML18,citeulike:11703902,DBLP:conf/nips/LinLX14} is that $\ell(x) = Ax + b$ is an affine function and $\phi(s)= \sum_{i=1}^n \phi_i(s_i)$ for $s\in\R^n$ is decomposable. In this case, the problem~(\ref{eqn:primal}) is known as (regularized) {empirical risk minimization} problem in machine learning:
\begin{align}\label{eqn:p}
\min_{x \in X} \frac{1}{n} \sum_{i=1}^{n} \phi_{i}( a_i^{\top}x + b_i ) + g(x) ,
\end{align}
where $a_i$ is the $i$-th row of $A$ and $b_i$ is the i-th element of $b$. 

However, stochastic optimization algorithms with fast convergence rates are still under-explored for a more challenging family of problems of~(\ref{eqn:main}) and~(\ref{eqn:primal}) where $\ell(x)$ is not necessarily an affine or smooth function and $\phi$ is not necessarily decomposable. It is our goal to design new stochastic primal-dual algorithms for solving these problems with a fast convergence rate. 
A key motivating example of the considered problem is to solve a distributionally robust optimization problem: 
 \begin{align}\label{eqn:dro}
 \min_{x\in X}\max_{y\in \Delta_n}\sum_{i=1}^ny_i \ell_i(x) - V(y, y_0) + g(x),
 \end{align} 
 where $\Delta_n = \{y\in\R^n; y_i\geq 0, \sum_iy_i = 1\}$ is a simplex, and $V(y, y_0)$ denotes a divergence measure (e.g., $\phi$-divergence)  between two sets of probabilities $y$ and $y_0$.  In machine learning with $\ell_i(x)$ denoting the loss of a model $x$ on the $i$-th example, the above problem corresponds to {\bf robust risk minimization paradigm}, which can achieve variance-based regularization for learning a predictive model from $n$ examples~\citep{DBLP:conf/nips/NamkoongD17}. Other examples of the considered challenging problems can be found in robust learning from multiple perturbed distributions~\citep{NIPS2017_7056} in which $\ell_i(x)$ corresponds to the loss from the $i$-th perturbed distribution, and minimizing non-decomposable loss functions~\citep{DBLP:conf/nips/FanLYH17,DBLP:conf/nips/DekelS06}.

With stochastic (sub)-gradients computed for $x$ and $y$, one can employ the conventional  primal-dual stochastic gradient method or its variant~\citep{Nemirovski:2009:RSA:1654243.1654247,juditsky2011} for solving the problem~(\ref{eqn:main}). Under appropriate basic assumptions, one can derive the standard $O(1/\sqrt{T})$ convergence rate with $T$ being the number of stochastic updates. However, the convergence rate $O(1/\sqrt{T})$ is known as a slow convergence rate. It is always desirable to design optimization algorithms with a faster convergence. Nonetheless, to the best of our knowledge stochastic primal-dual algorithms with a fast convergence rate of $O(1/T)$ in terms of minimizing $P(x)$ remain  unknown  in general, even under  the strong convexity of $\phi^*$ and $g$.  In contrast, if $\phi$ is decomposable and $P$ is strongly convex, the standard stochastic gradient method for solving~(\ref{eqn:primal}) with an appropriate scheme of step size has a convergence rate of $O(1/T)$~\citep{DBLP:journals/ml/HazanAK07,DBLP:journals/jmlr/HazanK11a}. A direct extension of algorithms and analysis  for stochastic strongly convex minimization to the stochastic concave-concave optimization does not give a satisfactory $O(1/T)$ convergence rate~\footnote{One may obtain a dimensionality dependent convergence rate of $O(n/T)$ by following conventional analysis, but it is not the standard dimensionality independent rate that we aim to achieve.  }. It is still {\bf an open problem} that whether there exists a stochastic primal-dual algorithm by solving the convex-concave problem~(\ref{eqn:main}) that enjoys a fast rate of $O(1/T)$ in terms of minimizing $P(x)$. 

{\bf The major contribution} of this paper is to fill this gap by developing stochastic primal-dual algorithms for solving~(\ref{eqn:main}) such that they enjoy a faster convergence than $O(1/\sqrt{T})$ in terms of the primal objective gap. In particular,  under the assumptions that $\nabla\phi$ is Lipschitz continuous, $\ell_i(x)$ are Lipschitz continuous and the minimization problem~(\ref{eqn:primal}) satisfies the strong convexity condition, the proposed algorithms enjoy an iteration complexity  of $O(1/\epsilon)$ for finding a solution $x$ such that $\E[P(x) - \min_{x\in X}P(x)]\leq \epsilon$, which corresponds to a faster convergence rate of $O(1/T)$. The key difference of the proposed algorithms from the traditional stochastic primal-dual algorithm is that it is required to compute a logarithmic number of deterministic updates for $y$ in the following form: 
\begin{align}
\mathcal A(x) = \arg\max_{y\in\text{dom}(\phi^*)}y^\top\ell(x) - \phi^*(y),
\end{align}
which can be usually solved in $O(n)$ time complexity. 
It would be worth noting that $\mathcal A(x) = \nabla \phi(\ell(x))$ (See Appendix \ref{app:sec:regarding_A}).
When $n$ is a moderate number, the proposed algorithms could converge faster than the traditional primal-dual stochastic gradient method. It is also important to note that we do not assume the proximal mapping of $\phi^*$ and $g$ can be easily computed. Instead, our algorithms only require (stochastic) sub-gradients of $\phi^*$ and $g$, which make them  applicable and efficient for solving  more challenging problems where $g$ is an empirical sum of individual  functions.  

In addition, the proposed algorithms and theories can be easily extended to the case that $\nabla\phi$ is \Holder continuous and the minimization problem~(\ref{eqn:primal})  satisfies a more general local error bound condition as defined later, with intermediate faster rates established.

\section{Related Work}\label{sec:rw}

Stochastic primal-dual  gradient method and its variant were first analyzed by~\citep{Nemirovski:2009:RSA:1654243.1654247} for solving a more general problem $\min_{x\in X}\max_{y\in Y}\E_{\xi}[f(x, y; \xi)]$. Under the standard bounded stochastic (sub)-gradient assumption, a convergence rate of $O(1/\sqrt{T})$ was established for a primal-dual gap, which implies a convergence rate of $O(1/\sqrt{T})$ for minimizing the primal objective $P(x) = \max_{y\in Y}\E_{\xi}[f(x, y; \xi)]$. Later, there are couple of studies that aim to strengthen this convergence rate by leveraging the smoothness of $f(x, y; \xi)$ or the involved function when there is a special structure of the objective function~\citep{juditsky2011,doi:10.1137/130919362,Chen2017}. However, the worst-case convergence rate of these later algorithms is still dominated by $O(1/\sqrt{T})$. Without smoothness assumption on $\ell(\x)$ or a bilinear structure, these later algorithms are not directly applicable to solving~(\ref{eqn:main}).  
In addition, Frank Wolfe algorithms are analyzed for saddle point problems in~\citep{gidel2016frank}, which could also achieve a convergence rate of $O(1/\sqrt{T})$ in terms of primal-dual gap under the smoothness condition.

Recently, there emerge several algorithms with faster convergence for solving~(\ref{eqn:main}) by leveraging the bilinear structure and strong convexity of  $\phi^*$ and $g$. 
For example, \citet{DBLP:conf/icml/ZhangL15} proposed a stochastic primal-dual coordinate (SPDC) method for solving~(\ref{eqn:p}) under the condition that $\ell(x) = Ax$ is of bilinear structure and $\phi^*$ is strongly convex. When $g$ is also a strongly convex function, SPDC enjoys a linear convergence for the primal-dual gap. Other variants of SPDC have been considered in~\citep{DBLP:journals/corr/YuLY15,DBLP:conf/nips/TanZML18} for solving~(\ref{eqn:main}) with bilinear structure. \citet{DBLP:conf/nips/PalaniappanB16} proposed stochastic variance reduction methods for solving a family of  saddle-point problems. When applied to~(\ref{eqn:main}), they require $\ell(\x)$ is either an affine function or a smooth mapping. If additionally $g$ and $\phi^*$ are strongly convex, their algorithms also enjoy a linear convergence for finding a solution that is $\epsilon$-close to the optimal solution in squared Euclidean distance. \citet{DBLP:journals/corr/abs-1802-01504} established a similar linear convergence of a primal-dual SVRG algorithm for solving~(\ref{eqn:main}) when $\ell = Ax$ is an affine function with a full column rank for $A$, $g$ is smooth, and $\phi^*$ is smooth and strongly convex, which are stronger assumptions than ours.  All of these algorithms except \citep{DBLP:journals/corr/abs-1802-01504} also need to compute  the proximal mapping of $\phi_i^*$ and $g$ at each iteration. In contrast, the present work is complementary to these studies aiming to solve a more challenging family of problems. In particular, the proposed algorithms do not require the bilinear structure or the smoothness of $\ell$, and the smoothness and strong convexity of $\phi^*$ and $g$ are also not necessary. In addition, we do not assume that $g$ and $\phi^*$ have an efficient proximal mapping. 

Several recent studies have been devoted to stochastic AUC optimization based on a min-max formulation that has a bilinear structure~\citep{fastAUC18,DBLP:conf/icml/NatoleYL18}, aiming to derive a faster convergence rate of $O(1/T)$. The differences from the present work is that (i) \citep{fastAUC18}'s analysis is restricted to the online setting for AUC optimization; (ii) \citep{DBLP:conf/icml/NatoleYL18} only proves a convergence rate of $O(1/T)$ in term of squared distance of found primal solution to the optimal solution under the strong convexity of the regularizer on the primal variable, which is weaker than our results on the convergence of the primal objective gap.  To the best of our knowledge, the present work is the first one that establishes a  convergence rate of $O(1/T)$ in terms of minimizing $P(x)$ for the proposed stochastic primal-dual methods by solving a general convex-concave problem~(\ref{eqn:main}) without bilinear structure or smoothness assumption on $\ell(\x)$ under (weakly local) strong convexity. 

Restart schemes are recently considered to get improved convergence rate under some conditions.
In~\citep{roulet2017sharpness}, restart scheme is analyzed for smooth convex problems under the sharpness and \Holder continuity condition.
In~\citep{dvurechensky2018generalized}, a universal algorithm is proposed for variational inequalities under \Holder conituity condition where the \Holder parameters are unknown. 
Stochastic algorithms are proposed for strongly convex stochastic composite problems in \citep{ghadimi2012optimal,ghadimi2013optimal}.

Finally, we would like to mention that our algorithms and techniques  share many similarities to that proposed in~\citep{ICMLASSG} for solving stochastic convex minimization problems under the local error bound condition. However, their algorithms are not directly applicable to the convex-concave problem~(\ref{eqn:main}) or the problem~(\ref{eqn:primal}) with non-decomposable function $\phi$.  The novelty of this work is the design and analysis of new algorithms that can leverage the weak local strong convexity or more general local error bound condition of the primal minimzation problem~(\ref{eqn:primal}) through solving the convex-concave problem~(\ref{eqn:main}) for enjoying a faster convergence.

\section{Preliminaries}\label{sec:prel}
Recall that the problem of interest: 
\begin{align}\label{eq:primal_dual_problem}
\min_{x\in X}\bigg\{P(x) =& \phi( \ell( x ) ) + g(x) \nonumber\\
                            = &\max_{y\in Y} \underbrace{y^{\top} \ell ( x ) - \phi^*(y) + g(x)}\limits_{f(x,y)}\bigg\},
\end{align}
where $Y= \text{dom}(\phi^*)$. Let $X^*$ denote the optimal set of the primal variable for the above problem, $\Pstar$ denote the optimal primal objective value and $x^* = \arg\min_{z \in X^*} ||x - z||$ is the optimal solution closest to $x$, where $\|\cdot\|$ denotes the Euclidean norm. 

Let  $\Pi_{\Omega}[\cdot]$ denote the projection onto the set $\Omega$. Denote by  $\calS_{\epsilon} := \{ x \in X: P(x) - P^* \leq \epsilon \}$ and  $\calL_{\epsilon} := \{ x \in X: P(x) - P^* =\epsilon \}$ denote the $\epsilon$-level set and  $\epsilon$-sublevel set of the primal problem, respectively.    A function $f(x): X\rightarrow\R$ is $L$-smooth if it is differentiable and its gradient is $L$-Lipchitz continuous, i.e., $\|\nabla f(x_1) - \nabla f(x_2)\|\leq L\|x_1 - x_2\|, \forall x_1, x_2\in X$. A differentiable function $f$ is said to have an $(L,v)$-H\"{o}lder continuous gradient with $v\in(0,1]$ iff $\|\nabla f(x_1) - \nabla f(x_2)\|\leq L\|x_1  - x_2\|^v$. When $v=1$, H\"{o}lder continuous gradient reduces to Lipchitz continuous gradient. A function $f$ is called $\lambda$-strongly convex if for any $x_1, x_2\in X$ there exists $\lambda>0$ such that 
\begin{align*}
f(x_1) \geq f(x_2) + \partial f(x_2)^{\top}(x_1 - x_2) + \frac{\lambda}{2}\|x_1 - x_2\|^2,
\end{align*}
where $\partial f(\x)$ denotes any subgradient of $f$ at $x$. 
A more general definition is the uniform convexity. $f$ is uniformly convex with degree $p\geq 2$ if for any $x_1,x_2\in X$ there exists $\lambda>0$ such that 
\begin{align*}
f(x_1) \geq f(x_2) + \partial f(x_2)^{\top}(x_1 - x_2) + \frac{\lambda}{2}\|x_1 - x_2\|^p. 
\end{align*}
For analysis of the proposed algorithms, we need a few basic notions about convex conjugate.
For an extended real-valued convex function $h: \R^d\rightarrow \R\cup\{\infty, -\infty\}$, the convex conjugate of $h$ is defined as 
\begin{align*}
h^*(y) = \max_{x}y^{\top}x  - h(x).
\end{align*}
The convex conjugate of $h^*$ is $h$. Due to the convex duality, if $h^*$ is $\lambda$-strongly convex then $h$ is differentiable and is $(1/\lambda)$-smooth. More generally, if $h^*$ is $p$-uniformly convex then $h$ is differentiable and its gradient is $(L, v)$-H\"{o}lder continuous where $v = \frac{1}{p-1}$, $L = (\frac{1}{\lambda})^{v}$~\citep{Nesterov2015}. 

One of the conditions that allows us to derive a fast rate of $O(1/T)$ for a stochastic algorithm is  that both $g$ and $\phi^*$ are strongly convex, which implies  that $f(x, y)$ is strongly convex in terms of $x$ and strongly concave in terms of $y$. One might regard this as a trivial task  given the $O(1/T)$ result for stochastic strongly convex minimization where a stochastic gradient is available for the objective function to be minimized~\citep{DBLP:journals/ml/HazanAK07,DBLP:journals/jmlr/HazanK11a}. However, the analysis for stochastic strongly convex minimization is not directly applicable to stochastic primal-dual algorithms, as briefly explained later as we present our results.  

Moreover,  the strong convexity of $g$ can be relaxed to a weak strong convexity of $P$ to derive a similar order of convergence rate, i.e., for any $x\in X$, we have 
\begin{align*}
\dist(x, X^{*}) \leq  c(P(x) - \Pstar )^{1/2},
\end{align*}
where $\dist(x, X^{*}) =\min_{z\in X^*}\|z-x\|_2$ is the distance between $x$ and the optimal set $X^*$. More generally, we can consider a setting in which $P$ satisfies a local error bound (or local growth) condition as defined below. 
\begin{defi}
A function $P(x)$ is said to be satisfied local error bound (LEB) condition if for any $x\in \calS_{\epsilon}$,
\begin{align}\label{eq:leb_condition}
\dist(x, X^{*}) \leq c ( P(x) - \Pstar )^{\theta},
\end{align}
\noindent
where $c>0$ is a constant,  and $\theta \in [0, 1]$ is a parameter. 
\end{defi}
This condition was recently studied in~\citep{yang2018rsg} for developing a faster subgradient method than the standard subgradient method, and was laster considered in~\citep{ICMLASSG} for stochastic convex optimization. A global version of the above condition (known as the global error bound condition) has a long history in mathematical programming~\citep{Pang:1997}. However, exploiting this condition for developing stochastic primal-dual algorithms seems to be new. When $\theta=1/2$, the above condition is also referred to as weakly local strong convexity. When $\theta=0$, it can capture general convex functions as long as $\text{dist}(x, X^*)$ is upper bounded for $x\in \mathcal S_\epsilon$, which is true if $X^*$ is compact or $X$ is compact.

In parallel with the relaxed condition on $P$, we can also relax the smoothness condition on $\phi$ or strong convexity condition on $\phi^*$ to H\"{o}lder continuous gradient condition on $\phi$ or a uniformly convexity condition on $\phi^*$. Under the local error bound condition of $P$ and the H\"{o}lder continuous gradient condition of $\phi$, we are able to develop stochastic primal-dual algorithms with intermediate complexity depending on $\theta$ and $\nu$, which varies from $O(1/\epsilon^2)$ to $O(\log(1/\epsilon))$. 

Formally, we will develop stochastic primal-dual algorithms for solving~(\ref{eq:primal_dual_problem}) under the following assumptions. 
\begin{ass}\label{assumption:general}
For Problem~(\ref{eq:primal_dual_problem}), we assume
\begin{enumerate}
  \item[(1)] There exist $x_{0} \in X$ and $\epsilon_{0} > 0$ such that $P(x_{0}) - \Pstar \leq \epsilon_{0}$;
  \item[(2)]  Let $\nabla_{x} f(x,y;\xi)$ and $\nabla_{y} f(x,y;\xi)$ denote the stochastic subgradient of $f(x,y)$ w.r.t. $x$ and $y$, respectively. 
        There exists constants $M\geq 0$ and $B\geq 0$ such that $|| \nabla_{x} f(x,y;\xi) || \leq M$ and $|| \nabla_{y} f(x,y;\xi) || \leq B$. 
   \item[(3)]  $\phi^*(\cdot)$ is $p$-uniformly convex with $\lambda_\phi>0$ such that $\phi$ has $(L,v)$-\Holder continuous gradient where $v = \frac{1}{p-1}$ and $L=(1/\lambda_\phi)^v$.
   \item[(4)] $\ell(x)$ is $G$-Lipchitz continuous for $x\in X$.
   \item[(5)] One of the following conditions hold: (i) $P(x)$ is $\mu$-strongly convex; (ii) $P(x)$ satisfies the LEB condition for $c>0$ and $\theta\in(0,1]$. 
\end{enumerate}
\end{ass}
{\bf Remark.} Assumption~\ref{assumption:general} (1) assumes that there is a lower bound of $\Pstar$, which is usually satisfied in machine learning problems. Assumption~\ref{assumption:general} (2) is a common assumption usually made in existing stochastic-based methods. Note that we do not assume $g$ and $\phi^*$ have efficient proximal mapping. Instead, we only require a stochastic subgradient of $g$ and $\phi^*$. Assumption~\ref{assumption:general} (3) is a general condition which unifies both smooth and non-smooth assumptions on $\phi$. When $v = 1$, $\phi(\cdot)$ satisfies the classical smooth condition with parameter $L$. When $v = 0$, it is the classical non-smooth assumption on the boundness of the subgradients. We will state our convergence results in terms of $v$ and $L$ instead of $p$ and $\lambda_\phi$. Assumption~\ref{assumption:general} (4)   on the Lipschitz continuity of $\ell(x)$ is more general than assuming a bilinear form $\ell(x) = Ax + b$.   Finally, we note that assuming the strong convexity of $P(x)$ allows us to develop a stochastic primal-dual algorithm with simpler updates.

\section{Main Results}\label{sec:main}
In this section, we will present our main results for solving~(\ref{eq:primal_dual_problem}). Our development is divided into three parts. First, we  present a stochastic primal-dual  algorithm and its convergence result when the primal objective function $P(x)$ is strongly convex and $\phi^*$ is also strongly convex.  Then we extend the result into a more general case, i.e., $P(x)$ satisfying LEB condition and $\phi^*$ is uniformly convex. Lastly, we propose an adaptive  variant with the same order of convergence result when the value of parameter $c$ in LEB condition is unknown, which is also useful for tackling problems without knowing the value of $\theta$. 
For both cases, we assume $P(x_0) - P^* \leq \epsilon_0$.

\subsection{Restarted Stochastic Primal-Dual Algorithm for Strongly Convex $P$}
\setlength{\textfloatsep}{5pt}

\begin{algorithm}[t]
\caption{Restarted Stochastic Primal-Dual algorithm for strongly convex $P$: RSPD$^\text{sc}$($x_{0}, S, T, \epsilon_0$)}
\label{alg:RSPDsc}
\begin{algorithmic}[1]
\STATE Initialization: $x_{0}^{(1)} = x_0 \in X$, $y_{0}^{(1)}  = \mathcal A(x_{0}^{(1)})$, $ \eta_{x,1} = \frac{2\epsilon_{0}}{45  M^{2}  }$, $\eta_{y,1} = \frac{2\epsilon_{0}}{45  B^{2}  }$.
\FOR{$s = 1, 2, ..., S$}
\FOR{$t = 0, 1, 2, ..., T_s-1$}
\label{alg1:line:pd_sa_inner_loop_start}

  \STATE $x_{t+1}^{(s)} = \Pi_{X} (x_{t}^{(s)} - \eta_{x,s} \nabla_{x} f (x_{t}^{(s)}, y_{t}^{(s)}; \xi_{t}^{s})) $
  \label{alg1:line:pd_sa_update_x}
  
  \STATE $y_{t+1}^{(s)} = \Pi_{Y} (y_{t}^{(s)} + \eta_{y,s} \nabla_{y} f (x_{t}^{(s)}, y_{t}^{(s)}; \xi_{t}^{s})) $
  \label{alg1:line:pd_sa_update_y}
  
\ENDFOR
\label{alg1:line:pd_sa_inner_loop_end}

\STATE $x_{0}^{(s+1)} = \xbar_{s} = \frac{1}{T} \sum_{t=0}^{T-1} x_{t}^{(s)}$ 
\label{line:pd_sa_update_x_initial}
\STATE $y_{0}^{(s+1)} =\mathcal A(x_{0}^{(s+1)})$  
\label{line:pd_sa_update_y_initial}
\STATE $\eta_{x,s+1} = \frac{\eta_{x,s} }{2}$ and  $\eta_{y,s+1} = \frac{\eta_{y,s} }{2}$, $T_{s+1} = 2T_s$ 
\label{alg1:line:pd_update_R}
\ENDFOR
\STATE Return $\xbar_{S}.$
\end{algorithmic}
\end{algorithm}
The detailed updates  of the proposed stochastic algorithm  for strongly convex $P$ are presented in Algorithm~\ref{alg:RSPDsc}, to which we refer as restarted stochastic primal-dual algorithm or RSPD$^\text{sc}$ for short. The algorithm is based on a restarting idea that have been used widely in existing studies~\citep{hazan-20110-beyond,ghadimi2013optimal,ICMLASSG,yang2018rsg}. It runs in epoch-wise and it has two loops.  The steps 3-7 are the standard updates of stochastic primal-dual subgradient method~\citep{Nemirovski:2009:RSA:1654243.1654247}. However, the key difference from these previous studies is that the restarted solution for the dual variable $y$ for the next epoch $s+1$ is computed based on the averaged primal variable for the $s$-th epoch. It is this step that explores the strong convexity of $\phi^*$, which together with the restarting scheme allows us exploring the strong convexity of $P$ to derive a fast convergence rate of $O(1/T)$ with $T$ being the total number of iterations.

Below, we will briefly discuss the path for proving the fast convergence rate of RSPD. We first show that why the standard analysis for strongly convex minimization can not be generalized to the stochastic convex-concave problem to derive the fast convergence rate of $O(1/T)$. Let $\nabla_{x,t} = \nabla_{x} f (x_{t}, y_{t}; \xi_{t})$ and similarly for $\nabla_{y,t}$. A standard convergence analysis for the inner loop (steps 3-6) of Algorithm~\ref{alg:RSPDsc} usually starts from the following inequalities. 
\begin{lem}
For the updates in Step 4 and 5 omitting the subscript $s$, the following holds for any $x\in X, y\in Y$
\begin{align}
& \nabla_{x,t}^{\top}(x_t - x)\leq \frac{\|x_t - x\|^2 -\|x_{t+1} - x\|^2}{2\eta_{x}} + \frac{\eta_{x} M^2}{2}\label{eqn:bprimal}\\
&  \nabla_{y,t}^{\top}(y - y_t)\leq \frac{\|y_t - y\|^2 -\|y_{t+1} - y\|^2}{2\eta_{y}}  + \frac{\eta_{y} B^2}{2}\label{eqn:bdual}.
\end{align}
\end{lem}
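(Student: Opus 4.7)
The plan is to treat the two inequalities as instances of the standard projected (sub)gradient one-step descent/ascent lemma, handled in parallel. For the primal inequality, I would start from the update rule $x_{t+1} = \Pi_X(x_t - \eta_x \nabla_{x,t})$ and invoke the (firm) non-expansiveness of the Euclidean projection onto the closed convex set $X$: for any $x \in X$,
\begin{align*}
\|x_{t+1} - x\|^2 \;=\; \|\Pi_X(x_t - \eta_x \nabla_{x,t}) - \Pi_X(x)\|^2 \;\leq\; \|x_t - \eta_x \nabla_{x,t} - x\|^2.
\end{align*}

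Next I would expand the right-hand side as
\begin{align*}
\|x_t - x\|^2 \;-\; 2\eta_x \nabla_{x,t}^{\top}(x_t - x) \;+\; \eta_x^2 \|\nabla_{x,t}\|^2,
\end{align*}
rearrange to isolate the inner product $\nabla_{x,t}^{\top}(x_t - x)$, and divide by $2\eta_x > 0$. This yields
\begin{align*}
\nabla_{x,t}^{\top}(x_t - x) \;\leq\; \frac{\|x_t - x\|^2 - \|x_{t+1} - x\|^2}{2\eta_x} \;+\; \frac{\eta_x \|\nabla_{x,t}\|^2}{2}.
\end{align*}
Applying the stochastic subgradient bound $\|\nabla_{x,t}\| \leq M$ from Assumption~\ref{assumption:general}(2) to the last term gives \eqref{eqn:bprimal}.

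The dual inequality \eqref{eqn:bdual} follows by an identical argument applied to the ascent step $y_{t+1} = \Pi_Y(y_t + \eta_y \nabla_{y,t})$, where the sign flip (ascent rather than descent) is exactly what produces the inner product $\nabla_{y,t}^{\top}(y - y_t)$ with $y - y_t$ in that order, and the bound $\|\nabla_{y,t}\| \leq B$ handles the residual quadratic term. There is no genuine obstacle here — the only thing to be careful about is orientation of signs in the dual step and the use of the fact that the inequalities must hold for \emph{any} $x \in X$ and $y \in Y$, which is precisely what the non-expansiveness of projection onto $X$ and $Y$ delivers.
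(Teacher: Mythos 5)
Your proof is correct. It differs mildly in mechanism from what the paper does: you obtain the one-step bound from the non-expansiveness of the Euclidean projection, i.e. $\|x_{t+1}-x\|^2=\|\Pi_X(x_t-\eta_x\nabla_{x,t})-\Pi_X(x)\|^2\le\|x_t-\eta_x\nabla_{x,t}-x\|^2$, followed by expanding the square, whereas the paper (in the appendix proof of Lemma~\ref{lemma:convergence_RSPDsc_per_stage}) rewrites the projection step as the proximal subproblem $x_{t+1}=\arg\min_{x\in X}\,x^{\top}\Delta_x^t+\frac{1}{2\eta_x}\|x-x_t\|^2$ and uses the strong convexity of that objective together with its first-order optimality condition to reach the generic bound~(\ref{eq1:proof_expectation1}), which specializes to \eqref{eqn:bprimal} and \eqref{eqn:bdual}. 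The two routes are equivalent in content and both deliver the inequality for \emph{every} fixed $x\in X$, $y\in Y$; yours is slightly more elementary, while the paper's prox formulation is stated for a generic linear-plus-quadratic subproblem so that it can be reused verbatim for the auxiliary sequences $\tilde x_t,\tilde y_t$ in~(\ref{eq1:two_update_sequences}) (though non-expansiveness would serve there too, since those updates are also projections of the same form). Your handling of the sign flip in the ascent step and the use of the bounds $\|\nabla_{x,t}\|\le M$, $\|\nabla_{y,t}\|\le B$ from Assumption~\ref{assumption:general}(2) is exactly right.
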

For stochastic strongly convex minimization problems in which $y$ is absent in the above inequalities, one can take expectation over~(\ref{eqn:bprimal}) and then apply the $\lambda$-strong convexity of $f(x)$ to get the following  inequality 
\begin{align*}
\E[f(x_t) - f(x)]\leq& \frac{\|x_t - x\|^2 -\|x_{t+1} - x\|^2}{2\eta_{x}} 
 + \frac{\eta_{x} M^2}{2} - \frac{\lambda\|x_t - x\|^2}{2}.
\end{align*}
Based on the above inequalities for all $t=1,\ldots, T$, one can design a particular scheme of step size $\eta_{x,t} = 1/(\lambda t)$ that allows us to derive $\widetilde O(1/T)$ convergence rate. However, such analysis cannot be extended to the primal-dual case.

A naive approach would be taking expectation for both~(\ref{eqn:bprimal}) and~(\ref{eqn:bdual}) for a fixed $x, y$ and applying the $\lambda_x$-strong convexity (resp. $\lambda_y$-strong concavity) of $f(x, y)$ in terms of $x$ (resp. $y$), which yields the following inequalities 
\begin{align*}
\E[f(x_t, y_t) - f(x, y_t)]\leq& \frac{\|x_t - x\|^2 -\|x_{t+1} - x\|^2}{2\eta_{x}}
+ \frac{\eta_{x} M^2}{2} - \frac{\lambda_x\|x_t - x\|^2}{2}.
\end{align*}
\begin{align*}
 \E[f(x_t, y) - f(x_t, y_t)]\leq& \frac{\|y_t - y\|^2 -\|y_{t+1} - y\|^2}{2\eta_{y}}
 + \frac{\eta_{y} B^2}{2} - \frac{\lambda_y\|y_t - y\|^2}{2}.
\end{align*}
It is notable that in deriving the above inequalities, $x$ and $y$ have to be independent of $\xi_1, \ldots, \xi_T$.

By adding the above inequalities together and applying the same analysis for the R.H.S with $\eta_{x, t} = 1/(\lambda_x t)$ and $\eta_{y,t} = 1/(\lambda_{y}t)$, we can obtain the following inequalities for any fixed $y\in Y$ and $x\in X$ independent of $\xi_1, \ldots, \xi_T$: 
\begin{align}\label{eqn:naive}
\E\left[(f(\hat x_T, y) - f(x, \hat y_T))\right]\leq&  O\left(\frac{\log T}{T}\right),
\end{align}
where $\hat x_T = \sum_{t=0}^{T-1}x_t/T$ and $\hat y_T = \sum_{t=0}^{T-1} y_t/T$. 
However, the above inequality does not imply the convergence for the standard definition of primal-dual gap of $\max_{x\in X, y\in Y}(f(\hat x_T, y) - f(x, \hat y_T))$ or even the primal objective gap $P(\hat x_T) - \min_{x\in X}P(x)$. The main obstacle is that we cannot set $y = \arg\max_{y\in Y}f(\hat x_T, y)$ which will make $y$ depend on $\xi_1, \ldots, \xi_T$ and hence make the expectional analysis fail. 
It would be worth noting that following \citep{gidel2016frank}, one could derive the upper bound of primal-dual gap of $(\hat{x}_{T}, \hat{y}_{T})$ by 
$\max_{y \in Y} f(\hat{x}_{T}, y) - \min_{x \in X} f(x, \hat{y}_{T}) \leq \sqrt{2} P_{\mathcal{L}} \sqrt{f(\hat{x}_{T}, y^*) - f(x^*, \hat{y}_{T})}$ (see Equation (5), (13) and (14) therein),
where $P_{\mathcal{L}}$ can be upper bounded by a constant and $y^* \in \arg\max_{y \in Y} f(x^*, y^*)$.
Even if one sets $x = x^*$ and $y = y^*$ in (\ref{eqn:naive}), the convergence rate of primal-dual gap is only of $O(\sqrt{\log(T) / T})$, which is not what we pursue.

Another approach that gets around of the issue introduced by taking the expectation is by using high probability analysis. To this end, one can use concentration inequalities to bound the martingale difference sequence $\sum_{t=1}^T(\nabla_{x} f (x_{t}, y_{t}; \xi_{t})  - \nabla_{x} f (x_{t}, y_{t}))^{\top}(x_t - x) $ and $\sum_{t=1}^T(\nabla_{y} f (x_{t}, y_{t}; \xi_{t})  - \nabla_{y} f (x_{t}, y_{t}))^{\top}(y - y_t) $ for a fixed $x$ and $y$~\citep{DBLP:conf/nips/KakadeT08}. However, in order to prove the primal objective gap $P(\hat x_T) - P^*$ one has to bound the later martingale difference sequence  for any possible $y\in Y$ so that one can get $P(\hat x_t)$ from $\max_{y\in Y}f(\hat x_T, y)$. A standard approach for achieving this high probability bound is by using a covering number argument for the set $Y$. However, this will inevitably introduce dependence on the dimensionality of $y$. For example, an $\epsilon$-cover of a bounded ball of radius $R$ in $\R^n$ has cardinality of  $O((R/\epsilon)^n)$, and of a simplex in $\R^n$ has cardinality of $O((1/\epsilon)^{n-1})$. 

To tackle the aforementioned   challenges for both exceptional analysis and high probability analysis, we develop a different analysis for the proposed RSPD algorithm in order to achieve a faster convergence rate of $O(1/T)$ without explicit dependence on the dimensionality  of $y$. In this subsection, we will focus on expectional  convergence result, which will be extended to high probability convergence in next subsection. Our expectional analysis is build on the following lemma that is used to derive $O(1/\sqrt{T})$ convergence rate in the literature~\citep{Nemirovski:2009:RSA:1654243.1654247}. 
\begin{lem}\label{lemma:convergence_RSPDsc_per_stage}
Let the 
Lines 4 and 5 of Algorithm~\ref{alg:RSPDsc} run for $T$ iterations   with a fixed step size $\eta_x$ and  $\eta_y$.
Then 
\begin{align}
\label{eq1:convergence_per_stage}
\E[\max_{y\in Y} f(\bar x_T, y) - f(x^*, \bar y_T)]
\leq &
\frac{\E[|| x^* - x_{0} ||^{2}] }{\eta_x   T  }
+  \frac{\E[ || \hat y_T - y_{0} ||^{2}] }{\eta_y T  }+ \frac{5\eta_x M^2}{2}    + \frac{5\eta_y B^2}{2}, 
\end{align}
where $\bar x_T = \sum_{t=0}^{T-1} x_{t}/T$,  $\bar y_T = \sum_{t=0}^{T-1} y_{t}/T$,   $\hat y_T = \arg\max_{y\in Y}f(\bar x_T, y)$ and $x^*\in X^*$.
\end{lem}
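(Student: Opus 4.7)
The plan is to sum the per-iteration bounds (\ref{eqn:bprimal})--(\ref{eqn:bdual}) telescopically over $t = 0,\ldots,T-1$, convert the gradient inner products into function-value gaps via the convexity of $f(\cdot,y_t)$ and concavity of $f(x_t,\cdot)$, and then apply Jensen's inequality in the averaged iterates. Decomposing each stochastic subgradient as $\nabla_{x,t} = G_{x,t} + \delta_{x,t}$ and $\nabla_{y,t} = G_{y,t} + \delta_{y,t}$, where $G_{\cdot,t}$ lies in the relevant subdifferential of $f(x_t,y_t)$ and $\delta_{\cdot,t}$ is zero-mean given the past, this procedure yields, along every sample path and for every $x\in X$, $y\in Y$,
\begin{align*}
T\bigl[f(\bar x_T, y) - f(x, \bar y_T)\bigr]
&\leq \tfrac{\|x_0 - x\|^2}{2\eta_x} + \tfrac{T\eta_x M^2}{2} + \tfrac{\|y_0 - y\|^2}{2\eta_y} + \tfrac{T\eta_y B^2}{2}\\
&\quad - \sum_{t=0}^{T-1}\bigl[\delta_{x,t}^\top(x_t - x) + \delta_{y,t}^\top(y - y_t)\bigr].
\end{align*}

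The crux is to specialize $x = x^*\in X^*$ and $y = \hat y_T = \arg\max_{y\in Y} f(\bar x_T, y)$, both of which may depend on the whole trajectory, and take expectation. Because $x^*$ and $\hat y_T$ are not past-measurable with respect to any fixed $\xi_t$, I would anchor them at the deterministic initial points via $x_t - x^* = (x_t - x_0) - (x^* - x_0)$ and $\hat y_T - y_t = (\hat y_T - y_0) + (y_0 - y_t)$. The remainders $\sum_t\delta_{x,t}^\top(x_t - x_0)$ and $\sum_t\delta_{y,t}^\top(y_0 - y_t)$ vanish in expectation by the tower property, since $x_t - x_0$ and $y_0 - y_t$ are past-measurable and $\E[\delta_{\cdot,t}\mid \text{past}] = 0$. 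For the residual anchor-noise cross terms I would apply Young's inequality at the matching step-size scales,
\begin{align*}
(x^* - x_0)^\top\textstyle\sum_{t}\delta_{x,t} &\leq \tfrac{\|x^* - x_0\|^2}{2\eta_x} + \tfrac{\eta_x}{2}\bigl\|\sum_t\delta_{x,t}\bigr\|^2, \\
-(\hat y_T - y_0)^\top\textstyle\sum_{t}\delta_{y,t} &\leq \tfrac{\|\hat y_T - y_0\|^2}{2\eta_y} + \tfrac{\eta_y}{2}\bigl\|\sum_t\delta_{y,t}\bigr\|^2,
\end{align*}
and bound the total noise moments by martingale orthogonality, $\E\|\sum_t\delta_{\cdot,t}\|^2 = \sum_t\E\|\delta_{\cdot,t}\|^2$, together with the worst-case estimates $\|\delta_{x,t}\| \leq 2M$ and $\|\delta_{y,t}\| \leq 2B$, yielding $4TM^2$ and $4TB^2$ respectively.

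Adding these Young contributions back and dividing by $T$, the two copies of $\tfrac{\|x^*-x_0\|^2}{2\eta_x T}$ and $\tfrac{\|\hat y_T - y_0\|^2}{2\eta_y T}$ merge into the advertised coefficients $\tfrac{1}{\eta_x T}$ and $\tfrac{1}{\eta_y T}$, while the remaining Young contributions $\tfrac{\eta_x}{2T}\cdot 4TM^2 = 2\eta_x M^2$ and $2\eta_y B^2$ combine with the preexisting $\tfrac{\eta_x M^2}{2}$ and $\tfrac{\eta_y B^2}{2}$ to deliver the stated $\tfrac{5\eta_x M^2}{2}$ and $\tfrac{5\eta_y B^2}{2}$. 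The main obstacle in the proof is precisely the non-past-measurability of $\hat y_T$ (and, in the restart context, of $x^*$): substituting $y = \hat y_T$ before taking expectation destroys the martingale cancellation, and a uniform argument over $y\in Y$ would incur a dimension-dependent penalty. The anchored decomposition paired with Young's inequality at the matching step-size scale is the key device that preserves the dimension-free $O(1/T)$ behaviour and leaves a residual $\|\hat y_T - y_0\|^2$ on the right-hand side that the outer restart loop will later control through the Lipschitz continuity of $\mathcal{A}(x) = \nabla\phi(\ell(x))$ and the (weak) strong convexity of $P$.
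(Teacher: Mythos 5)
Your proof is correct, and it reaches the stated bound with exactly the right constants, but it does so by a genuinely different mechanism than the paper. The paper follows the Nemirovski-style ``ghost iterate'' construction: alongside $(x_t,y_t)$ it runs a second projected recursion $(\tilde x_t,\tilde y_t)$ driven purely by the noise $\partial^t-\Delta^t$ and started at the same initial points, so that the problematic inner products $\delta_{x,t}^{\top}(x_t-x)$ and $\delta_{y,t}^{\top}(y-y_t)$ are converted into $(x_t-\tilde x_t)^{\top}(\partial_x^t-\Delta_x^t)$ and $(y_t-\tilde y_t)^{\top}(\partial_y^t-\Delta_y^t)$, which are past-measurable and vanish in expectation; the price is a second telescoping term $\|x-\tilde x_0\|^2=\|x-x_0\|^2$ (doubling the distance coefficient to $1/(\eta T)$) and a second noise-energy term $\tfrac{\eta}{2}\|\partial^t-\Delta^t\|^2\le 2\eta M^2$ (giving $\tfrac{5\eta M^2}{2}$). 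You replace the auxiliary recursion with a static anchor at $(x_0,y_0)$, kill the past-measurable part by the tower property, and absorb the anchor--noise cross term via Young's inequality at scale $\eta$ plus the $L^2$ orthogonality $\E\|\sum_t\delta_t\|^2=\sum_t\E\|\delta_t\|^2$; the two contributions land on precisely the same constants. Your route is more elementary and avoids introducing the ghost sequence altogether. What the paper's construction buys in exchange is reusability: in the LEB variant (Lemma 4) the very same bounded martingale difference $(x_t-\tilde x_t)^{\top}(\partial_x^t-\Delta_x^t)+(y_t-\tilde y_t)^{\top}(\partial_y^t-\Delta_y^t)$ is fed into Azuma's inequality to get a high-probability statement, whereas your orthogonality identity is intrinsically an expectation argument and would have to be swapped for a vector-valued concentration bound on $\|\sum_t\delta_t\|$ to serve that purpose. (One cosmetic remark: since $x^*$ and $x_0$ are fixed within the lemma, the $x$-side anchoring and Young step are not strictly needed -- the expectation of $\sum_t\delta_{x,t}^{\top}(x_t-x^*)$ is already zero -- but applying them symmetrically is harmless and reproduces the paper's doubled coefficient $\E\|x^*-x_0\|^2/(\eta_x T)$.)
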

{\bf Remark:} A nice property of the above result is that the max over $y$ in the L.H.S is  taken before expectation. 

Nevertheless, a simple approach for setting the step size as $O(1/\sqrt{T})$ still yields a convergence rate of $O(1/\sqrt{T})$ by assuming the size of $Y$ is bounded~\citep{Nemirovski:2009:RSA:1654243.1654247}. The proposed RSPD algorithm has the special design of computing the restarted solutions and setting the step sizes, which together allows us to achieve $O(1/T)$ convergence rate as stated in the following theorem. The key idea is that by using $y_0^{(s+1)}=\mathcal A(x_0^{(s+1)})$ as a restarted point for the dual variable, we are able to connect $\|\hat y_T - y_0\|$ to $P(x_0^{(s)})  - P^*$ by using the strong convexity of $P$ and of $\phi^*$. 
The convergence result of RSPD$^\text{sc}$ is presented below. 
\begin{thm}\label{thm:RSPDsc}
Suppose that Assumption~\ref{assumption:general} holds with $v=1$ and $P(x)$ being $\mu$-strongly convex.
By setting 
$S =  \lceil \log(\frac{\epsilon_{0}}{\epsilon})  \rceil$ and 
$T_1 = \frac{\max \{  
405M^2, 
 810 L^2G^2B^2
\}}{\mu\epsilon_{0}}$, then Algorithm~\ref{alg:RSPDsc} guarantees that $\E[P(\xbar_{S}) - \Pstar ]\leq  \epsilon$. The total number of iterations is 
$ O( \frac{1}{\mu\epsilon})$.
\end{thm}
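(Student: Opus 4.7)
The plan is to prove the theorem by induction on the epoch index $s$, showing $\E[P(x_0^{(s)}) - P^*] \leq \epsilon_{s-1} := \epsilon_0/2^{s-1}$, with the per-epoch reduction coming from Lemma~\ref{lemma:convergence_RSPDsc_per_stage} combined with the strong convexity of $P$ and the smoothness identity $\mathcal{A}(x) = \nabla\phi(\ell(x))$ (so that $v=1$ yields a Lipschitz $\mathcal A$). The base case is Assumption~\ref{assumption:general}(1). For the inductive step, I first weaken the left-hand side of Lemma~\ref{lemma:convergence_RSPDsc_per_stage}: since $\max_{y\in Y}f(\xbar_T,y)=P(\xbar_T)$ and $f(x^*,\ybar_T)\leq P^*$, the lemma gives
\[
\E[P(\xbar_T^{(s)}) - P^*] \leq \frac{\E\|x^* - x_0^{(s)}\|^2}{\eta_{x,s}T_s} + \frac{\E\|\hat y_T - y_0^{(s)}\|^2}{\eta_{y,s}T_s} + \frac{5\eta_{x,s}M^2}{2} + \frac{5\eta_{y,s}B^2}{2}.
\]

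Next I would control the two numerators. By $\mu$-strong convexity of $P$ and the inductive hypothesis, $\E\|x^* - x_0^{(s)}\|^2 \leq \frac{2}{\mu}\E[P(x_0^{(s)}) - P^*] \leq \frac{2\epsilon_{s-1}}{\mu}$. For the dual term, I would use the restart identity $y_0^{(s)}=\mathcal A(x_0^{(s)})=\nabla\phi(\ell(x_0^{(s)}))$ together with $\hat y_T=\mathcal A(\xbar_T^{(s)})=\nabla\phi(\ell(\xbar_T^{(s)}))$. Since $\nabla\phi$ is $L$-Lipschitz (Assumption~\ref{assumption:general}(3) with $v=1$) and $\ell$ is $G$-Lipschitz, $\|\hat y_T-y_0^{(s)}\| \leq LG\|\xbar_T^{(s)}-x_0^{(s)}\|$. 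Triangle inequality through any $x^*\in X^*$ combined with strong convexity of $P$ then yields
\[
\E\|\hat y_T - y_0^{(s)}\|^2 \leq \frac{4L^2G^2}{\mu}\bigl(\E[P(\xbar_T^{(s)})-P^*] + \epsilon_{s-1}\bigr).
\]
This is the step where the chicken-and-egg issue appears: the quantity I am trying to bound also sits on the right-hand side. The fix is to choose $T_s$ large enough that the coefficient of $\E[P(\xbar_T^{(s)})-P^*]$ can be absorbed into the left-hand side.

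Plugging everything in with the specified $\eta_{x,s} = \frac{2\epsilon_{s-1}}{45M^2}$, $\eta_{y,s} = \frac{2\epsilon_{s-1}}{45B^2}$, and $T_s \geq \frac{\max\{405M^2,\,810L^2G^2B^2\}}{\mu\epsilon_{s-1}}$, each of the four terms on the right-hand side becomes at most $\epsilon_{s-1}/9$, and the absorbed coefficient is at most $1/9$. Rearranging gives $\tfrac{8}{9}\E[P(\xbar_T^{(s)})-P^*] \leq \tfrac{4}{9}\epsilon_{s-1}$, i.e., $\E[P(x_0^{(s+1)})-P^*]=\E[P(\xbar_T^{(s)})-P^*] \leq \epsilon_{s-1}/2 = \epsilon_s$, closing the induction. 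Since the algorithm's update rule $\eta_{\cdot,s+1}=\eta_{\cdot,s}/2$ and $T_{s+1}=2T_s$ exactly match the per-stage requirement at the halved target $\epsilon_s$, this inductive pattern persists. After $S=\lceil\log(\epsilon_0/\epsilon)\rceil$ stages we obtain $\E[P(\xbar_S)-P^*]\leq\epsilon$, and the total iteration count is $\sum_{s=1}^S T_s = T_1(2^S-1) = O(T_1 \cdot \epsilon_0/\epsilon) = O(1/(\mu\epsilon))$, as claimed. The main obstacle is the self-referential bound in the dual term; beyond that the argument is bookkeeping to verify that the constants $405$ and $810$ are exactly what make each of the four contributions land at $\epsilon_{s-1}/9$.
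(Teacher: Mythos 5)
Your proposal is correct and follows essentially the same route as the paper's own proof: the same induction on epochs, the same use of Lemma~\ref{lemma:convergence_RSPDsc_per_stage}, the same bound on the dual term via $y_0^{(s)}=\nabla\phi(\ell(x_0^{(s)}))$, $\hat y_T=\nabla\phi(\ell(\xbar_s))$, the Lipschitz continuity of $\nabla\phi$ and $\ell$, and the triangle inequality through the (unique) minimizer, followed by absorbing the self-referential $\frac{1}{9}\E[P(\xbar_s)-P^*]$ term. Your final bookkeeping (four terms each at most $\epsilon_{s-1}/9$ plus a $1/9$ absorbed coefficient) is arithmetically equivalent to the paper's $\frac{9}{8}$-rescaling and checks out.
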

{\bf Remark.} The equivalent convergence rate of the above result is $O(1/(\mu T))$ given a total number of iterations $T$. This matches the state-of-the-art convergence result for stochastic strongly convex minimization~\citep{hazan-20110-beyond}. Our algorithm can be applied to solving~(\ref{eqn:primal}) for non-decomposable $\phi$. In contrast to the standard stochastic primal-dual subgradient method, the additional computational overhead in RSPD$^\text{sc}$ is introduced by computing the  restarted points $y_0^{s+1} = \mathcal A(x_0^{(s+1)})$. However, such computation only happens for  a logarithmic number of times in the order of $O(\log(1/\epsilon))$. We defer the discussion on the total time complexity of RSPD to the next section for some particular applications.

\begin{proof}

To prove Theorem~\ref{thm:RSPDsc}, we first need Lemma~\ref{lemma:convergence_RSPDsc_per_stage}.
Its proof will be given in Appendix \ref{app:sec:proof:lem:convergence_RSPDsc_per_stage}.

\yancomment{
To prove Theorem~\ref{thm:RSPDsc}, we first need the following lemma, whose proof will be given in Appendix.
\begin{lem}\label{lemma:3}
Let the Lines 4 
and 5
of Algorithm~\ref{alg:RSPDsc} run for $T$ iterations 
by 
fixed step size $\eta_x$ and $\eta_y$. 
Then 
\begin{align}
\label{eq1:convergence_per_stage}
     & \E[\max_{y\in Y} f(\bar x_T, y) - f(x^*, \bar y_T)]\leq   \frac{\E[|| x^* - x_{0} ||^{2}] }{\eta_x   T  }+  \frac{\E[ || \hat y_T - y_{0} ||^{2}] }{\eta_y T  }+ \frac{5\eta_x M^2}{2}    + \frac{5\eta_y B^2}{2}, 
\end{align}
where $\bar x_T = \sum_{t=0}^{T-1} x_{t}/T$,  $\bar y_T = \sum_{t=0}^{T-1} y_{t}/T$,   $\hat y_T = \arg\max_{y\in Y}f(\bar x_T, y)$ and $x^*\in X^*$.
\end{lem}
}

Let $\epsilon_{s} = \frac{\epsilon_{s-1}}{2}$, by the setting of Algorithm~\ref{alg:RSPDsc}, we know $\eta_{x, s+1} = \frac{ 2 \epsilon_{s}}{45 M^2}$, $\eta_{y, s+1} = \frac{ 2 \epsilon_{s}}{45 B^2}$, and $x_{0}^{(s+1)} = \xbar_{s} = \frac{1}{T_{s}} \sum_{t=1}^{T_s} x_{t}^{(s)}$ for $s = 0, 1, \dots$.
We will show $\E[P(x_{0}^{(s+1)})] - P^* \leq \epsilon_{s}$ by induction for $s = 0, 1, \dots$. It is easy to verify
$
\E[P(x_{0}^{(1)})] - P^* \leq \epsilon_{0}
$
for a sufficiently large $\epsilon_{0}$ according to Assumption~\ref{assumption:general}.
Next, we need to show that conditional on $\E[P(x_{0}^{(s)}) ]- P^* \leq \epsilon_{s-1} $, then  we have 
$$
\E[P(x_{0}^{(s+1)})] - P^*\leq \epsilon_{s}.
$$
Consider the update of $s$-th stage.
By Lemma~\ref{lemma:convergence_RSPDsc_per_stage} for the update of $s$-the stage,
we have 
\begin{align*}
 \E[f(\xbar_{s}, \yhat(\xbar_{s})) - f(x^*, \ybar_s)]
\leq \frac{\E[|| x^* - x_{0}^{(s)} ||^{2}] }{\eta_{x,s}  T_s } + \frac{\E[|| \yhat(\xbar_{s}) - y_{0}^{(s)} ||^{2}]}{\eta_{y,s}  T_s  }   + \frac{5  \eta_{x,s} M^{2}  }{ 2 }  + \frac{5  \eta_{y, s} B^{2}}{2}.
\end{align*}
Since $P(\xbar_{s}) =  f(\xbar_{s}, \yhat(\xbar_{s}))$ and $ P(x^*) =  \max_{y \in Y} f(x^*, y) \geq f(x^*, \ybar_s)$, we have
\begin{align}\label{eq1:convergence_induction}
       \E[P(\xbar_{s}) - P^*]
\leq & 
\frac{\E[|| x^* - x_{0}^{(s)} ||^{2}] }{\eta_{x,s}  T_s } + \frac{\E[|| \yhat(\xbar_{s}) - y_{0}^{(s)} ||^{2}]}{\eta_{y,s}  T_s }   + \frac{5  \eta_{x,s} M^{2}  }{ 2 }  + \frac{5  \eta_{y, s} B^{2}}{2}.
\end{align}

For the first term on the RHS of (\ref{eq1:convergence_induction}), by the strong convexity of $P(x)$ and the condition $\E[P(x_{0}^{(s)}) ]- P^* \leq \epsilon_{s-1} $ we have
\begin{align*}
       \E[|| x^* - x_{0}^{(s)} || ^2]
\leq  
     \E\bigg[  \frac{2 }{\mu} ( P(x_{0}^{(s)}) - P^*)\bigg]  \nonumber
\leq \frac{2\epsilon_{s-1} }{\mu}.
\end{align*}
For the second term on the RHS of (\ref{eq1:convergence_induction}), 
\begin{align*}
       || \yhat(\xbar_{s}) - y_{0}^{(s)} ||^{2}   
=    & 
       || \nabla \phi(\ell(\xbar_{s})) - \nabla \phi(\ell(x_{0}^{(s)})) ||^{2}   \\
\leq & 
       L^{2} || \ell(\xbar_{s}) - \ell(x_{0}^{(s)}) ||^{2v}   \\
= & 
       L^{2} || \ell(\xbar_{s}) - \ell(x_{0}^{(s)}) ||^{2}   \\
  = &   L^{2} G^{2} || \xbar_{s} - x_{0}^{(s)} ||^{2},
\end{align*}
where the first equality is due to the set up of the algorithm and Lemma~\ref{lemma:supp:1}, 
the second equality is due to $\phi(\cdot)$ is smooth ($v=1$).
Since $P(x)$ is strongly convex with parameter $\mu>0$, its optimal solution $x_*$ is unique, then we have
\begin{align*}
     \E[  || \yhat(\xbar_{s}) - y_{0}^{(s)} ||^{2}]   \leq &   2L^{2} G^{2}( \E[|| \xbar_{s} - x_* ||^{2}] + \E[|| x_* - x_{0}^{(s)} ||^{2}])\\
  \leq &   \frac{4L^{2} G^{2}}{\mu}( \E[P(\xbar_{s}) - P^* ] + \E[ P(x_{0}^{(s)}) - P^*]) \\
    \leq &   \frac{4L^{2} G^{2}}{\mu}( \E[P(\xbar_{s}) - P^* ] + \epsilon_{s-1}  ]).   
\end{align*}

Then the inequality (\ref{eq1:convergence_induction}) becomes
\begin{align*}
       \E[P(\xbar_{s}) - P^*]
\leq & 
\frac{ 2\epsilon_{s-1} }{\mu \eta_{x,s}  T_s } + \frac{\frac{4L^{2} G^{2}}{\mu}( \E[P(\xbar_{s}) - P^* ] + \epsilon_{s-1} ])}{\eta_{y,s}  T_s  }   + \frac{5  \eta_{x,s} M^{2}  }{ 2 }  + \frac{5  \eta_{y, s} B^{2}}{2}.
\end{align*}

By the setting of $ \eta_{x,s} = \frac{2\epsilon_{s-1}}{45  M^{2}  }$, $\eta_{y,s} = \frac{2\epsilon_{s-1}}{45  B^{2}  }$ and 
$T_s = \frac{\max \{  
405M^2, 
810 L^2G^2B^2
\}}{\mu\epsilon_{s-1}}$,
we know $\frac{\frac{4L^{2} G^{2}}{\mu}}{\eta_{y,s}  T  }  \leq \frac{1}{9}$, then
\begin{align*}
       \E[P(\xbar_{s}) - P^*]
\leq & 
 \frac{9 \epsilon_{s-1}}{4 \eta_{x,s} \mu T_s } + \frac{ \epsilon_{s-1}  }{8}   + \frac{45  \eta_{x,s} M^{2}  }{ 16 }  + \frac{45  \eta_{y, s} B^{2}}{16}  \leq \frac{\epsilon_{s-1}}{2} =  \epsilon_s.
\end{align*}
Therefore, by induction, after running $S = \lceil \log(\frac{\epsilon_{0}}{\epsilon}) \rceil$ stages, we have
$$
\E[P(\xbar_{S}) - P^*] \leq \epsilon_{S} = \epsilon.
$$
The total iteration complexity is $ \sum_{s=1}^{S} T_s = O (\frac{1}{\epsilon})$.
\end{proof}  


\subsection{RSPD Algorithm under the LEB condition}

In the previous subsection, we introduce the RSPD$^\text{sc}$ algorithm for solving problem~(\ref{eqn:main}) when the objective function $P(x)$ is strongly convex and $\phi(\cdot)$ is $L$-smooth. However, these conditions are sometimes too strong for many machine learning problems. In this subsection, we will relax these strong conditions by assuming that $P(x)$ satisfies the LEB condition (\ref{eq:leb_condition}) and $\phi(\cdot)$ has $(L,v)$-\Holder continuous gradient with $v\in[0,1]$. We will develop a different variant of RSPD that  also has high probability convergence guarantee.

Denote by $ \calB_x(x_{0}, R) = \{x\in X: \|x - x_0\|\leq R\}$ a ball centered at $x_0$ with a  radius $R$ intersected with $X$, and similarly by    $ \calB_y(y_{0}, R) = \{y\in Y: \|y - y_0\|\leq R\}$ a ball centered at $y_0$ with a  radius $R$ intersected with $Y$. The second variant of the RSPD algorithm for solving problem~(\ref{eqn:main}) is summarized in Algorithm~\ref{alg:restart_primal_dual_algorithm_sa}, which is similar to the RSPD$^\text{sc}$ algorithm except that the iterates are projected to bounded balls centered at the initial solutions of each epoch. This complication on the updates is introduced for the purpose of high-probability analysis, which also allows us to tackle problems that satisfies the LEB condition with $\theta>1/2$. 
After each epoch, the proposed RSPD algorithm reduces the radius of the Euclidean ball. It is notable that this ball shrinkage technique is not new and has already used in Epoch-SGD method~\citep{hazan-20110-beyond} for high probability bound analysis. We set the same value of initial radius for primal variable $x$ and dual variable $y$ in RSPD algorithm for the convenience of analysis. However, one can use different values but the same order of convergence result will be obtained by changing the analysis slightly. Another feature of RSPD that is different from  RSPD$^\text{sc}$ is that  RSPD uses a constant number of iterations in the inner loop in order to accommodate the local error bound condition. 

\begin{algorithm}[t]
\caption{RSPD($x_{0}, S, T, R_{x, 1}, \epsilon_0$)}
\label{alg:restart_primal_dual_algorithm_sa}
\begin{algorithmic}[1]
\STATE Initialization: 
       $x_{0}^{(1)} = x_0 \in X$, 
       $y_{0}^{(1)} \in \mathcal A(x_{0}^{(1)})$, 
       $R_{x, 1} \geq \frac{c \epsilon_{0}}{\epsilon^{1 - \theta}}$, 
       $R_{y, 1} = L G^v R_{x, 1}^v$,
       $\eta_{x,1} = \frac{\epsilon_{0}}{40  M^{2}  }$, 
       $\eta_{y,1} = \frac{\epsilon_{0}}{40  B^{2} }$.
\FOR{$s = 1, 2, ..., S$}

\FOR{$t = 0, 1, 2, ..., T-1$}
\label{alg:line:pd_sa_inner_loop_start}

  \STATE Compute $\mathcal G_{x, t} =  \nabla_{x} f (x_{t}^{(s)}, y_{t}^{(s)}; \xi_{t}^{s})$ and $\mathcal G_y = \nabla_{y, t} f (x_{t}^{(s)}, y_{t}^{(s)}; \xi_{t}^{s})$

  \STATE $x_{t+1}^{(s)} = \Pi_{\calB_x(x_{0}^{(s)}, R_{x, s})} (x_{t}^{(s)} - \eta_{x,s} \mathcal G_{x, t} )$
  
  \STATE $y_{t+1}^{(s)} = \Pi_{\calB_y(y_{0}^{(s)}, R_{y, s})} (y_{t}^{(s)} + \eta_{y,s} \mathcal G_{x, t} )$
  \label{alg2:line:pd_sa_update_y}
  
\ENDFOR
\label{alg:line:pd_sa_inner_loop_end}
\STATE $x_{0}^{(s+1)} = \xbar_{s} = \frac{1}{T} \sum_{t=0}^{T-1} x_{t}^{(s)}$
\STATE $y_{0}^{(s+1)} =\mathcal A(x_{0}^{(s+1)})$  
  \label{alg2:line:pd_sa_update_x}

\STATE $R_{x, s+1} = \frac{R_{x, s}}{2}$, 
       $R_{y, s+1} = \frac{R_{y, s}}{2^v}$,
       $\eta_{x,s+1} = \frac{\eta_{x,s} }{2}$ and  
       $\eta_{y,s+1} = \frac{\eta_{y,s} }{2}$ 
\label{alg:line:pd_update_R}
\ENDFOR
\STATE Return $\xbar_{S}.$
\end{algorithmic}
\end{algorithm}

We summarize the theoretical result of Algorithm~\ref{alg:restart_primal_dual_algorithm_sa} with a high probability bound in the following theorem.
\begin{thm}\label{theorem:convergence_rspd}
Suppose that Assumption~\ref{assumption:general} holds and $P(x)$ obeys the LEB condition (\ref{eq:leb_condition}).
Given $\delta \in (0, 1)$, 
let 
$S =  \lceil \log(\frac{\epsilon_{0}}{\epsilon})  \rceil$, 
$\tildedelta = \delta / S$,
$R_{1} = O ( \frac{c\epsilon_{0}}{\epsilon^{1 - \theta}} )$ 
and 
\begin{align*}
T
\geq & 
\max \bigg\{ 
\frac{ 320 M^2 R_{x, s}^2 }{ \epsilon_{s-1}^2 } , 
\frac{ 320 B^2 L^2 G^{2v} R_{x, s}^{2v} }{ \epsilon_{s-1}^2 } , 
\frac{ 8192 \log(\frac{1}{\tildedelta}) M^2 R_{x, s}^2 }{ \epsilon_{s-1}^2 } ,
\frac{ 8192 \log(\frac{1}{\tildedelta}) B^2 L^2 G^{2v} R_{x, s}^{2v} }{ \epsilon_{s-1}^2 } \bigg\}   .
\end{align*}
Algorithm~\ref{alg:restart_primal_dual_algorithm_sa} guarantees that $P(\xbar_{S}) - \Pstar \leq 2 \epsilon$ with at least probability $1 - \delta$. 
The total number of iterations is 
$\widetilde O( \frac{1}{\epsilon^{2(1 - v\theta)}})$, 
where $\widetilde O$ suppresses a logarithmic factor.  
\end{thm}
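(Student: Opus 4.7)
The plan is to prove Theorem~\ref{theorem:convergence_rspd} by induction over epochs, paralleling the structure of the proof of Theorem~\ref{thm:RSPDsc} but (i) replacing strong convexity of $P$ with the LEB condition, (ii) replacing $L$-smoothness of $\phi$ with $(L,v)$-H\"older continuity of $\nabla\phi$, and (iii) upgrading expectation bounds to high-probability bounds via a martingale concentration argument. The ball projections onto $\calB_x(x_0^{(s)},R_{x,s})$ and $\calB_y(y_0^{(s)},R_{y,s})$ are the enabling device for the concentration argument since they provide deterministic bounds on $\|x_t^{(s)}-x_0^{(s)}\|$ and $\|y_t^{(s)}-y_0^{(s)}\|$.

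First, I would prove a per-stage high-probability analog of Lemma~\ref{lemma:convergence_RSPDsc_per_stage}. Starting from the inequalities~(\ref{eqn:bprimal})–(\ref{eqn:bdual}) applied to the projected updates, I would split each subgradient into its true-gradient part plus a martingale-difference part $\zeta_{x,t}=\nabla_x f(x_t,y_t;\xi_t)-\nabla_x f(x_t,y_t)$ (similarly for $y$). Because the iterates are confined to balls of radius $R_{x,s}, R_{y,s}$ around the epoch anchor, each term $\zeta_{x,t}^\top (x_t-x^*)$ is bounded by $2MR_{x,s}$ whenever $x^*\in\calB_x(x_0^{(s)},R_{x,s})$, and an Azuma-Hoeffding bound gives $\sum_t\zeta_{x,t}^\top(x_t-x^*)=O(MR_{x,s}\sqrt{T\log(1/\tildedelta)})$ with probability $1-\tildedelta$, and analogously on the dual side for every fixed $y\in\calB_y(y_0^{(s)},R_{y,s})$. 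Combining with the standard telescoping gives, with probability at least $1-2\tildedelta$, a bound of the form
\begin{align*}
\max_{y\in\calB_y(y_0^{(s)},R_{y,s})} f(\xbar_s,y)-f(x^*,\ybar_s)\le \tfrac{R_{x,s}^2}{\eta_{x,s}T}+\tfrac{R_{y,s}^2}{\eta_{y,s}T}+\tfrac{5\eta_{x,s}M^2}{2}+\tfrac{5\eta_{y,s}B^2}{2}+O\!\Big(\tfrac{MR_{x,s}+BR_{y,s}}{\sqrt{T/\log(1/\tildedelta)}}\Big).
\end{align*}

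Next, the inductive step. Suppose at epoch $s$ we have $P(x_0^{(s)})-P^*\le\epsilon_{s-1}$ with high probability. By the LEB condition, $\|x_0^{(s)}-x^*\|\le c\,\epsilon_{s-1}^\theta$, and by the choice $R_{x,s}\ge c\,\epsilon_{s-1}^\theta$ (which follows from the halving rule $R_{x,s}=R_{x,1}/2^{s-1}$ together with $\epsilon_{s-1}=\epsilon_0/2^{s-1}$ and $R_{x,1}\ge c\epsilon_0/\epsilon^{1-\theta}$), we get $x^*\in\calB_x(x_0^{(s)},R_{x,s})$. On the dual side I would use the identity $\yhat(\xbar_s)=\nabla\phi(\ell(\xbar_s))$ from Lemma~\ref{lemma:supp:1}, then the $(L,v)$-H\"older continuity of $\nabla\phi$ plus $G$-Lipschitz continuity of $\ell$, to obtain
\begin{align*}
\|\yhat(\xbar_s)-y_0^{(s)}\|=\|\nabla\phi(\ell(\xbar_s))-\nabla\phi(\ell(x_0^{(s)}))\|\le LG^v\|\xbar_s-x_0^{(s)}\|^v\le LG^v R_{x,s}^v=R_{y,s},
\end{align*}
so $\yhat(\xbar_s)\in\calB_y(y_0^{(s)},R_{y,s})$, which is exactly what is needed to convert the per-stage inequality into a bound on $P(\xbar_s)-P^*=f(\xbar_s,\yhat(\xbar_s))-f(x^*,\ystar)\le f(\xbar_s,\yhat(\xbar_s))-f(x^*,\ybar_s)$. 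Plugging in $\eta_{x,s}=\epsilon_{s-1}/(40M^2)$, $\eta_{y,s}=\epsilon_{s-1}/(40B^2)$, $R_{y,s}=LG^v R_{x,s}^v$, and the stated lower bound on $T$ (which is calibrated so that each of the four terms in the per-stage bound is $\le\epsilon_{s-1}/8$, the first two handling the deterministic part and the last two from concentration, justifying the $\log(1/\tildedelta)$ factor), yields $P(\xbar_s)-P^*\le\epsilon_{s-1}/2=\epsilon_s$ with probability $1-\tildedelta$ conditional on the induction hypothesis.

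Finally, a union bound over the $S=\lceil\log(\epsilon_0/\epsilon)\rceil$ epochs with per-epoch failure probability $\tildedelta=\delta/S$ gives total failure probability $\le\delta$, hence $P(\xbar_S)-P^*\le\epsilon_S\le 2\epsilon$ with probability at least $1-\delta$. To count iterations, since $R_{x,s}^2\propto (c\epsilon_{s-1}^\theta)^2$ (up to the initial $R_{x,1}$) and $T\propto R_{x,s}^2/\epsilon_{s-1}^2\propto\epsilon_{s-1}^{2\theta-2}$ for the $M$-term and $R_{x,s}^{2v}/\epsilon_{s-1}^2\propto\epsilon_{s-1}^{2v\theta-2}$ for the $B$-term (the dominant one when $v\theta<1$), summing the geometric series $\sum_s\epsilon_{s-1}^{2v\theta-2}$ dominated by the last epoch with $\epsilon_{S-1}\asymp\epsilon$ gives a total iteration count of $\widetilde O(1/\epsilon^{2(1-v\theta)})$. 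The main obstacle is the delicate interleaving of the concentration argument with the H\"older/LEB based certificate that $\yhat(\xbar_s)$ and $x^*$ lie in the shrinking balls, since the high-probability bound from epoch $s$ is what licenses the LEB bound feeding into epoch $s+1$'s ball radius, so the induction must be stated and carried out in terms of an event that holds with probability $1-s\tildedelta$ at the start of epoch $s+1$.
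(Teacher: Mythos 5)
Your overall architecture---a per-stage high-probability lemma, induction over epochs with halving radii and step sizes, a union bound over the $S$ stages, and a geometric sum for the iteration count---matches the paper's, and your H\"older-continuity certificate that $\hat y(\xbar_s)=\nabla\phi(\ell(\xbar_s))$ stays within $R_{y,s}=LG^vR_{x,s}^v$ of $y_0^{(s)}$ is exactly the paper's argument. However, your concentration step does not actually deliver the per-stage lemma you state. You bound the dual-side martingale sum ``for every fixed $y\in\calB_y(y_0^{(s)},R_{y,s})$,'' but the left-hand side of your per-stage inequality carries $\max_{y\in\calB_y(y_0^{(s)},R_{y,s})}$, and the maximizer $\hat y_T$ depends on $\xi_1,\dots,\xi_T$; a pointwise-in-$y$ Azuma bound cannot be upgraded to a bound at the data-dependent $\hat y_T$ without a covering argument, which reintroduces the dimension dependence the paper is explicitly designed to avoid (this is the central obstacle laid out in Section 4.1). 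The paper's Lemma~\ref{lemma:convergence_rspd_per_stage} circumvents this with the auxiliary ``virtual'' sequences $\tilde x_t,\tilde y_t$ of (\ref{eq:two_update_sequences}): the martingale difference becomes $V_t=(\tilde x_t-x_t)^\top(\partial_x^t-\Delta_x^t)+(\tilde y_t-y_t)^\top(\partial_y^t-\Delta_y^t)$, which does not involve the comparator $(x,y)$ at all and is deterministically bounded by $4MR_x+4BR_y$ thanks to the ball projections; the comparator then enters only through the deterministic telescoped distances $\|x-x_0\|^2$ and $\|y-y_0\|^2$. You need this (or an equivalent decoupling) for the max over $y$ to survive the high-probability statement.

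The second gap is your use of the LEB condition directly at $x_0^{(s)}$ to conclude $\|x_0^{(s)}-x^*\|\le c\,\epsilon_{s-1}^\theta$. The condition (\ref{eq:leb_condition}) is assumed only on the $\epsilon$-sublevel set $\calS_\epsilon$, whereas $x_0^{(s)}$ satisfies only $P(x_0^{(s)})-\Pstar\le\epsilon_{s-1}+\epsilon\gg\epsilon$ in early epochs, so the condition is not licensed there. The paper instead takes the comparator $x_{0,\epsilon}^{(s),\dagger}=\arg\min_{x\in\calS_\epsilon}\|x-x_0^{(s)}\|$, bounds $\|x_{0,\epsilon}^{(s),\dagger}-x_0^{(s)}\|\le \frac{\dist(x_{0,\epsilon}^{(s),\dagger},X^*)}{\epsilon}\big(P(x_0^{(s)})-P(x_{0,\epsilon}^{(s),\dagger})\big)\le \frac{c\,\epsilon_{s-1}}{\epsilon^{1-\theta}}\le R_{x,s}$ using Lemma 4 of \citep{yang2018rsg} together with LEB applied to a point that genuinely lies in $\calS_\epsilon$, and runs the induction on $P(x_0^{(s)})-\Pstar\le\epsilon_{s-1}+\epsilon$. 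This detour is precisely the source of the factor $2$ in the final guarantee $P(\xbar_S)-\Pstar\le 2\epsilon$, which your argument would not otherwise produce; as written, your induction silently assumes an error bound on the much larger sets $\calS_{\epsilon_{s-1}}$, which is a strictly stronger hypothesis than the theorem's.
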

{\bf Remark.} 
When $v\theta>0$, RSPD enjoys the improved iteration complexity than $O(1/\sqrt{T})$. When $v=1$ (i.e., $\phi(\cdot)$ is smooth), if $\theta = \frac{1}{2}$ (e.g., $P(x)$ is (weakly) strongly convex), then RSPD enjoys the iteration complexity of $ O(\log(1/\epsilon)/\epsilon)$, which is only worse by a logarithmic factor than the expectional convergence result in Theorem~\ref{thm:RSPDsc} for strongly convex $P$. When $v=0$ or $\theta=0$ (i.e., $\phi$ is non-differentiable with no H\"{o}lder continuous gradient or $P$ does not obey the error bound condition), the convergence rate reduces to the standard $\widetilde O(1/\sqrt{T})$. 

\begin{proof} 

To prove Theorem~\ref{theorem:convergence_rspd}, we first present the following two lemmes.
The first one presents Azuma's inequality which handles martingale difference sequence.
The second one analyzes the behaviour of the update within a stage of Algorithm~\ref{alg:restart_primal_dual_algorithm_sa}.
Proof of Lemma \ref{lemma:convergence_rspd_per_stage} is in Appendix \ref{app:sec:proof:lem:convergence_rspd_per_stage}.

\begin{lem}\label{lemma:azuma}
(Azuma's inequality)
Let $X_{1}, ..., X_{T}$ be the martingale difference sequence.
Suppose that $| X_{t} | \leq b$. 
Then for $\delta > 0$ we have
$$
\Pr \bigg( \sum_{t=1}^{T} X_{t} \geq b \sqrt{2 T \log \frac{1}{\delta}} \bigg) \leq \delta  .
$$
\end{lem}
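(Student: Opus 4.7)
The plan is to prove Azuma's inequality by the standard exponential moment (Chernoff) method applied to the partial sums of the martingale difference sequence. Specifically, I would let $\mathcal{F}_{t}$ denote the natural filtration so that $\E[X_{t} \mid \mathcal{F}_{t-1}] = 0$ and $|X_{t}| \leq b$ almost surely, and then for any $\lambda > 0$ apply Markov's inequality:
\begin{align*}
\Pr\bigg(\sum_{t=1}^{T} X_{t} \geq a\bigg)
= \Pr\bigg( e^{\lambda \sum_{t=1}^{T} X_{t}} \geq e^{\lambda a} \bigg)
\leq e^{-\lambda a}\, \E\bigg[ e^{\lambda \sum_{t=1}^{T} X_{t}} \bigg].
\end{align*}
The task then reduces to bounding the moment generating function of the sum, which I would do by iterated conditioning using the tower property: $\E[e^{\lambda \sum_{t=1}^{T} X_{t}}] = \E[e^{\lambda \sum_{t=1}^{T-1} X_{t}} \cdot \E[e^{\lambda X_{T}} \mid \mathcal{F}_{T-1}]]$, and so on down to $t=1$.

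The key ingredient is Hoeffding's lemma applied conditionally: since $X_{t}$ has conditional mean zero and lies in $[-b, b]$, convexity of $e^{\lambda x}$ together with a Taylor expansion of the cumulant generating function yields $\E[e^{\lambda X_{t}} \mid \mathcal{F}_{t-1}] \leq e^{\lambda^{2} b^{2}/2}$ almost surely. I would present this as the one nontrivial analytic step; the rest is bookkeeping. Iterating across the $T$ steps then gives $\E[e^{\lambda \sum_{t} X_{t}}] \leq e^{\lambda^{2} T b^{2}/2}$, so the Chernoff bound becomes
\begin{align*}
\Pr\bigg(\sum_{t=1}^{T} X_{t} \geq a\bigg) \leq \exp\bigg(-\lambda a + \frac{\lambda^{2} T b^{2}}{2}\bigg).
\end{align*}

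Finally, I would optimize the right-hand side over $\lambda > 0$, obtaining the minimizer $\lambda^{\star} = a/(T b^{2})$ and the resulting sub-Gaussian tail bound $\Pr(\sum_{t} X_{t} \geq a) \leq \exp(-a^{2}/(2 T b^{2}))$. Setting the exponent equal to $-\log(1/\delta)$, i.e. choosing $a = b\sqrt{2 T \log(1/\delta)}$, then yields the stated conclusion that $\Pr(\sum_{t=1}^{T} X_{t} \geq b\sqrt{2 T \log(1/\delta)}) \leq \delta$. The only real obstacle is justifying the conditional sub-Gaussian bound via Hoeffding's lemma; everything else is a direct application of Markov's inequality and the martingale structure, so the proof is short once that piece is in place.
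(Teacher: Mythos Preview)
Your proposal is correct and follows the standard Chernoff/exponential-moment argument for Azuma's inequality, including the conditional application of Hoeffding's lemma and the optimization over $\lambda$. The paper itself does not supply a proof of this lemma: it is stated as a classical result and invoked directly, so there is no alternative argument in the paper to compare against.
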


\begin{lem}\label{lemma:convergence_rspd_per_stage}

Let the Lines 4, 5, and 6 of Algorithm~\ref{alg:restart_primal_dual_algorithm_sa} run for $T$ iterations
by fixed step size $\eta_x$ and $\eta_y$ starting from $x_0$ and $y_0$. 
Then with the probability at least $1 - \tildedelta$ where $\tildedelta \in (0, 1)$, we have
\begin{align}
\label{eq1:convergence_per_stage}
     \max_{y \in Y \cap \calB(y_0, R_y)} f(\bar x_T, y) - f(x, \bar y_T)
     \leq &
     \frac{|| x - x_{0} ||^{2} }{\eta_x   T  }
     +  \frac{|| \hat y_T - y_{0} ||^{2} }{\eta_y T  }
     + \frac{5\eta_x M^2}{2}    
     + \frac{5\eta_y B^2}{2} 
     \nonumber\\ &
     + \frac{ 4 ( M R_x + B R_y ) \sqrt{2 \log\frac{1}{\tildedelta}} }{ \sqrt{T} } ,
\end{align}
where $\bar x_T = \sum_{t=0}^{T-1} x_{t}/T$,  
$\bar y_T = \sum_{t=0}^{T-1} y_{t}/T$,   
$\hat y_T = \arg\max_{y \in Y \cap \calB(y_0, R_y)} f(\bar x_T, y)$ and any fixed $x \in X \cap \calB(x_0, R_x)$.
\end{lem}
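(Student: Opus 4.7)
The plan is to follow the standard primal-dual sub-gradient template while adapting it both to the ball-constrained projections and to the high-probability (rather than expectation) setting. First I would apply the non-expansiveness of the projection onto $\calB(x_0,R_x)$ to the primal update: for any fixed $x\in X\cap\calB(x_0,R_x)$, expanding $\|x_{t+1}-x\|^2$ and using $\|\nabla_x f(x_t,y_t;\xi_t)\|\leq M$ yields the one-step inequality
\begin{align*}
\nabla_x f(x_t,y_t;\xi_t)^\top(x_t-x)\leq \frac{\|x_t-x\|^2-\|x_{t+1}-x\|^2}{2\eta_x}+\frac{\eta_x M^2}{2},
\end{align*}
together with the analogous dual inequality for any fixed $y\in Y\cap\calB(y_0,R_y)$. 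Summing over $t$ and telescoping produces the customary $\|x-x_0\|^2/(2\eta_x T)+O(\eta_x M^2)$ bound on the averaged stochastic inner products.

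Next I would split each stochastic gradient into its true (sub)gradient plus a mean-zero noise term $\xi^x_t,\xi^y_t$ and invoke the convexity of $f(\cdot,y_t)$ and concavity of $f(x_t,\cdot)$ to replace the true-gradient inner products with the function-value differences $f(x_t,y_t)-f(x,y_t)$ and $f(x_t,y)-f(x_t,y_t)$. Adding the primal and dual bounds, dividing by $T$, and applying Jensen's inequality then gives, for any fixed pair $(x,y)$,
\begin{align*}
f(\bar x_T,y)-f(x,\bar y_T)\leq \frac{\|x-x_0\|^2}{\eta_x T}+\frac{\|y-y_0\|^2}{\eta_y T}+\frac{5\eta_x M^2}{2}+\frac{5\eta_y B^2}{2}-\frac{1}{T}\sum_{t=0}^{T-1}\bigl[\xi^{x\top}_t(x_t-x)+\xi^{y\top}_t(y-y_t)\bigr],
\end{align*}
where the factor $5/2$ absorbs auxiliary step-size contributions that arise when the cross-terms are re-grouped to expose the martingale remainders.

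To convert this into the claimed high-probability statement and to accommodate the maximum over $y$ on the left-hand side, I would control the two martingale remainders using Lemma~\ref{lemma:azuma}. For the primal remainder with the fixed $x$, $\sum_t\xi^{x\top}_t(x_t-x)$ is a scalar martingale whose increments are bounded in magnitude by $\|\xi^x_t\|\,\|x_t-x\|\leq 2M\cdot 2R_x=4MR_x$, because $x_t,x\in\calB(x_0,R_x)$; Azuma directly gives the $O(MR_x\sqrt{T\log(1/\tildedelta)})$ tail bound. For the dual remainder evaluated at the random maximizer $\hat y_T$, I would use the decomposition
\begin{align*}
\sum_{t=0}^{T-1}\xi^{y\top}_t(\hat y_T-y_t)=(\hat y_T-y_0)^\top\sum_{t=0}^{T-1}\xi^y_t-\sum_{t=0}^{T-1}\xi^{y\top}_t(y_t-y_0),
\end{align*}
in which the second sum is a bona-fide scalar martingale with increments bounded by $2B\cdot R_y$ and is handled by Azuma, while the first term is controlled by Cauchy--Schwarz combined with a scalar concentration bound of the same $O(BR_y\sqrt{T\log(1/\tildedelta)})$ order on $R_y\|\sum_t\xi^y_t\|$.

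The hard part will be precisely this last step: since $\hat y_T$ depends on the full random trajectory, Azuma cannot be invoked by inserting $\hat y_T$ as a ``fixed'' argument, and a naive covering-number argument on $Y\cap\calB(y_0,R_y)$ would introduce an undesirable dimension dependence in $y$. The decomposition above is the key device, because it separates the trajectory-dependent piece (a clean scalar martingale) from the data-dependence of $\hat y_T$ itself, which is then dispatched by a single Cauchy--Schwarz estimate relying only on $\|\hat y_T-y_0\|\leq R_y$. Once both martingale contributions are bounded, taking the max over $y\in Y\cap\calB(y_0,R_y)$ on the left-hand side and recognizing $\|y-y_0\|^2=\|\hat y_T-y_0\|^2$ on the right-hand side delivers the stated inequality, with the $\tildedelta$-budget split equally between the two Azuma applications.
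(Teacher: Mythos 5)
Your overall architecture is sound and it genuinely differs from the paper's proof at the one step that matters. The paper never lets the comparison point enter the noise term at all: it introduces a second, ``virtual'' pair of projected sequences $\xtilde_t,\ytilde_t$ driven by the noise $\partial^t-\Delta^t$, telescopes both sequences (which is where the $\|x-x_0\|^2/(\eta_x T)$ rather than $/(2\eta_x T)$ and the constant $5/2$ come from), and is left with the error term $\sum_t\big[(x_t-\xtilde_t)^\top(\partial_x^t-\Delta_x^t)+(y_t-\ytilde_t)^\top(\partial_y^t-\Delta_y^t)\big]$, a bona fide \emph{scalar} martingale with increments bounded by $4MR_x+4BR_y$ because all four iterates are forced into the balls; a single application of Lemma~\ref{lemma:azuma} then finishes the proof, uniformly in $y$. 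You instead keep the comparison point inside the noise term and peel off its dependence via $\sum_t\xi_t^{y\top}(\hat y_T-y_t)=(\hat y_T-y_0)^\top\sum_t\xi_t^y-\sum_t\xi_t^{y\top}(y_t-y_0)$. The second piece is indeed a scalar martingale with increments bounded by $2BR_y$ and Azuma applies.

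The gap is in the first piece. After Cauchy--Schwarz you must bound $\|\sum_{t}\xi_t^y\|$, which is the norm of a \emph{vector-valued} martingale sum in $\R^n$; this is not a scalar martingale (the process $\|\sum_{s\le t}\xi_s^y\|$ is a submartingale with increments that are not mean-zero), so Lemma~\ref{lemma:azuma} as stated does not apply to it, and a coordinatewise union bound reintroduces exactly the dimension dependence in $y$ that this lemma is designed to avoid. What you need is a dimension-free Hilbert-space martingale concentration inequality (e.g.\ Pinelis's inequality for $2$-smooth spaces), which does give $\|\sum_t\xi_t^y\|\le O(B\sqrt{T\log(1/\tildedelta)})$ and would close the argument at the stated rate --- but that is a strictly stronger tool than the one you invoke, and calling it ``a scalar concentration bound'' is where the proof as written would fail. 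Either import such a vector-valued inequality explicitly, or adopt the paper's virtual-sequence device, which keeps every random quantity scalar and lets the one stated Azuma lemma suffice.
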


Now we proceed to proof of Theorem~\ref{theorem:convergence_rspd}.
Let $\epsilon_{s} = \frac{\epsilon_{s-1}}{2}$, by the setting of Algorithm~\ref{alg:restart_primal_dual_algorithm_sa}, we know $R_{s+1} = \frac{R_1}{2^{s}}\geq \frac{c\epsilon_{s}}{\epsilon^{1-\theta}}$, $\eta_{x, s+1} = \frac{\epsilon_{s}}{40M^2}$, $\eta_{y, s+1} = \frac{\epsilon_{s}}{40B^2}$, and $x_{0}^{(s+1)} = \xbar_{s} = \frac{1}{T} \sum_{t=1}^{T} x_{t}^{(s)}$ for $s = 0, 1, \dots$.
We will show $P(x_{0}^{(s+1)}) - P^* \leq \epsilon_{s} + \epsilon$ by induction for $s = 0, 1, \dots$ with a high probability. It is easy to verify
$
P(x_{0}^{(1)}) - P^* \leq \epsilon_{0} + \epsilon
$
for a sufficiently large $\epsilon_{0}$ according to Assumption~\ref{assumption:general}.
Next, we need to show that conditional on $P(x_{0}^{(s)}) - P^* \leq \epsilon_{s-1} + \epsilon$, we have 
$$
P(x_{0}^{(s+1)}) - P_*\leq \epsilon_{s} + \epsilon
$$
with a high probability.

Consider the update of the $s$-th stage.
Define $\yhat(\xbar_{s}) = \arg\max_{y \in Y } f(\xbar_{s}, y)$ and
$x_{0, \epsilon}^{(s), \dagger} = \arg\min_{ x \in \calS_{\epsilon} } \| x - x_0^{(s)} \|$.
We would like to show that both $\| x_{0, \epsilon}^{(s), \dagger} - x_{0}^{(s)} \| \leq R_x$ and $\| \yhat(\xbar_{s}) - y_0^{(s)} \| \leq R_y$ always hold, 
so that we are able to plug $x = x_{0, \epsilon}^{(s), \dagger}$ and $y = \yhat(\xbar_{s})$ into (\ref{eq1:convergence_per_stage}) in Lemma~\ref{lemma:convergence_rspd_per_stage}.
To this end, we have for $x_{0, \epsilon}^{(s), \dagger}$,
\begin{align*}
       || x_{0, \epsilon}^{(s), \dagger} - x_{0}^{(s)} || 
\leq & 
       \frac{\dist( x_{0, \epsilon}^{(s), \dagger}, X^* ) }{\epsilon} ( P(x_{0}^{(s)}) - P(x_{0, \epsilon}^{(s), \dagger}) )  \nonumber\\
\leq &
       \frac{ c ( P( x_{0, \epsilon}^{(s), \dagger} ) - P^* )^{\theta} }{\epsilon}  \epsilon_{s-1}  \nonumber\\
\leq    &
       \frac{c \epsilon_{s-1} }{\epsilon^{1 - \theta}} 
\leq 
       R_{x, s},
\end{align*}
where the first inequality is due to Lemma 4 in~\citep{yang2018rsg},
the second inequality is due to~(\ref{eq:leb_condition}) and
the third inequality is due to $x_{0, \epsilon}^{(s), \dagger} \in \calS_\epsilon$.

For $\yhat(\xbar_{s})$, we have
\begin{align*}
       || \yhat(\xbar_{s}) - y_{0} ||  
=    & 
       || \nabla \phi(\ell(\xbar_{s})) - \nabla \phi(\ell(x_{0})) ||   \\
\leq & 
       L || \ell(\xbar_{s}) - \ell(x_{0}) ||^v   \\
\leq & 
       L G^v || \xbar_{s} - x_{0} ||^v   
\leq 
       L G^v R_{x, s}^v
= 
       R_{y, s},
\end{align*}
where the first equality is due to the set up of the algorithm and Lemma~\ref{lemma:supp:1},
the first inequality is due to the $(L, v)$-\Holder continuous gradients of $\phi$ (Assumption \ref{assumption:general} (3)), 
the second inequality is due to $G$-Lipschitz continuity of $\ell$ (Assumption \ref{assumption:general} (4)), and
the last equality is due to the setting of $R_{y, s} = L G^v R_{x, s}^v$.

By showing that $\| x_{0, \epsilon}^{(s), \dagger} - x_{0}^{(s)} \| \leq R_x$ and $\| \yhat(\xbar_{s}) - y_0^{(s)} \| \leq R_y$, we then plug in $x = x_{0, \epsilon}^{(s), \dagger}$ and $y = \yhat(\xbar_{s})$ into (\ref{eq1:convergence_per_stage}) in Lemma~\ref{lemma:convergence_rspd_per_stage} as follows
\begin{align}\label{eq:convergence_induction}
&
P(\bar x_s) - P(x_{0, \epsilon}^{(s), \dagger})
\leq 
f(\xbar_{s}, \yhat(\xbar_{s})) - f(x_{0, \epsilon}^{(s), \dagger}, \ybar_s) \nonumber\\
\leq & 
\frac{ R_{x, s}^2  }{ \eta_{x,s}  T } 
+ \frac{ R_{y, s}^2 }{ \eta_{y,s}  T  } 
+ \frac{5  \eta_{x,s} M^{2}  }{ 2 } 
+ \frac{5  \eta_{y, s} B^{2}}{ 2 }
+ \frac{ 4 ( M R_{x, s} + B R_{y, s} ) \sqrt{2 \log\frac{1}{\tildedelta}} }{\sqrt{T }}  
\nonumber\\
= &
\underbrace{ \frac{ R_{x, s}^2  }{ \eta_{x,s}  T } }_{(a)}
+ \underbrace{ \frac{ L^2 G^{2v} R_{x, s}^{2v} }{ \eta_{y,s}  T  } }_{(b)}
+ \underbrace{ \frac{ 5 \eta_{x, s} M^{2} }{ 2 } }_{(c)}
+ \underbrace{ \frac{ 5 \eta_{y, s} B^{2} }{ 2 } }_{(d)}
+ \underbrace{ \frac{ 4 M R_{x, s} \sqrt{2 \log\frac{1}{\tildedelta}} }{ \sqrt{ T } } }_{(e)}
+ \underbrace{ \frac{ 4 B L G^v R_{x, s}^v \sqrt{2 \log\frac{1}{\tildedelta}} }{ \sqrt{ T } } }_{(f)}   .
\end{align}
Finally, we would like to show $P(\bar x_s) - P(x_{0, \epsilon}^{(s), \dagger}) \leq \epsilon_{s} = \frac{\epsilon_{s-1}}{2}$ by properly setting the values of $T$, $\eta_{x, s}$, $\eta_{y, s}$, $R_{x, s}$ and $R_{y, s}$.

First, to make $\frac{5 \eta_{x, s} M^{2} }{ 2 } = \frac{\epsilon_{s-1}}{16}$ in term $(c)$ and $\frac{5 \eta_{y, s} B^{2} }{ 2 } = \frac{\epsilon_{s-1}}{16}$ in term $(d)$, we have $\eta_{x, s} = \frac{\epsilon_{s-1}}{40M^2}$ and $\eta_{y, s} = \frac{\epsilon_{s-1}}{40B^2}$, respectively.
Recalling that $\epsilon_{s} = \frac{\epsilon_{s-1}}{2}$, this requires $\eta_{x, s+1} = \frac{ \eta_{x, s} }{2}$ and $\eta_{y, s+1} = \frac{ \eta_{y, s} }{2}$, as in Line 5 and 6 of Algorithm \ref{alg:restart_primal_dual_algorithm_sa}.
Next, we can plug $\eta_{x, s}$ and $\eta_{y, s}$ into term $(a)$ and $(b)$.
By setting $T \geq \max \{ \frac{ 320 M^2 R_{x, s}^2 }{ \epsilon_{s-1}^2 } , \frac{ 320 B^2 L^2 G^{2v} R_{x, s}^{2v} }{ \epsilon_{s-1}^2 } \}$, we have
\begin{align*}
\frac{ R_{x, s}^2  }{ \eta_{x,s}  T } \leq \frac{ \epsilon_{s-1} }{ 8 } , \text{~and~}
\frac{ L^2 G^{2v} R_{x, s}^{2v}  }{ \eta_{y,s}  T } \leq \frac{ \epsilon_{s-1} }{ 8 }    .
\end{align*}
Then, for $(e)$, by setting $T \geq \frac{ 8192 \log(\frac{1}{\tildedelta}) M^2 R_{x, s}^2 }{ \epsilon_{s-1}^2 }$, we have $\frac{ 4 M R_{x, s} \sqrt{2 \log\frac{1}{\tildedelta}} }{ \sqrt{ T } } \leq \frac{\epsilon_{s-1}}{16}$.
Last, for $(f)$, by setting $T \geq \frac{ 8192 \log(\frac{1}{\tildedelta}) B^2 L^2 G^{2v} R_{x, s}^{2v} }{ \epsilon_{s-1}^2 }$, we have $\frac{ 4 B L G^v R_{x, s}^v \sqrt{2 \log\frac{1}{\tildedelta}} }{ \sqrt{ T } } \leq \frac{\epsilon_{s-1}}{16}$.

Therefore, we have
\begin{align*}
P(\xbar_{0}^{s+1}) - P(x_{0, \epsilon}^{(s), \dagger}) = P(\xbar_{s}) - P(x_{0, \epsilon}^{(s), \dagger}) \leq \frac{\epsilon_{s-1}}{2} = \epsilon_s,
 \end{align*}
i.e.,
 \begin{align*}    
P(\xbar_{0}^{s+1}) - P^* \leq \epsilon_s + \epsilon,
  \end{align*}
By induction, after running $S = \lceil \log(\frac{\epsilon_{0}}{\epsilon}) \rceil$ stages, with probability $(1 - \tildedelta)^{S} \geq 1 - S \tildedelta$, we have
$$
P(\xbar_{S}) - P^* \leq \epsilon_{S} + \epsilon \leq 2 \epsilon,
$$
where we set $\tildedelta = \delta/S$.
Considering the requirements from (\ref{eq:convergence_induction}), for $T$, we have
\begin{align*}
T
\geq & 
\max \bigg\{ 
\frac{ 320 M^2 R_{x, s}^2 }{ \epsilon_{s-1}^2 } , 
\frac{ 320 B^2 L^2 G^{2v} R_{x, s}^{2v} }{ \epsilon_{s-1}^2 } , 
\frac{ 8192 \log(\frac{1}{\tildedelta}) M^2 R_{x, s}^2 }{ \epsilon_{s-1}^2 } ,
\frac{ 8192 \log(\frac{1}{\tildedelta}) B^2 L^2 G^{2v} R_{x, s}^{2v} }{ \epsilon_{s-1}^2 } \bigg\}   .
\end{align*}
Recall that $v \in [0,1]$,
$R_{x, 1} \geq \frac{c \epsilon_{0}}{\epsilon^{1-\theta}}$, 
$R_{x, s} = \frac{ R_{x, 1} }{ 2^{s-1} }$ and 
$\epsilon_{ s-1 } = \frac{ \epsilon_0 }{ 2^{s-1} }$.
On one hand, we have 
$$
\frac{R_{x, s}^2}{ \epsilon_{s-1}^2 } 
= \frac{R_{x, 1}^2}{ \epsilon_0^2 }
.
$$
On the other hand, for $s \leq \lfloor \log( \frac{ \epsilon_0 }{ \epsilon } ) \rfloor$, we have
$$
\frac{R_{x, s}^{2v} }{ \epsilon_{s-1}^2 } 
= 
\frac{ R_{x, 1}^{2v} }{ \epsilon_0^2 } \cdot ( 2^{s-1} )^{ 2(1-v) } 
\leq
\frac{ R_{x, 1}^{2v} }{ \epsilon_0^2 } \cdot ( 2^{ \log( \frac{ \epsilon_0 }{ \epsilon } ) } )^{ 2(1-v) } 
\\
=
\frac{ R_{x, 1}^{2v} }{ \epsilon_0^2 } \cdot ( \frac{ \epsilon_0 }{ \epsilon } )^{ 2(1-v) }
=
\frac{ R_{x, 1}^{2v} }{ \epsilon_0^{2v} \epsilon^{ 2 ( 1 - v ) } } 
.
$$
The above terms show that $T$ would not change as $s$ changes.
Provided $R_{x, 1} = O(\frac{c \epsilon_0}{ \epsilon^{ 1 - \theta } })$ and $R_{x, 1} \geq \frac{c \epsilon_0}{ \epsilon^{ 1 - \theta } }$, we have the total number of iterations is at most $
ST = 
O\bigg(\frac{\lceil \log(\frac{\epsilon_{0}}{\epsilon}) \rceil   \lceil \log ({S}/{\delta}) \rceil }{\epsilon^{2(1 - v\theta)}}\bigg) = \widetilde O \bigg(\frac{1}{\epsilon^{2(1 - v\theta)}}\bigg)$.

\end{proof}   

\subsection{Adaptive Variants of RSPD}

When setting the initial value of radius $R_{1}$ (as well as the value of $T$) in Algorithm~\ref{alg:restart_primal_dual_algorithm_sa}, one requires to know $c$,  $\theta$ and $\epsilon$ (setting $R_{1} \geq \frac{c \epsilon_{0}}{\epsilon^{1 - \theta}}$), which may not be feasible in practice. Below, we introduce an adaptive variant of Algorithm~\ref{alg:restart_primal_dual_algorithm_sa} to find the $\epsilon$-optimal solution without knowing either $c$ or $\theta$ and $\epsilon$ to initiate the algorithm under that $v=1$. The developments in this section are mostly direct extension of techniques introduced~\citep{ICMLASSG,yang2018rsg}. 

The idea of tackling unknown $c$ is similar to the grid search: starting from a guess of $c$ for setting $R_1, T$ to run RSPD and then restarting RSPD using a larger $c$ (increased by a constant factor) or equivalently a larger $R_1, T$. However,  in order to not waste the updates for using a smaller $c$ and also remove the dependence on $\epsilon$ for setting $R_1, T$, we equivalently increase $R_1$ and $T$ in a way that depends on $\theta$ such that a similar convergence rate can be still established. The details are presented in Algorithm~\ref{alg:restart_primal_dual_algorithm_sa_adaptive_c}. 
The following theorem gives convergence result of Algorithm~\ref{alg:restart_primal_dual_algorithm_sa_adaptive_c}.
Its proof is in Appendix \ref{app:sec:proof:thm:convergence_rspd_adaptive}.

%
%
%
%

\begin{thm}\label{theorem:convergence_rspd_adaptive_c}
Suppose that Assumption~\ref{assumption:general} holds with $v=1$, and there exists $\hatepsilon_1\in(\epsilon, \epsilon_0/2]$ such that the initial value $R_1^{(1)}$ satisfies $R_1^{(1)} = \frac{c\epsilon_{0}}{\hatepsilon_{1}^{1 - \theta}}$ and the error bound condition holds on $\mathcal S_{\hatepsilon_1}$ with $c>0, \theta\in(0,1)$.  For any $\delta \in (0, 1)$, $\epsilon\leq \epsilon_0/4$,
let $\hat\delta = \frac{\delta}{S(S+1)},  
S =  \lceil \log_2(\frac{\epsilon_0}{\epsilon})\rceil$, 
 $\kappa = 1$, and
$
T_1 
= 
\max \bigg\{ 320 M^2, 320 B^2 L^2 G^2, 8192 \log(\frac{1}{\tildedelta}) M^2, 8192 \log(\frac{1}{\tildedelta}) B^2 L^2 G^2 \bigg\}
\cdot \frac{(R_1^{(1)})^2}{\epsilon_0^2}   .
$
After at most  $K = \lceil \log(\frac{\hat{\epsilon}_{1}}{\epsilon}) \rceil+1$ calls of RSPD,  Algorithm~\ref{alg:restart_primal_dual_algorithm_sa_adaptive_c} guarantees that 
$P(x^{(K)}) - P(\xstar) \leq 2 \epsilon$ 
with probability $1 - \delta$ with an iteration complextiy of  
$\widetilde O(\log(1/\delta)/\epsilon^{2(1-\theta)})$  .
\end{thm}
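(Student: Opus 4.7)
The plan is to prove Theorem~\ref{theorem:convergence_rspd_adaptive_c} by induction on the outer call index $k = 1, \ldots, K$ of Algorithm~\ref{alg:restart_primal_dual_algorithm_sa_adaptive_c}, viewing it as a meta-algorithm that invokes the inner RSPD routine with geometrically growing radius guesses. I would introduce the implicit per-call target $\hatepsilon_k = \hatepsilon_1 / 2^{k-1}$. Since $\hatepsilon_1 \leq \epsilon_0/2$ and $K = \lceil \log(\hatepsilon_1/\epsilon) \rceil + 1$, one checks $\hatepsilon_K \leq \epsilon$, so the inductive claim $P(x^{(k)}) - \Pstar \leq 2\hatepsilon_k$, applied at $k = K$, yields the stated bound $P(x^{(K)}) - \Pstar \leq 2\epsilon$.

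For the base case, the identity $R_1^{(1)} = c\epsilon_0 / \hatepsilon_1^{1-\theta}$ is exactly the radius required by Theorem~\ref{theorem:convergence_rspd} with initial gap $\epsilon_0$ and target accuracy $\hatepsilon_1$. Combined with $v = 1$ and the prescribed $T_1$, Theorem~\ref{theorem:convergence_rspd} directly yields $P(x^{(1)}) - \Pstar \leq 2\hatepsilon_1$ with probability at least $1 - S\hat\delta$. For the inductive step, the algorithm doubles $R_1$ between successive calls (since $\kappa = 1$), so $R_1^{(k+1)} = 2 R_1^{(k)} = 2^{k} R_1^{(1)}$. Using $\theta \in (0,1)$ and $\hatepsilon_{k+1} = \hatepsilon_k/2$, a short calculation gives $R_1^{(k+1)} \geq c \cdot 2 \hatepsilon_k / \hatepsilon_{k+1}^{1-\theta}$, which is precisely the radius that Theorem~\ref{theorem:convergence_rspd} demands when restarted from $x^{(k)}$ (whose gap is at most $2\hatepsilon_k$ by the inductive hypothesis) and targeting $\hatepsilon_{k+1}$. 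The schedule $T^{(k+1)}$ is chosen in the adaptive algorithm to meet the inner-loop budget required by Theorem~\ref{theorem:convergence_rspd} at this gap and radius, so the $(k+1)$-th call also succeeds with probability at least $1 - S\hat\delta$.

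A union bound over the at most $K \leq S+1$ outer calls and the $S$ inner restart stages within each call, with $\hat\delta = \delta/(S(S+1))$, gives joint success probability at least $1 - \delta$. The total iteration count is $\sum_{k=1}^K T^{(k)}$ where $T^{(k)} = \widetilde O(R_1^{(k)\,2}/\hatepsilon_k^2)$ grows as a geometric series in $k$, so the sum is dominated by its last term. Substituting $R_1^{(K)} = 2^{K-1} R_1^{(1)} = O(c \epsilon_0 / \epsilon^{1-\theta})$ and $\hatepsilon_K = \Theta(\epsilon)$ into that last term yields the claimed $\widetilde O(\log(1/\delta)/\epsilon^{2(1-\theta)})$ complexity.

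The main obstacle will be the inductive step, specifically certifying that the algorithm's blind geometric doubling of $R_1^{(k)}$ remains a valid radius for Theorem~\ref{theorem:convergence_rspd} at the new target $\hatepsilon_{k+1}$ when neither $c$ nor $\theta$ is available to the algorithm. This hinges on the monotonicity $2^{k-1} \geq 2^{(k-1)(1-\theta)}$ for $\theta \in [0,1]$, but care is needed to track how the accumulated initial-gap slack $2\hatepsilon_k$ propagates into both the required radius and the per-call iteration budget $T^{(k)}$ without spoiling the stated rate; this is exactly where the slightly conservative choice $\hat\delta = \delta/(S(S+1))$ (rather than $\delta/K$) pays off, by absorbing the interaction between outer calls and inner restarts.
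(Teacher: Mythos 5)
Your overall architecture---induction over the outer calls with the per-call target ${\hat{\epsilon}}_k = {\hat{\epsilon}}_1/2^{k-1}$, invoking Theorem~\ref{theorem:convergence_rspd} for each call from the inductively controlled gap $2{\hat{\epsilon}}_k$, a union bound calibrated by $\hat\delta = \delta/(S(S+1))$, and a geometric sum dominated by its last term---is exactly the paper's argument. However, there is a concrete error in your inductive step: you assert that the algorithm doubles the radius, $R_1^{(k+1)} = 2R_1^{(k)}$, and then certify validity via the inequality $2^{k-1} \geq 2^{(k-1)(1-\theta)}$. The radius update in Algorithm~\ref{alg:restart_primal_dual_algorithm_sa_adaptive_c} is actually $R_1^{(k+1)} = R_1^{(k)}\cdot 2^{1-\theta}$ (with $T_{k+1} = T_k\cdot 2^{2(1-\theta)}$); the parameter $\kappa$ only rescales $\epsilon_0^{(k)}$ and does not touch the radius. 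Theorem~\ref{theorem:convergence_rspd_adaptive_c} is the variant in which $\theta$ \emph{is} available to the algorithm and only $c$ (and effectively $\epsilon$) is unknown; the ``neither $c$ nor $\theta$ available'' scenario you describe is Theorem~\ref{theorem:convergence_rspd_adaptive_theta}, where one sets $\theta=0$, the radius then genuinely doubles, and the rate degrades to $\widetilde O(B_{{\hat{\epsilon}}_1}^2/\epsilon^2)$.

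This misreading is not cosmetic. Under genuine doubling, $R_1^{(K)} = 2^{K-1}R_1^{(1)} = \Theta\big(({\hat{\epsilon}}_1/\epsilon)\cdot c\epsilon_0/{\hat{\epsilon}}_1^{1-\theta}\big) = \Theta\big(c\epsilon_0{\hat{\epsilon}}_1^{\theta}/\epsilon\big)$, which is larger than the $O(c\epsilon_0/\epsilon^{1-\theta})$ you substitute in your final complexity computation by a factor of $({\hat{\epsilon}}_1/\epsilon)^{\theta}$---your last step silently reverts to the correct growth $R_1^{(K)} = R_1^{(1)}2^{(K-1)(1-\theta)}$, so the proposal is internally inconsistent. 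Worse, Theorem~\ref{theorem:convergence_rspd} demands a per-call budget $T \geq \Omega\big((R_1^{(k)})^2/\epsilon_0^2\big)$ up to logarithms; under doubling this requirement quadruples per call while the algorithm supplies only $T_{k+1} = T_k\cdot 2^{2(1-\theta)}$, so for $\theta>0$ the induction hypothesis could not be re-established. The correct step, as in the paper, is to observe that $R_1^{(k+1)} = \frac{c\epsilon_0}{({\hat{\epsilon}}_1/2^{k})^{1-\theta}} \geq \frac{c\cdot 2{\hat{\epsilon}}_k}{{\hat{\epsilon}}_{k+1}^{1-\theta}}$ (using $\epsilon_0 \geq 2{\hat{\epsilon}}_1 \geq 2{\hat{\epsilon}}_k$), which is precisely the radius Theorem~\ref{theorem:convergence_rspd} requires for initial gap $2{\hat{\epsilon}}_k$ and target ${\hat{\epsilon}}_{k+1}$, and that $T_{k+1} = T_k\cdot 2^{2(1-\theta)}$ then matches the required budget exactly because $T_k \propto (R_1^{(k)})^2/\epsilon_0^2$.
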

{\bf Remark:} The requirement on the local error bound condition of the above theorem seems slightly stronger than that holds on $\S_\epsilon$. However, for a convex function it has been shown that a local error bound condition implies an error bound condition on any compact set with the same $\theta$ but possibly different $c$~\citep{arxiv:1510.08234}. The above theorem and   Algorithm~\ref{alg:restart_primal_dual_algorithm_sa_adaptive_c} do not cover the case $\theta=1$. But this can be easily resolved by setting $R_1 = \hat c_1\epsilon_0$ according to an initial guess of $c$, and then increasing $\hat c_1$ or $R_1$ by two times and rerun RSPD. It is easy to see that after $\log(c/\hat c_1)$ times the estimated value of $c$ will become larger than the true $c$ and the convergence theory in previous subsection will apply. As a result the total iteration complexity is only amplified by a factor of $\log(c/\hat c_1)$.

\begin{algorithm}[t]
\caption{Adaptive RSPD (ARSPD)}
\label{alg:restart_primal_dual_algorithm_sa_adaptive_c}
\begin{algorithmic}[1]
\STATE Initialization: $x^{(0)} \in X$, $S$, $T_1$, $\epsilon_{0}$, $R_{1}^{(1)}$ and $\kappa\in(0,1]$.
\STATE Set: $\epsilon^{(1)}_0 = \epsilon_0$, $\eta_{x,1} = \frac{\epsilon_{0}}{40  M^{2}  }$, $\eta_{y,1} = \frac{\epsilon_{0}}{40  B^{2} }$
\FOR{$k = 1, 2, ..., K$}
\STATE $x^{(k)} =$ RSPD($x^{(k-1)}, S, T_k, R^{(k)}_{1}, \epsilon^{(k)}_0$)
~\label{alg3:line:pd_sa_update_x}

\STATE $R_1^{(k+1)} = R_1^{(k)} \cdot 2^{1- \theta}$, $T_{k+1} = T_{k} \cdot 2^{2(1 - \theta)}$ and $\epsilon_0^{(k+1)} = \epsilon_0^{(k)} \cdot \kappa$. \label{alg:line:update_radius_and_t}
\ENDFOR
\STATE Return $x^{(K)}$
\end{algorithmic}
\end{algorithm}
Finally, we can show that even if $\theta$ is unknown, by setting $\theta=0$ in  Algorithm~\ref{alg:restart_primal_dual_algorithm_sa_adaptive_c}, we can still prove an improved convergence. Let $B_{\epsilon} = \max_{v \in \calL_{\epsilon}} \min_{z \in X^*} || v - z ||$  be the maximum distance between the points in the $\epsilon$-level set $\calL_{\epsilon}$ and the optimal set $X^*$.  
Proof of the following theorem is similar to the one of Theorem \ref{theorem:convergence_rspd_adaptive_c} (in Appendix \ref{app:sec:proof:thm:convergence_rspd_adaptive}) with slight modification.


\begin{thm}\label{theorem:convergence_rspd_adaptive_theta}
Suppose that Assumption~\ref{assumption:general} (1$\sim$4) holds with $v=1$, and $R^{(1)}_{1}$  is sufficiently large such  that there exists $\hatepsilon_{1} \in [ \epsilon, \frac{\epsilon_{0}}{2}]$ and  $R^{(1)}_{1}= \frac{ B_{\hatepsilon_{1}} \epsilon_{0}}{\hatepsilon_{1} }$. 
Given $\delta \in (0, 1)$, let $\theta=0$, $\hat\delta = \frac{\delta}{S(S+1)}$, 
$S =  \lceil \log_2(\frac{\epsilon_0}{\epsilon})\rceil$, 
$
T_1 
= 
\max \bigg\{ 320 M^2, 320 B^2 L^2 G^2, 8192 \log(\frac{1}{\tildedelta}) M^2, 8192 \log(\frac{1}{\tildedelta}) B^2 L^2 G^2 \bigg\}
\cdot \frac{(R_1^{(1)})^2}{\epsilon_0^2}   ,
$
and
$\kappa = 1$.
After at most  $K = \lceil \log(\frac{\hat{\epsilon}_{1}}{\epsilon}) \rceil+1$ calls of RSPD, 
Algorithm~\ref{alg:restart_primal_dual_algorithm_sa_adaptive_c} guarantees that 
$P(x^{(K)}) - P(\xstar) \leq 2 \epsilon$ 
with probability $1 - \delta$ with an iteration complexity of 
$\widetilde O(  \log(\frac{1}{\delta}) B_{\hatepsilon_{1}}^{2} /\epsilon^{2}  )$.
\end{thm}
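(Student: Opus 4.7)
The plan is to reduce the theorem to Theorem \ref{theorem:convergence_rspd_adaptive_c} by noting that with $\theta=0$, the LEB condition \eqref{eq:leb_condition} is automatic on every sublevel set, with constant $c = B_\epsilon$. Specifically, the function $d(x) = \dist(x, X^*)$ is convex, and its maximum over the convex set $\calS_\epsilon$ is attained on the boundary $\calL_\epsilon$, which by definition has distance at most $B_\epsilon$ to $X^*$. Hence $\dist(x, X^*) \leq B_\epsilon = B_\epsilon \cdot (P(x) - \Pstar)^0$ for every $x \in \calS_\epsilon$, so $P$ trivially satisfies \eqref{eq:leb_condition} on $\calS_{\hatepsilon_1}$ with $\theta=0$ and effective constant $c = B_{\hatepsilon_1}$. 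A byproduct of the same maximum-of-convex-on-convex argument is that $B_\epsilon$ is non-decreasing in $\epsilon$, since the sublevel sets $\calS_\epsilon$ are nested.

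With this observation in hand, I would repeat the induction in the proof of Theorem \ref{theorem:convergence_rspd_adaptive_c} with the substitutions $\theta \leftarrow 0$, $v \leftarrow 1$, and $c \leftarrow B_{\hatepsilon_k}$ at the $k$-th outer call, where $\hatepsilon_k := \hatepsilon_1 / 2^{k-1}$ is the implicit target accuracy of that call. Plugging $\theta = 0$ into Algorithm \ref{alg:restart_primal_dual_algorithm_sa_adaptive_c} gives the update rules $R_1^{(k+1)} = 2 R_1^{(k)}$ and $T_{k+1} = 4 T_k$. Together with the initial choice $R_1^{(1)} = B_{\hatepsilon_1} \epsilon_0 / \hatepsilon_1$, this yields $R_1^{(k)} = B_{\hatepsilon_1} \epsilon_0 / \hatepsilon_k \geq B_{\hatepsilon_k} \epsilon_0 / \hatepsilon_k$, using the monotonicity of $B_\epsilon$. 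This is exactly the condition on $R_1$ required by Theorem \ref{theorem:convergence_rspd} (with $\theta = 0$, $v = 1$, $c = B_{\hatepsilon_k}$), and the corresponding requirement on $T_k$ is met because $T_k$ scales like $(R_1^{(k)})^2 / \epsilon_0^2$ just as $T_1$ does. Each RSPD call then produces $x^{(k)}$ with $P(x^{(k)}) - \Pstar \leq 2 \hatepsilon_k$ with probability at least $1 - \hat\delta$.

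After $K = \lceil \log(\hatepsilon_1/\epsilon) \rceil + 1$ outer calls, $\hatepsilon_K \leq \epsilon / 2$, and a union bound across the $K \leq S+1$ calls with per-call failure probability $\hat\delta = \delta / (S(S+1))$ yields $P(x^{(K)}) - \Pstar \leq 2\epsilon$ with probability at least $1 - \delta$. The total iteration count is geometric in $T_k$ and dominated by $T_K = T_1 \cdot 4^{K-1}$; since $T_1 = \widetilde O((R_1^{(1)})^2 \log(1/\hat\delta) / \epsilon_0^2)$ and $4^{K-1} \lesssim (\hatepsilon_1 / \epsilon)^2$, one obtains $\sum_{k=1}^K T_k = \widetilde O(B_{\hatepsilon_1}^2 \log(1/\delta) / \epsilon^2)$, as stated.

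The main obstacle will be making the reduction fully rigorous: one must verify that $\dist(\cdot, X^*) \leq B_\epsilon$ on $\calS_\epsilon$ in the generality needed (in particular handling the case where $\calS_\epsilon$ may touch $\partial X$, so that the maximum of the convex function $d(\cdot)$ is indeed captured by $B_\epsilon$ rather than by a boundary contribution from $\partial X$), and then carefully propagate the monotonicity of $B_\epsilon$ through the induction so that the preset radii $R_1^{(k)}$ stay valid as $\hatepsilon_k$ shrinks. The probabilistic bookkeeping mirrors that of Theorem \ref{theorem:convergence_rspd_adaptive_c} and introduces no new difficulty.
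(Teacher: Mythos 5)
Your proposal matches the paper's argument, which simply reruns the proof of Theorem~\ref{theorem:convergence_rspd_adaptive_c} with $\theta=0$ and the LEB constant $c$ replaced by $B_{\hatepsilon_1}$; the radius/step-count bookkeeping, union bound, and geometric summation you describe are exactly the intended ``slight modification.'' The obstacle you flag at the end is actually a non-issue: the bound $\dist(\cdot,X^*)\le B_\epsilon$ is only ever needed at the projection point $x_{0,\epsilon}^{(s),\dagger}=\arg\min_{x\in\calS_\epsilon}\|x-x_0^{(s)}\|$, which lies on the level set $\calL_\epsilon$ whenever $x_0^{(s)}\notin\calS_\epsilon$ (and otherwise the distance term is zero), so no maximum-over-the-sublevel-set argument is required.
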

{\bf Remark:} This iteration complexity is still an improved one compared with that in~\citep{Nemirovski:2009:RSA:1654243.1654247}, reducing the dependence on the size of $X$ and $Y$ to the $B_{\hatepsilon_1}$.

\section{Applications and Experiments}\label{sec:app}

\begin{table}[t]
\centering
\caption{Data statistics.}
\label{tab:data_stats}
\begin{tabular}{c|c|c}
\hline
Datasets     &    \#Examples    &    \#Features   \\\hline
w8a          &     49,749       &        300      \\
rcv1         &     20,242       &      47,236      \\
a9a          &     32,561       &        123      \\
real-sim     &     72,309       &    20,958       \\
covtype      &     581,012      &        54       \\
URL          &    2,396,130     &    3,231,961    \\
\hline
\end{tabular}
\end{table}

In this section, we investigate the effectiveness of our algorithms on two applications, i.e., distributionally robust optimization (DRO) and area under receiver operating characteristic curve (AUC) maximization.
We perform DRO experiments on four benchmark datasets, a9a, real-sim, rcv1 and w8a.
AUC experiments are performed on a9a, real-sim, covtype and URL.
Table~\ref{tab:data_stats} shows the statistics of the used six datasets.

\begin{figure*}[h]
\centering
\hspace{-0.25in}
{\includegraphics[scale=.215]{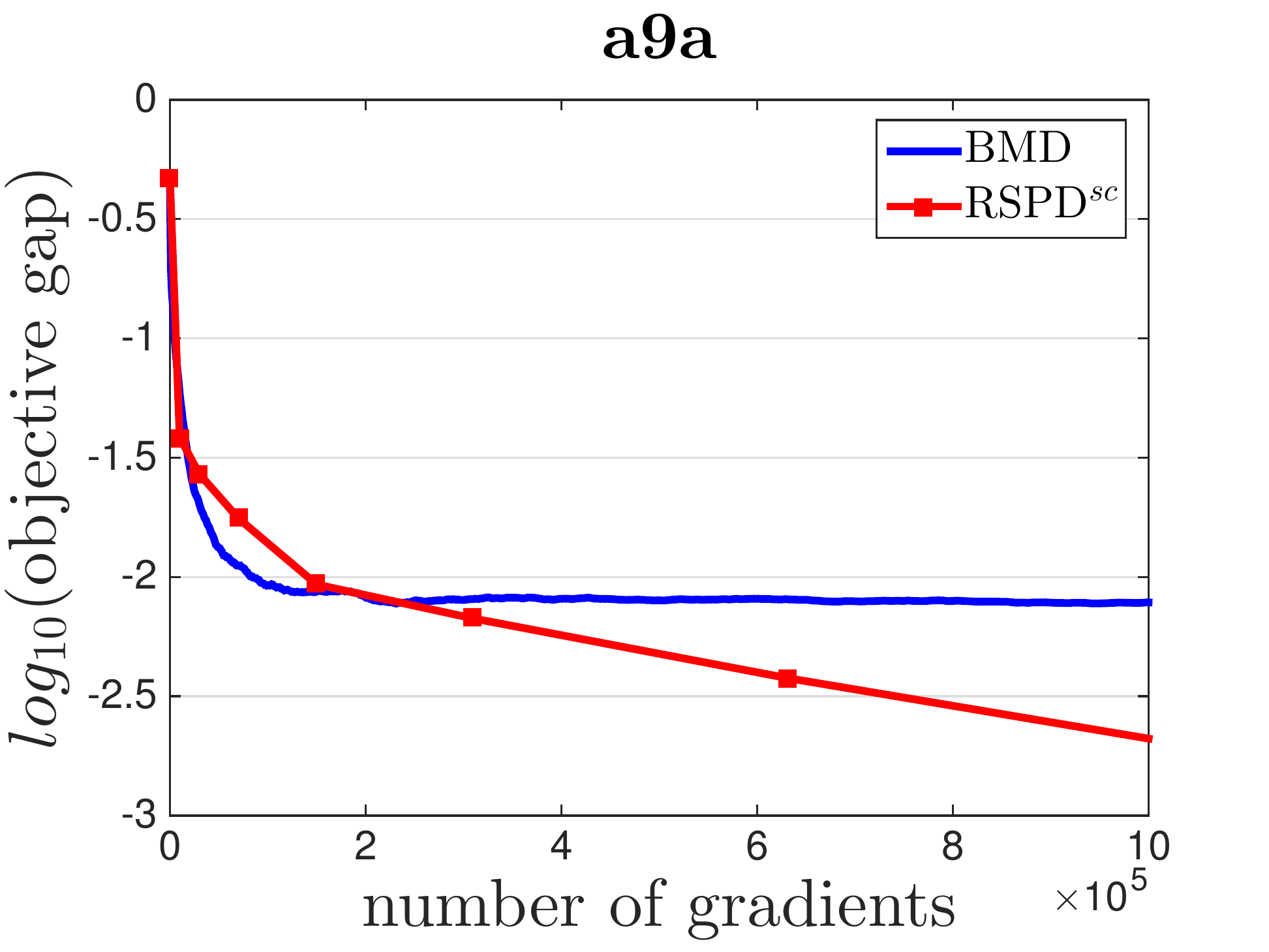}}
\hspace{-0.25in}
{\includegraphics[scale=.215]{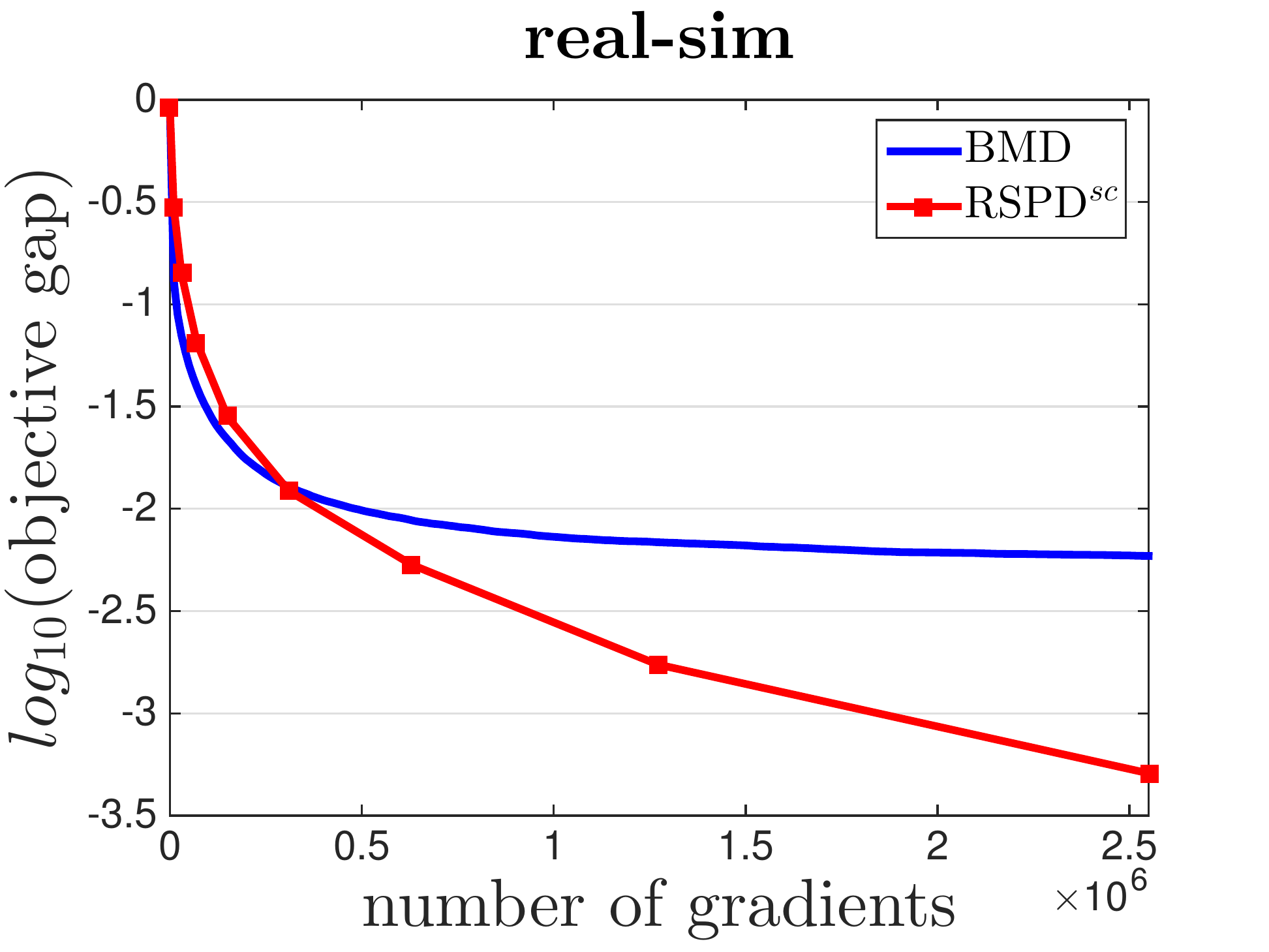}}
\hspace{-0.25in}
{\includegraphics[scale=.215]{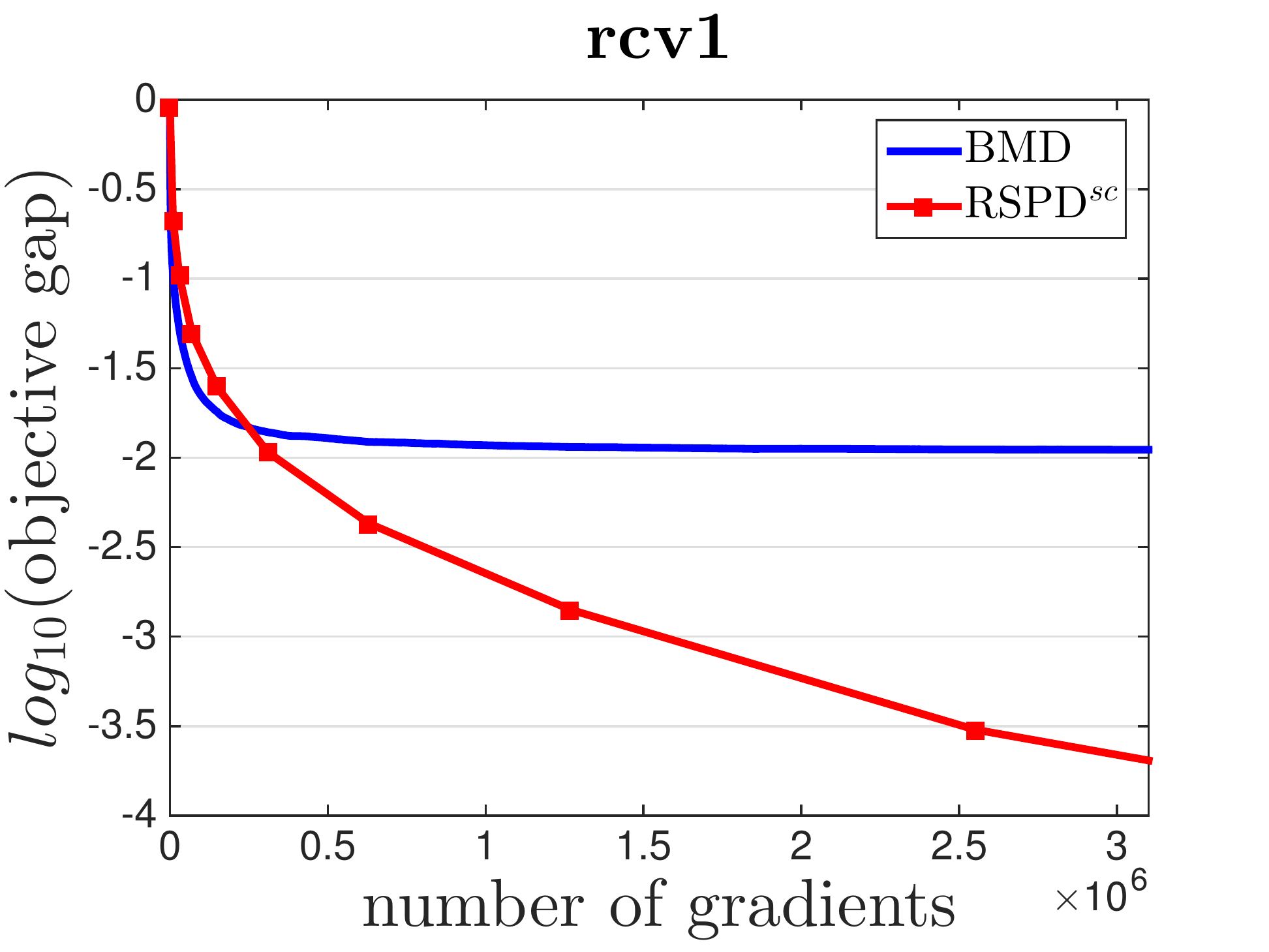}}
\hspace{-0.25in}
{\includegraphics[scale=.215]{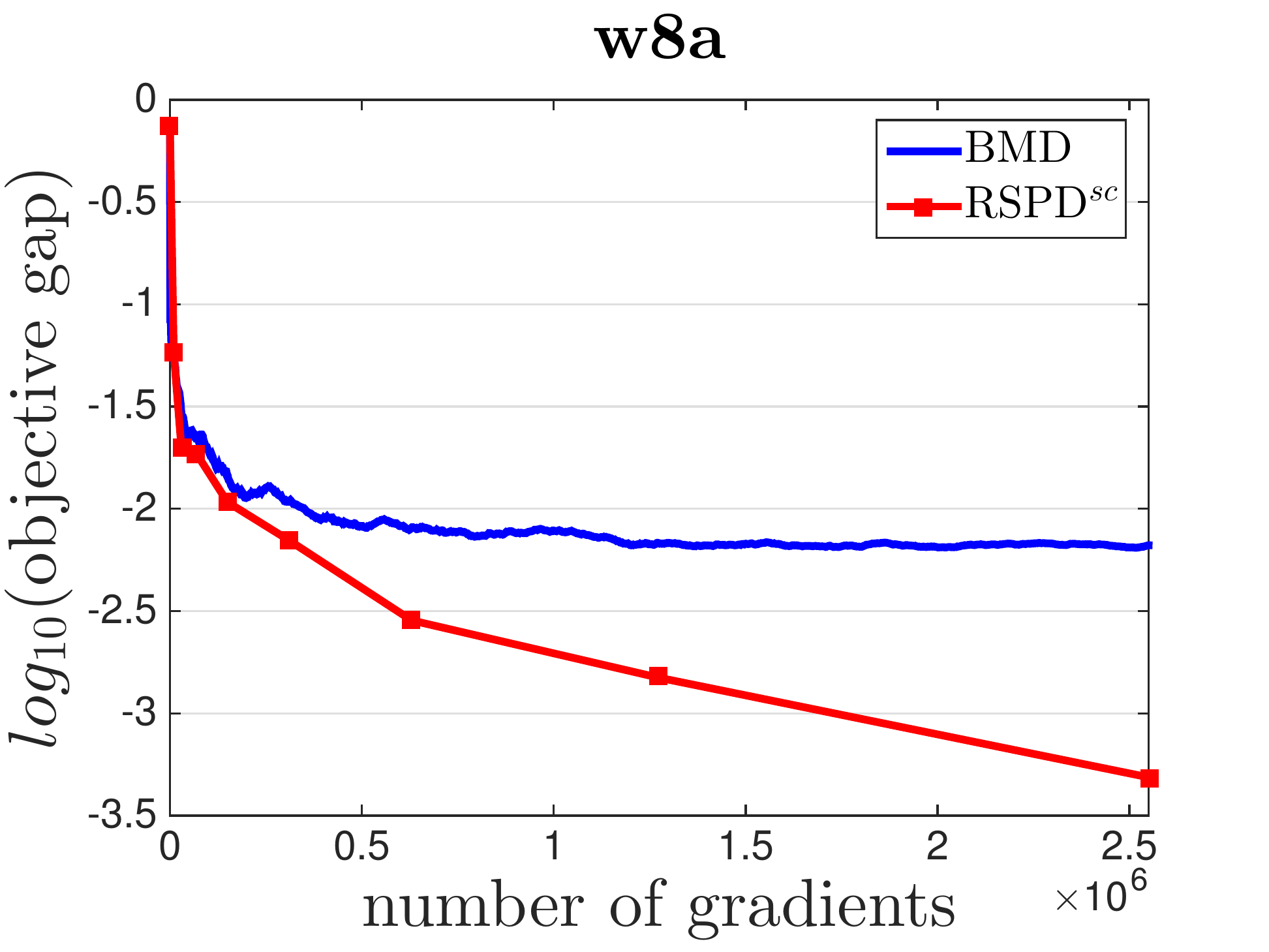}}
\hspace{-0.25in}
\caption{Results for Distributionally Robust Optimization}
\label{fig:dro_n_grad}
\end{figure*}

\begin{figure*}[h]
\centering
\hspace{-0.25in}
{\includegraphics[scale=.215]{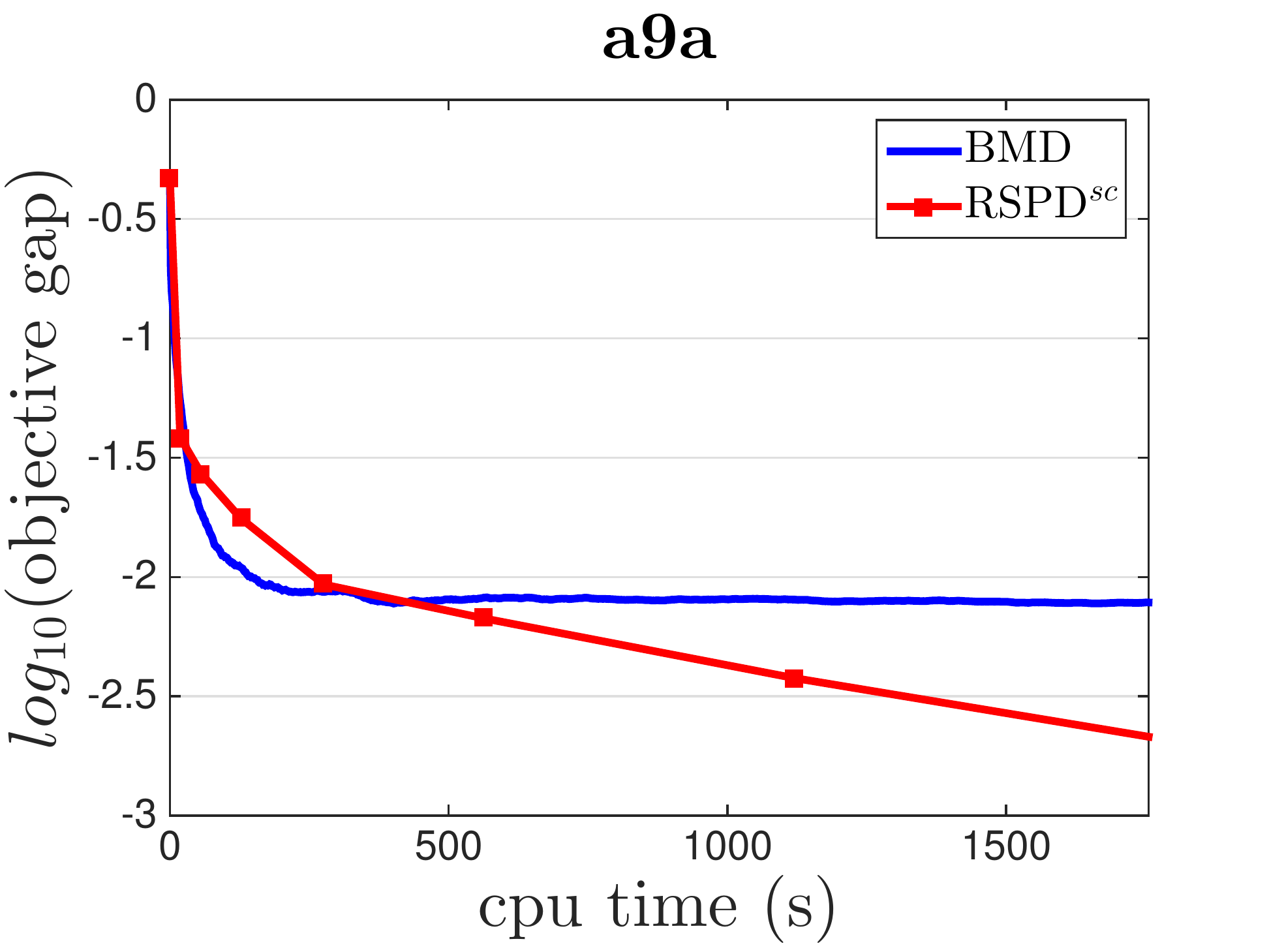}}
\hspace{-0.25in}
{\includegraphics[scale=.215]{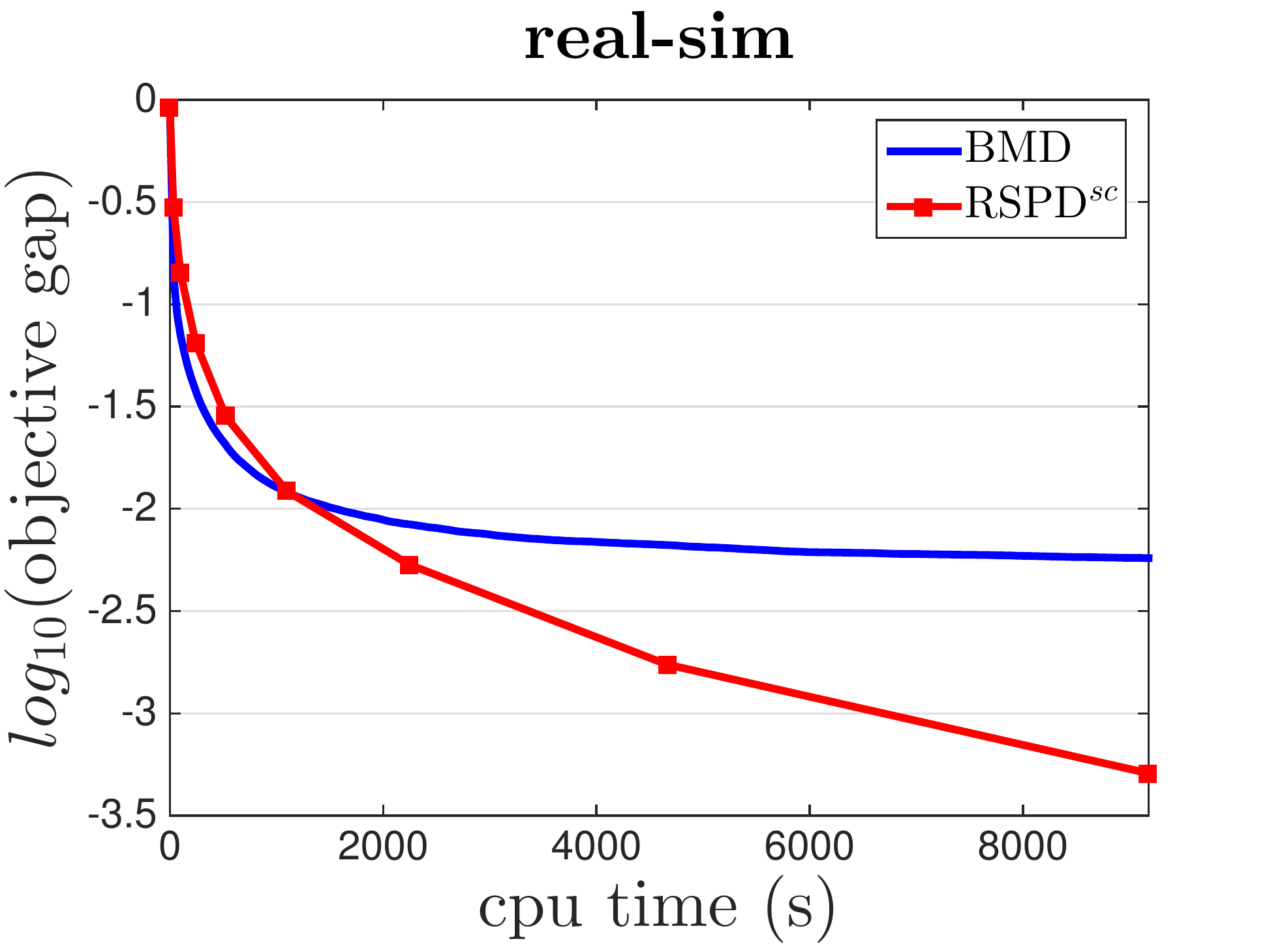}}
\hspace{-0.25in}
{\includegraphics[scale=.215]{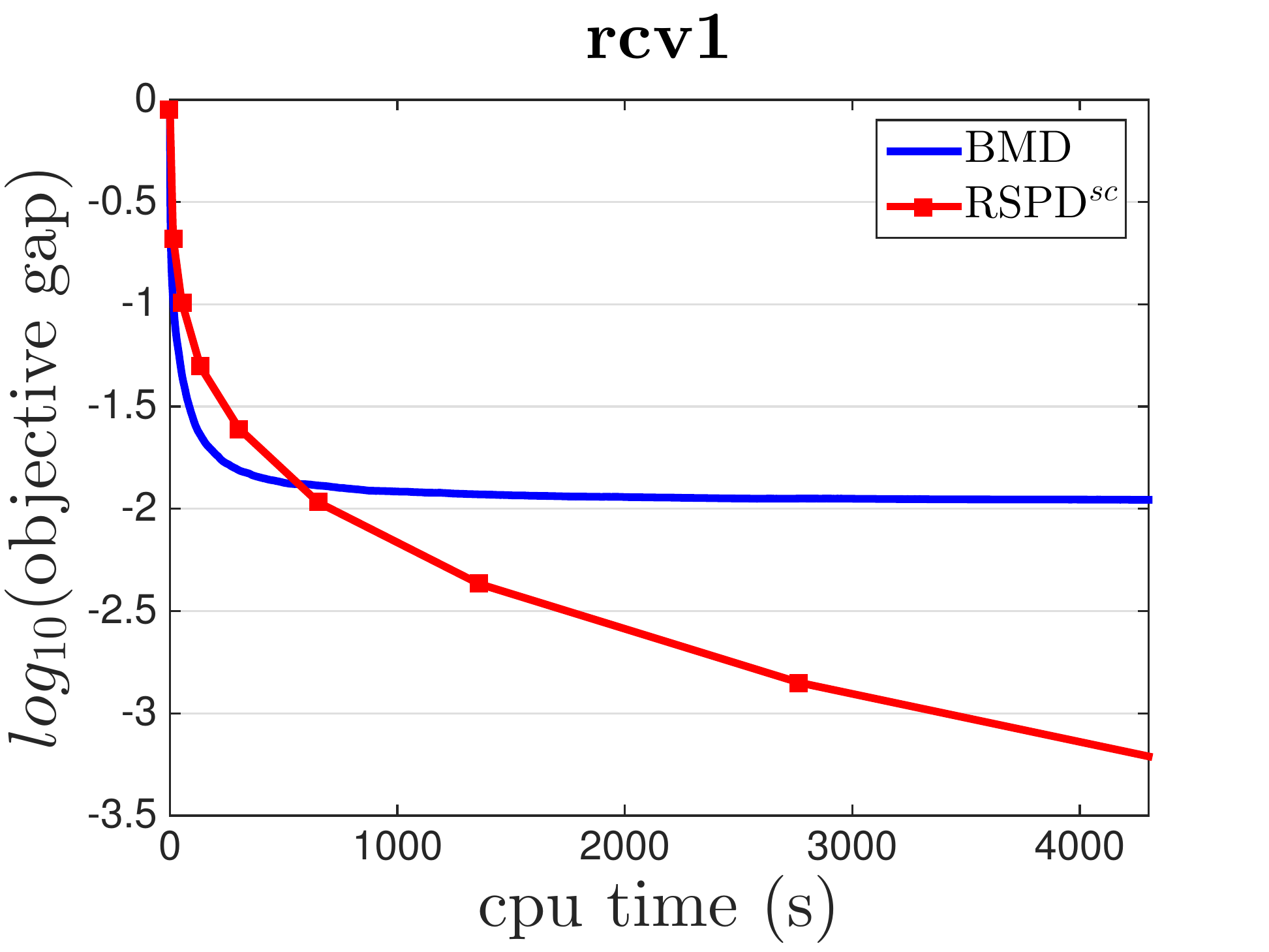}}
\hspace{-0.25in}
{\includegraphics[scale=.215]{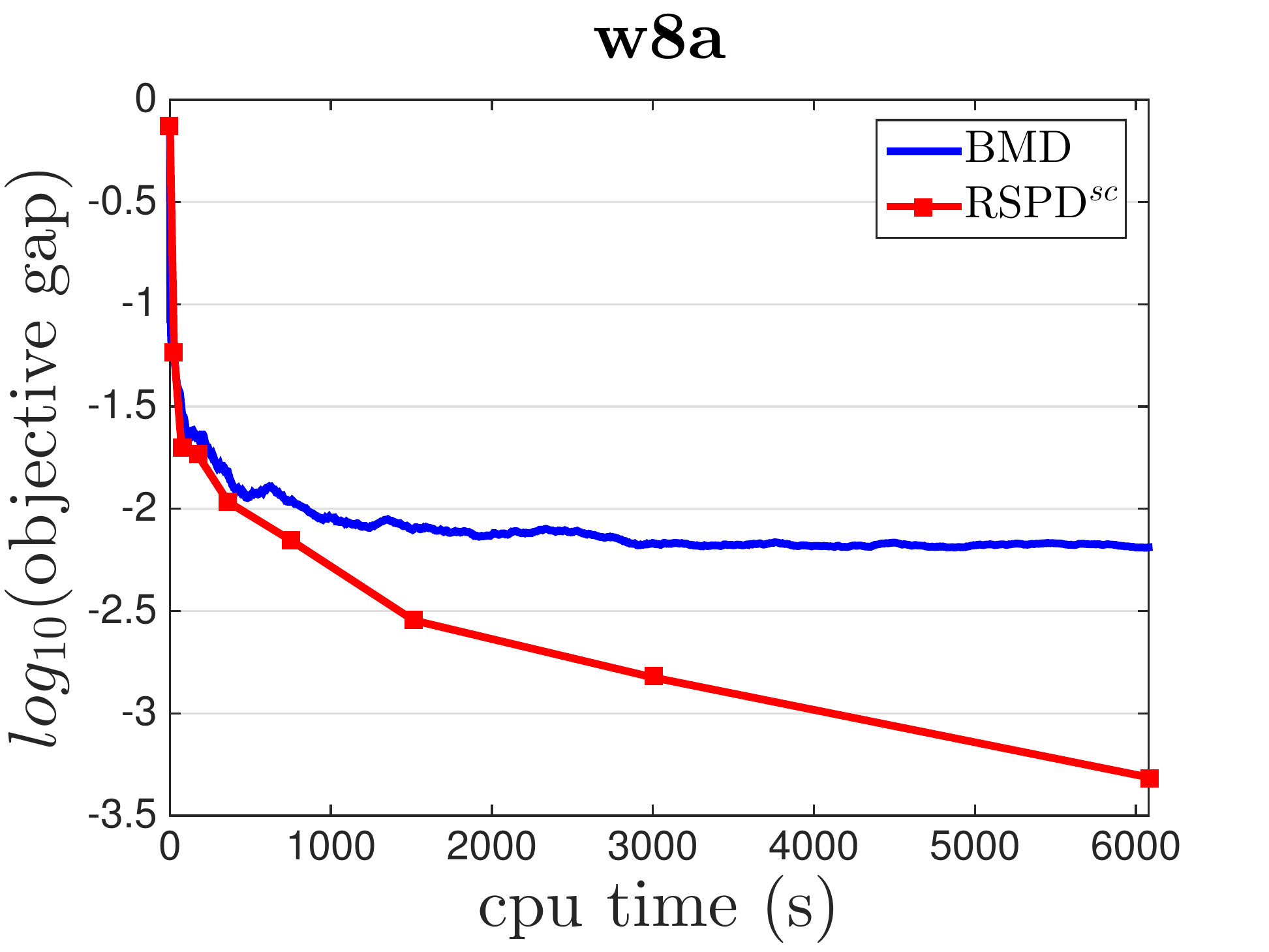}}
\hspace{-0.25in}
\caption{Results for Distributionally Robust Optimization by CPU time}
\label{fig:dro_cpu_time}
\end{figure*}

{\bf DRO.}  First, we consider solving the DRO~(\ref{eqn:dro}) for binary classification as mentioned in the Introduction. We use the square distance for $V$ that was studied in~\citep{DBLP:conf/nips/NamkoongD17}, i.e., $V(y, \mathbf 1/n) = \frac{\lambda_1}{2}  \| ny - \mathbf{1}\|_2^2$. 
For the loss function,  we consider the non-smooth hinge loss $\ell_i(x)=\max\{0,1 -b_i x^\top a_i\}$, where $a_i\in\R^d$ denotes the feature vector and $b_i\in\{1, -1\}$ denotes the label. We also include a regularizer $g(x)$ on the model parameter $x$. Using different regularizers will give different properties for the primal objective function. For example, if $g(x) = \frac{\lambda_2}{2}\|x\|_2^2$, then the primal objective function $P(x)$ is obviously a strongly convex function. If $g(x) = \lambda_2\|x\|_1$, then we can prove that the primal objective function $P(x)$ is a piecewise quadratic convex function, which satisfies the LEB condition with $\theta=1/2$. 
The proof is given in Appendix~\ref{app:sec:proof:piecewise_quadratic}. 
We report the result of RSPD$^{\text{sc}}$ for solving the problem with $g(x) = \frac{\lambda_2}{2}\|x\|_2^2$ here.

We compare with the baseline called Bandit Mirror Descent (BMD) algorithm considered in~\citep{DBLP:conf/nips/NamkoongD16}, which has a convergence rate of $O(1/\sqrt{T})$. The stochastic gradients are computed in the same way as in~\citep{DBLP:conf/nips/NamkoongD16}. Computing the restarted dual solution $y^{(s+1)}_0= \mathcal A(x^{(s+1)}_0)$ takes $O(nd)$ time complexity, and each update for the primal variable and the dual variable takes $O(d)$ and $O(n)$, respectively. Therefore, the total time complexity of RSPD for finding an $\epsilon$-optimal solution is $O(nd\log(1/\epsilon) +\frac{n+d}{\epsilon})$. In contrast, the time complexity of BMD is $O((n+d)/\epsilon^2)$. 

We conduct experiments on four datasets from libsvm website using $\ell_2$ regularization for $g(x)$. The regularizer parameters are set to be $\lambda_1 = \lambda_2 = \frac{1}{n}$ for all datasets.   The initial step sizes of all algorithms are tuned in the range of $\{10^{-5:1:3}\}$.
All algorithms start with the same initial solutions with $y_0 = \frac{\textbf{1}}{n}$ and $x_0 = \mathbf{0}$. 
In implementing RSPD$^{\text{sc}}$, we start with an initial $T=10^4$ increased by a factor of $2$ at each epoch.  
The results of objective gap against the number of gradients and against CPU time are shown in Figure~\ref{fig:dro_n_grad} and Figure~\ref{fig:dro_cpu_time}, respectively.
It is clear that the proposed algorithm converge much faster than the baseline algorithm BMD. 

\yancomment{  
Let $f(x) = \max_{p} \sum_{i=1}^{n} p^{T} \ell( x ) - \sigma(p)$.
By Proposition 2.3 in~\citep{rockafellar1987linear} and the assumption that $\ell(x)$ is piecewise linear, the above function is piecewise linear-quadratic in $x$, and thus obeys the LEB condition.
}

\begin{figure*}[t]
\centering
\hspace{-0.1in}
{\includegraphics[scale=.2125]{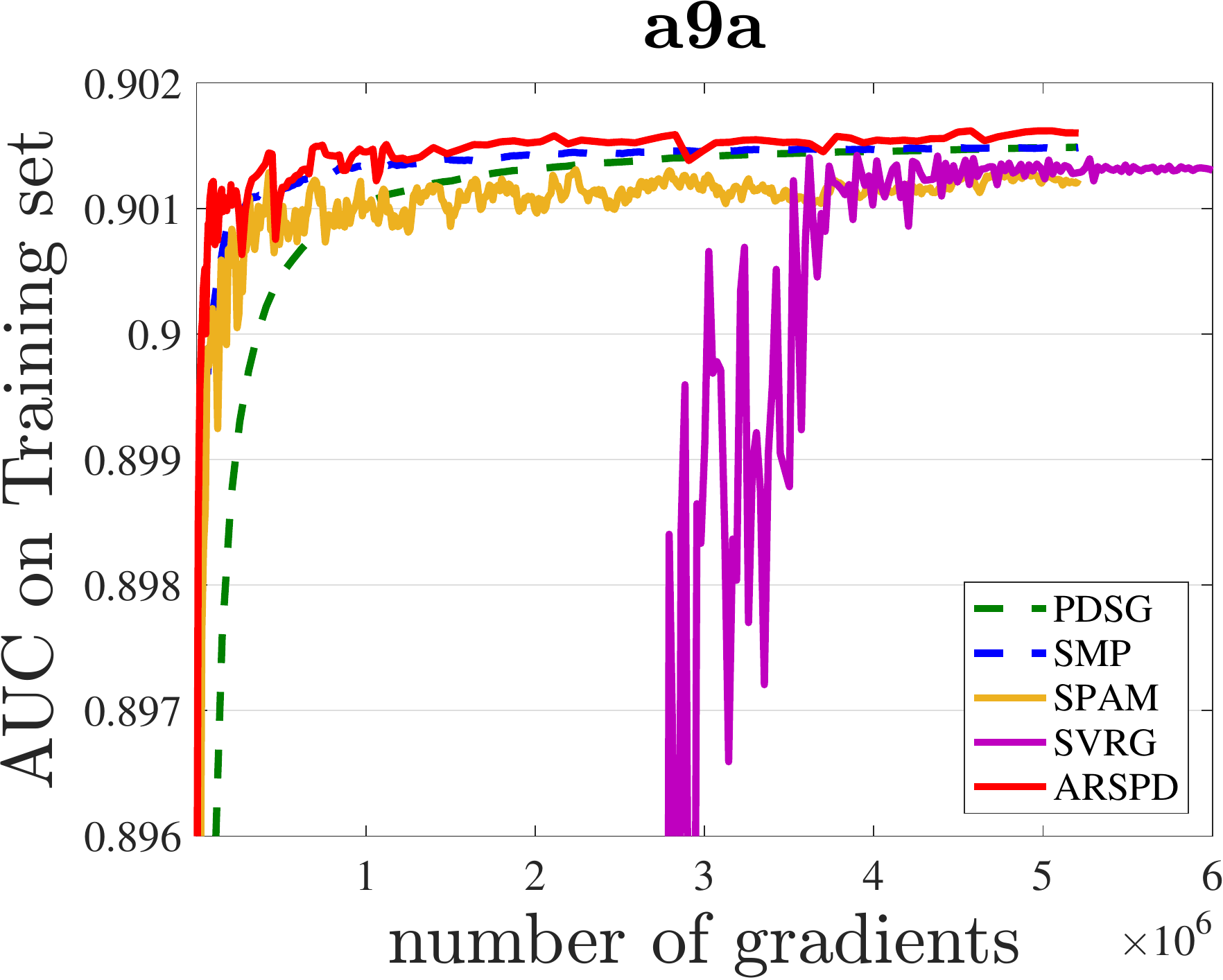}}
\hspace{-0.1in}
{\includegraphics[scale=.2125]{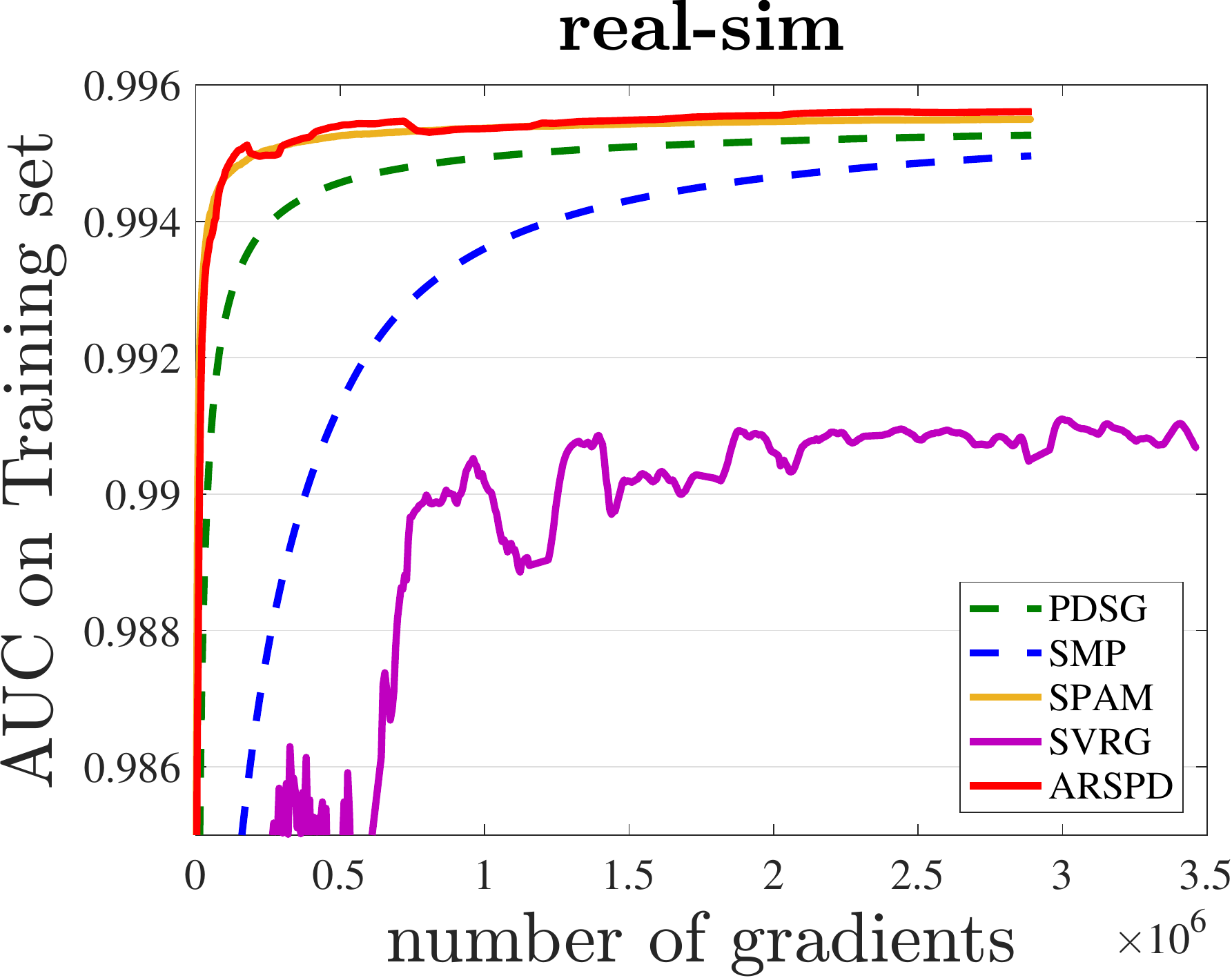}}
\hspace{-0.1in}
{\includegraphics[scale=.2125]{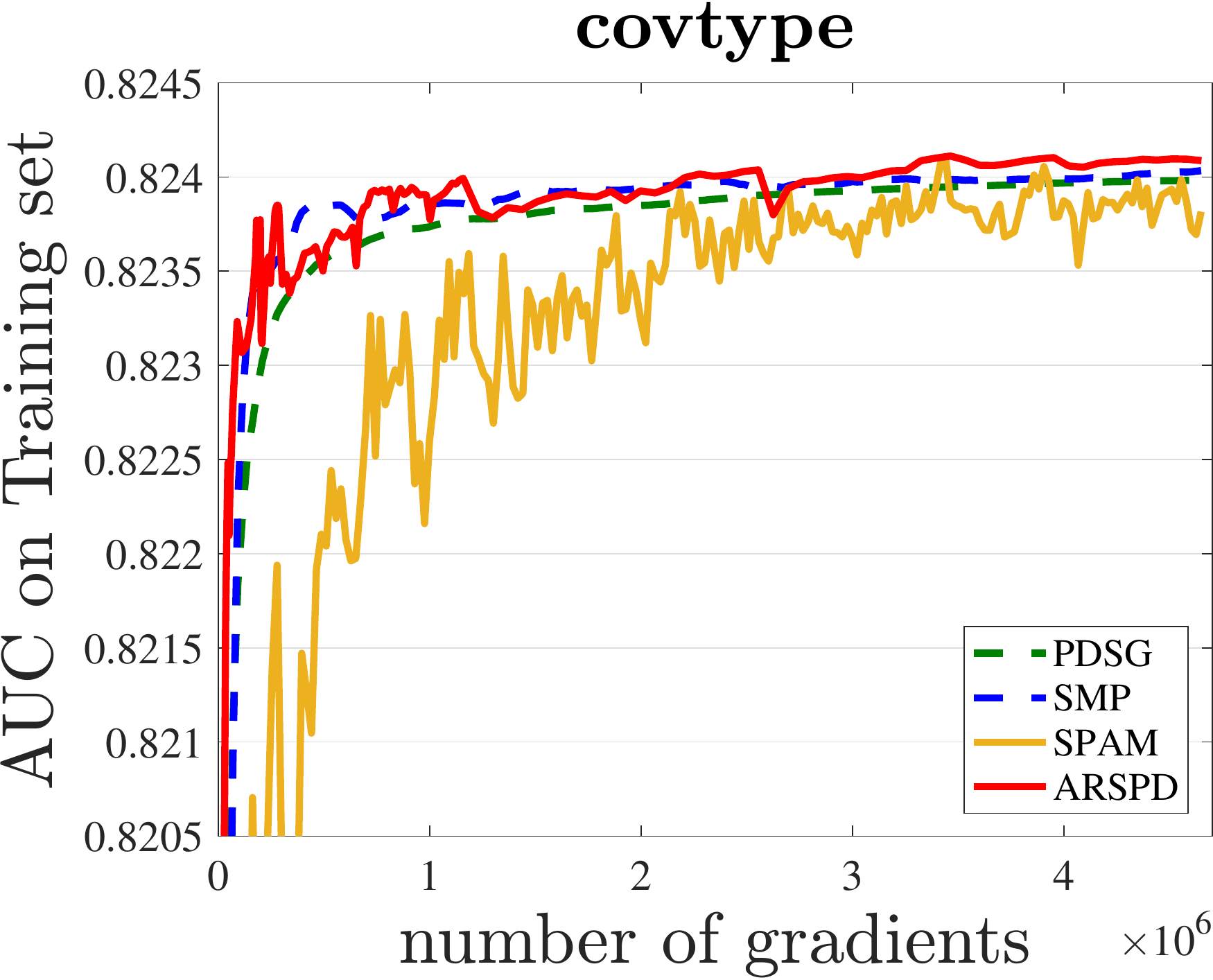}}
\hspace{-0.1in}
{\includegraphics[scale=.2125]{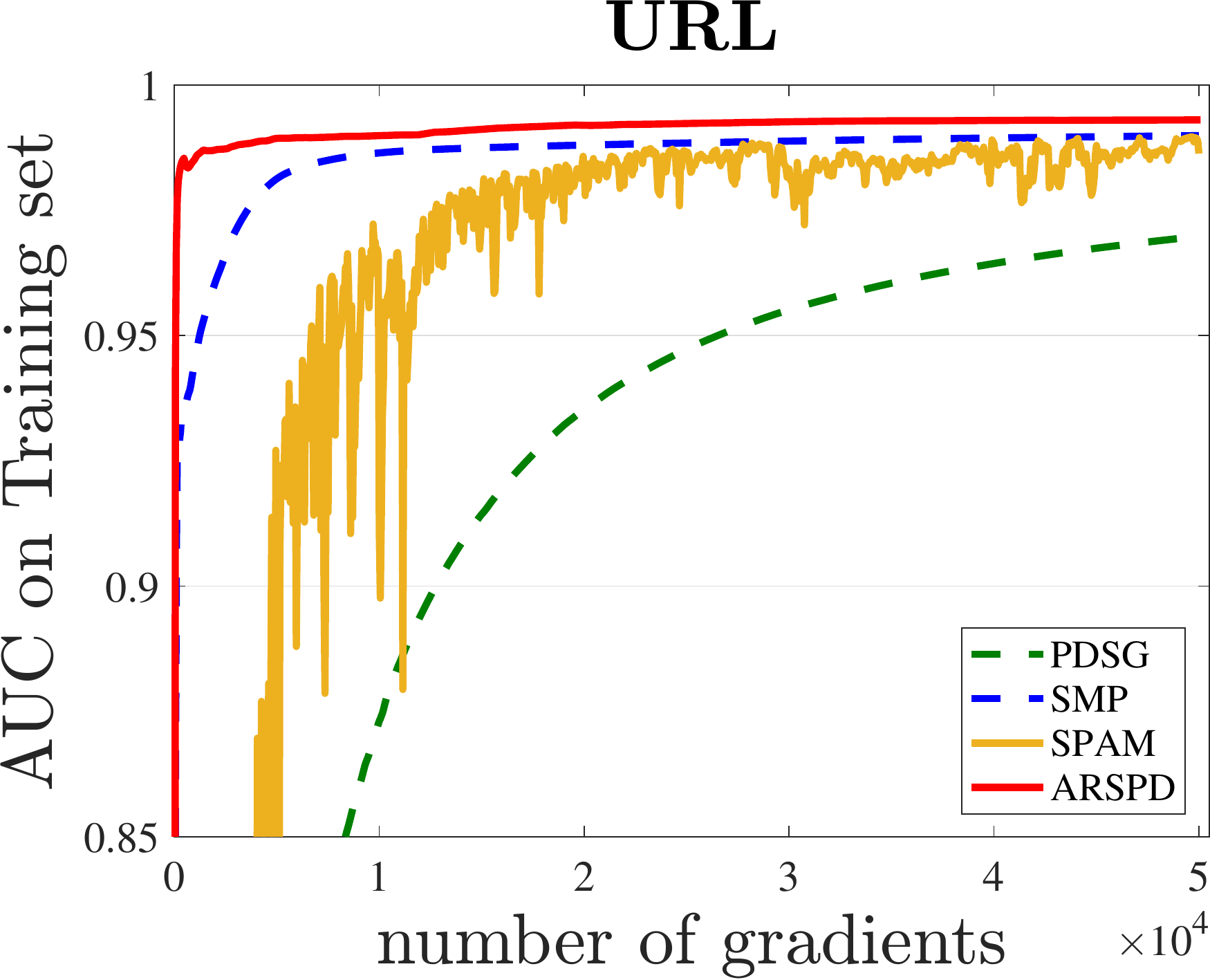}}
\hspace{-0.1in}
\caption{Results for AUC Maximization (with L2 ball constraint)}
\label{fig:auc_n_grad}
\end{figure*}

\begin{figure*}[t]
\centering
\hspace{-0.1in}
{\includegraphics[scale=.2125]{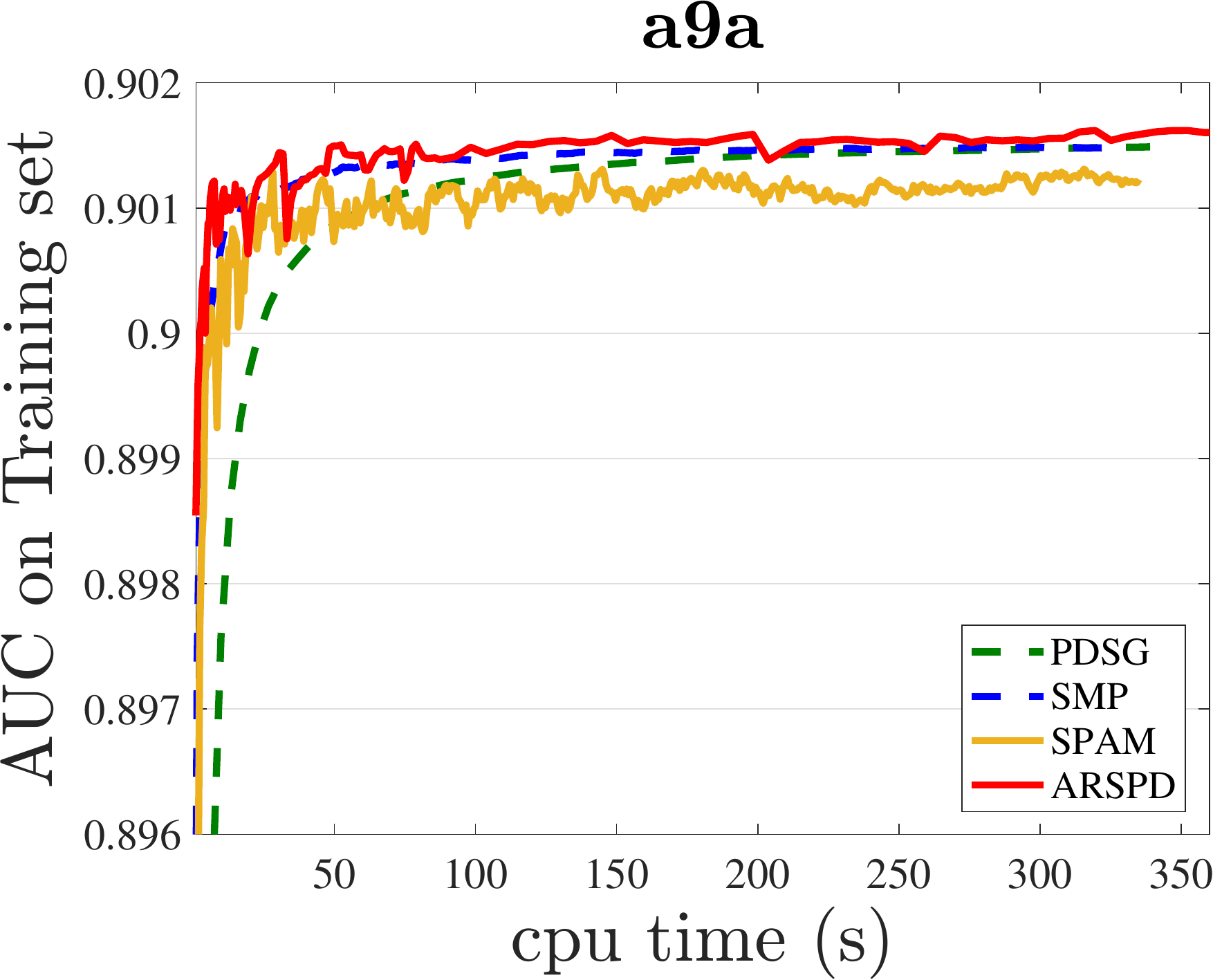}}
\hspace{-0.1in}
{\includegraphics[scale=.2125]{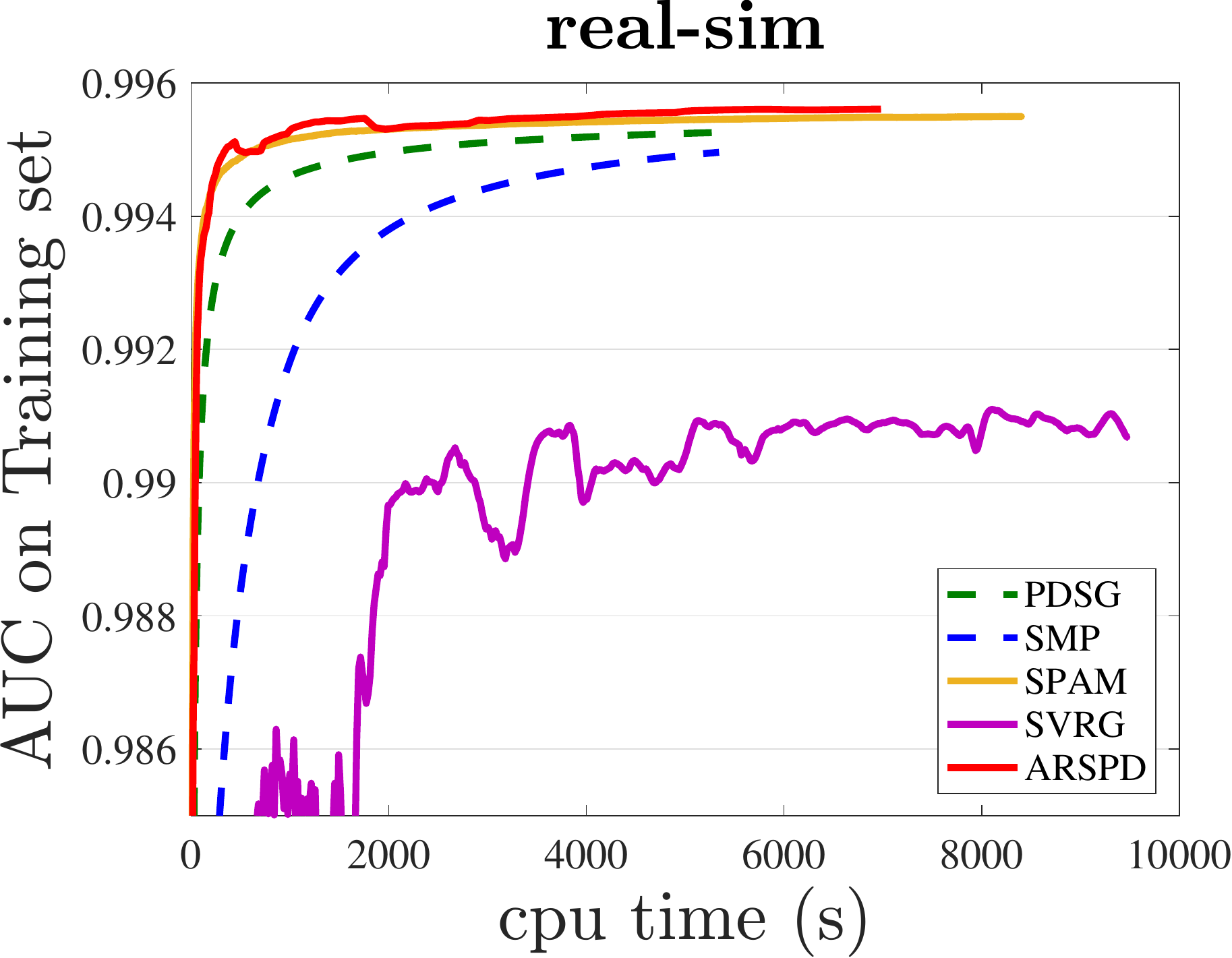}}
\hspace{-0.15in}
{\includegraphics[scale=.2125]{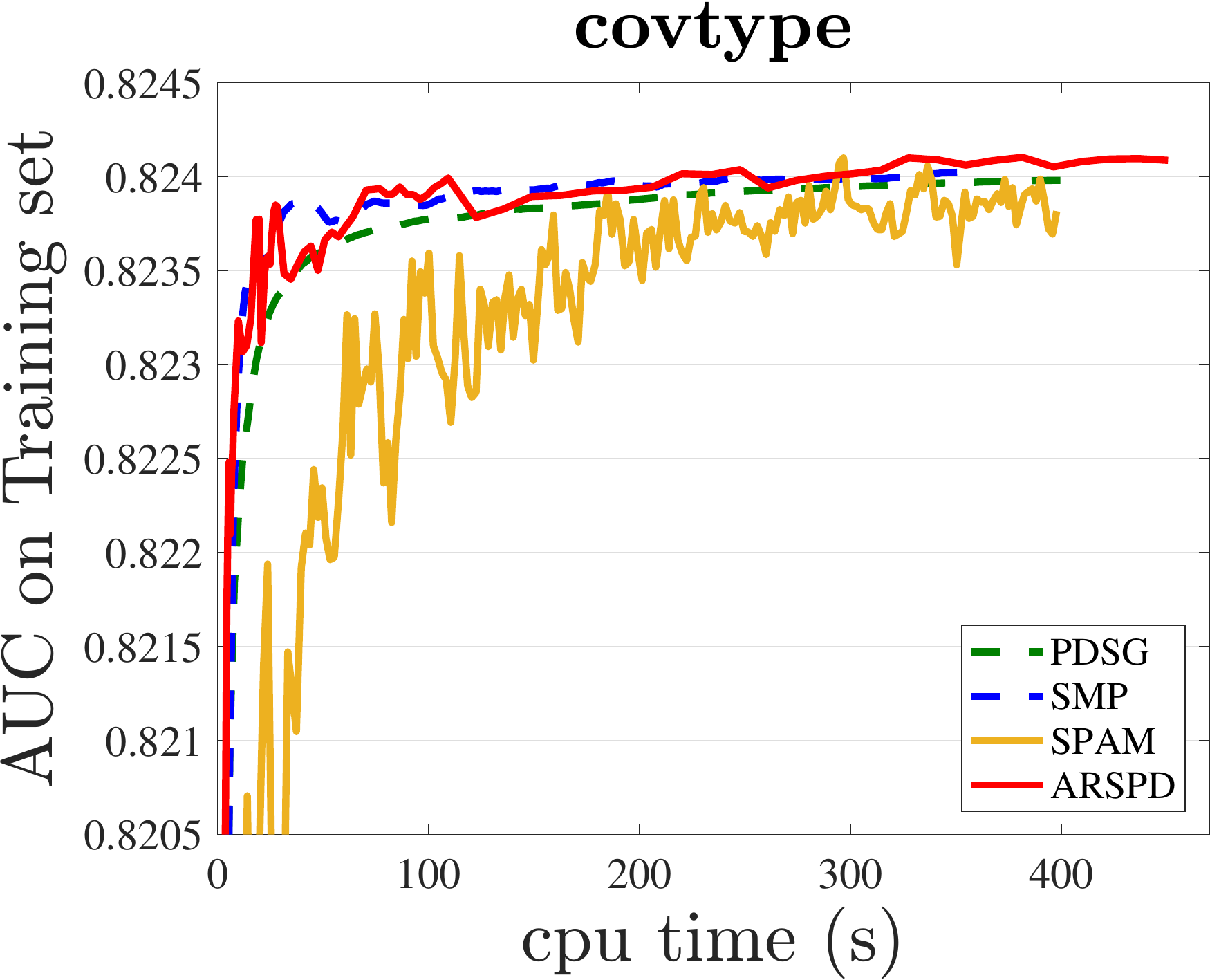}}
\hspace{-0.09in}
{\includegraphics[scale=.2125]{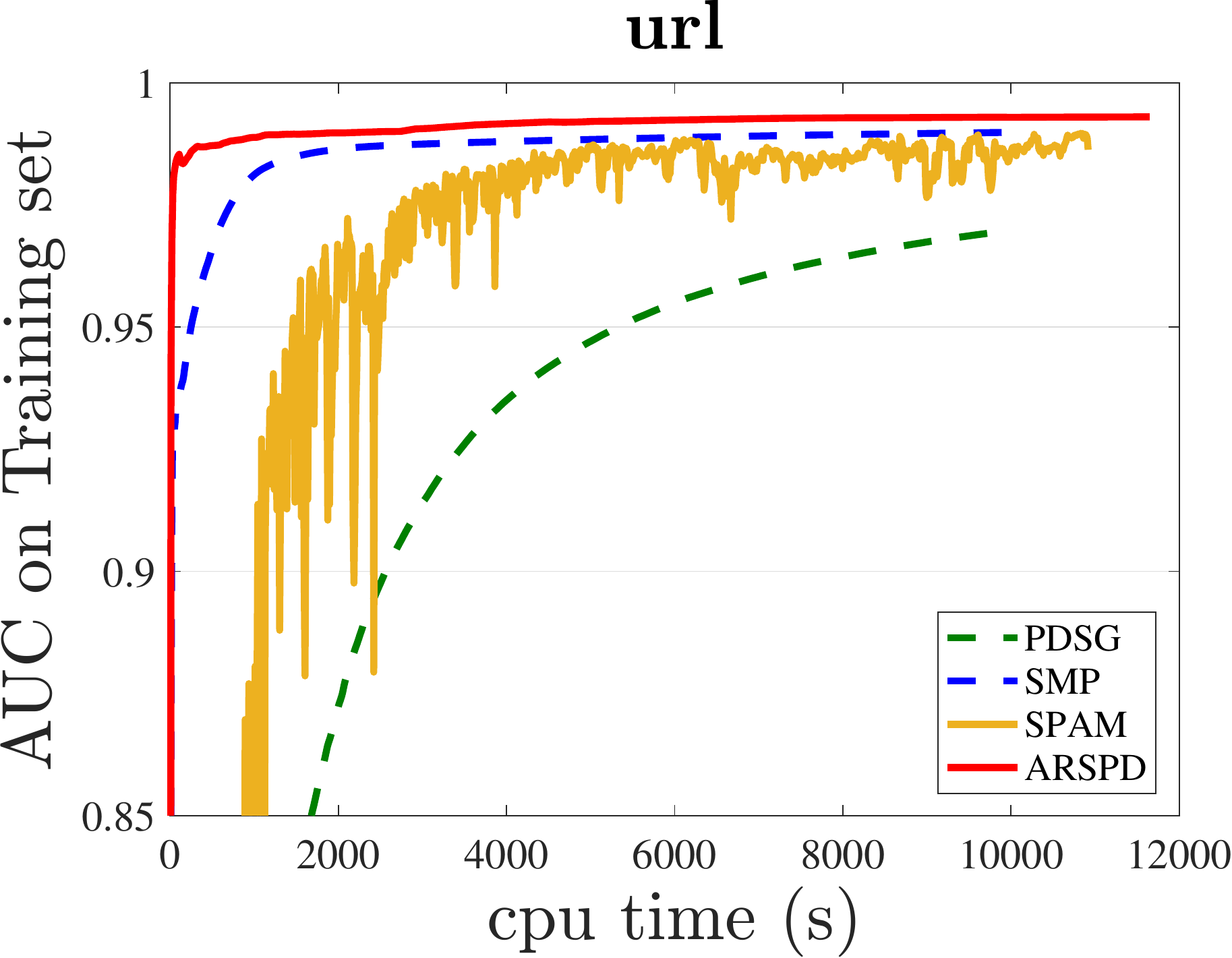}}
\hspace{-0.1in}
\caption{Results for AUC Maximization by CPU time (with L2 ball constraint)}
\label{figure:auc_cputime}
\end{figure*}

{\bf AUC Maximization.} Next, we consider empirical AUC maximization by solving  the min-max saddle-point formulation proposed by~\citep{DBLP:conf/nips/YingWL16}: 
\[
\min_{\w , a, b } \max_{\alpha} \frac{1}{n}\sum_{i=1}^{n} F(\w, a, b, \alpha; (\x_i, z_i) ), 
\]
where $\x_i\in\R^d, z_i\in\{1,-1\}$ denote the feature-label pairs of a training example, 
$
F(\w, a, b, \alpha; (\x, z) ) = (1 - p) (\w^{\top} \x - a)^{2} I_{[z=1]} + p(\w^{\top} \x - b)^{2} I_{[z=-1]} - p(1-p)\alpha^2 + 2(1+\alpha) (p \w^{\top}\x I_{[z=-1]} - (1-p) \w^{\top}\x I_{[z=1]})$, $p$ is the percentage of positive example, and
$I_{[\cdot]}$ is the indicator function.
Let ${\bf{v}} = [\w^{\top}, a, b]^{\top} \in \R^{d+2}$.  In order to achieve good AUC performance, we add a ball constraint on $\w$. Bounds on $(a, b)$ can be derived similarly to~\citep{DBLP:conf/nips/YingWL16}. 
If we use $\ell_1$ ball $\|\v\|_1\leq B$, it was shown in~\citep{fastAUC18} that the primal objective function satisfies the LEB with $\theta=1/2$. If we use $\ell_2$ ball constraint $\|\v\|_2\leq B$, under a mild condition that $\min_{\v\in\R^{d+2}}P(\v)<\min_{\|\v\|_2\leq B}P(\v)$ it was shown that a LEB with $\theta=1/2$ is satisfied~\citep{DBLP:conf/nips/LiuZZRY18}. Then the iteration complexity of RSPD is given by $\widetilde O(1/\epsilon)$. Since the dual variable is one-dimensional, computing the restarted dual solution $y^{(s+1)}_0$ takes $O(d)$ complexity given the averaged feature vectors for the positive and negative examples are precomputed. Hence, when LEB with $\theta=1/2$ is satisfied, the total time complexity of RSPD or ARSPD is $\widetilde O(d\log(1/\epsilon) + d/\epsilon)$. 
We also note that SPDC~\citep{DBLP:conf/icml/ZhangL15} is applicable in the AUC task,
but it does not give a linear rate for the considered AUC problem, because there is no strong convexity for primal variable as required for achieving a linear rate.
Adding a small strongly convex regularizer on the primal variable,
its total time complexity is $O(nd^2 + d^2/\sqrt{\epsilon})$ since every iteration needs to solve a linear system (i.e., the proximal mapping of the quadratic part of the primal variable), where $n$ is sample size.
Here, we report the results of the proposed adaptive algorithm for the problem with an $\ell_2$ ball constraint and an $\ell_{1}$ ball, respectively.  

\begin{figure*}[t]
\centering
\hspace{-0.1in}
{\includegraphics[scale=.2125]{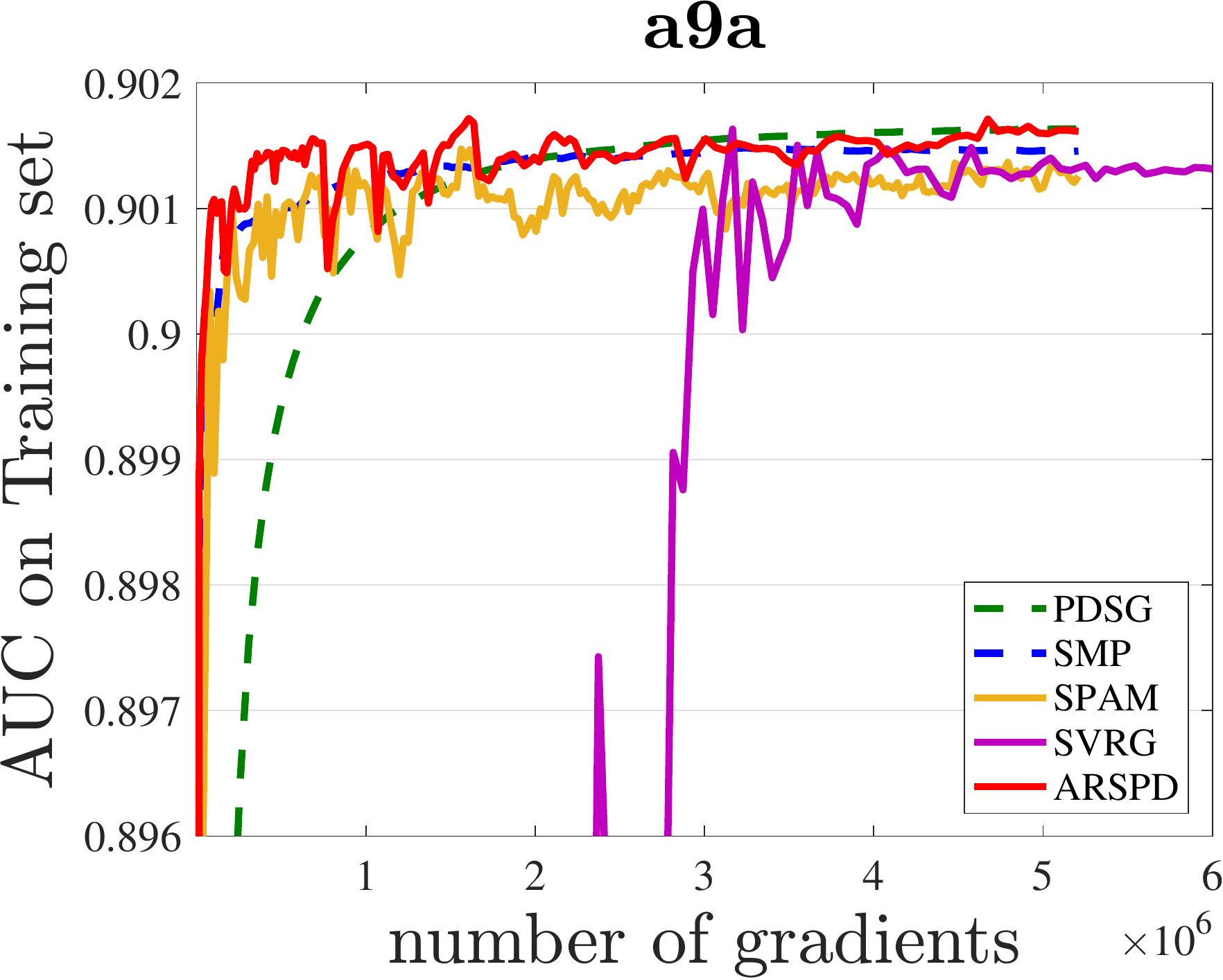}}
\hspace{-0.1in}
{\includegraphics[scale=.2125]{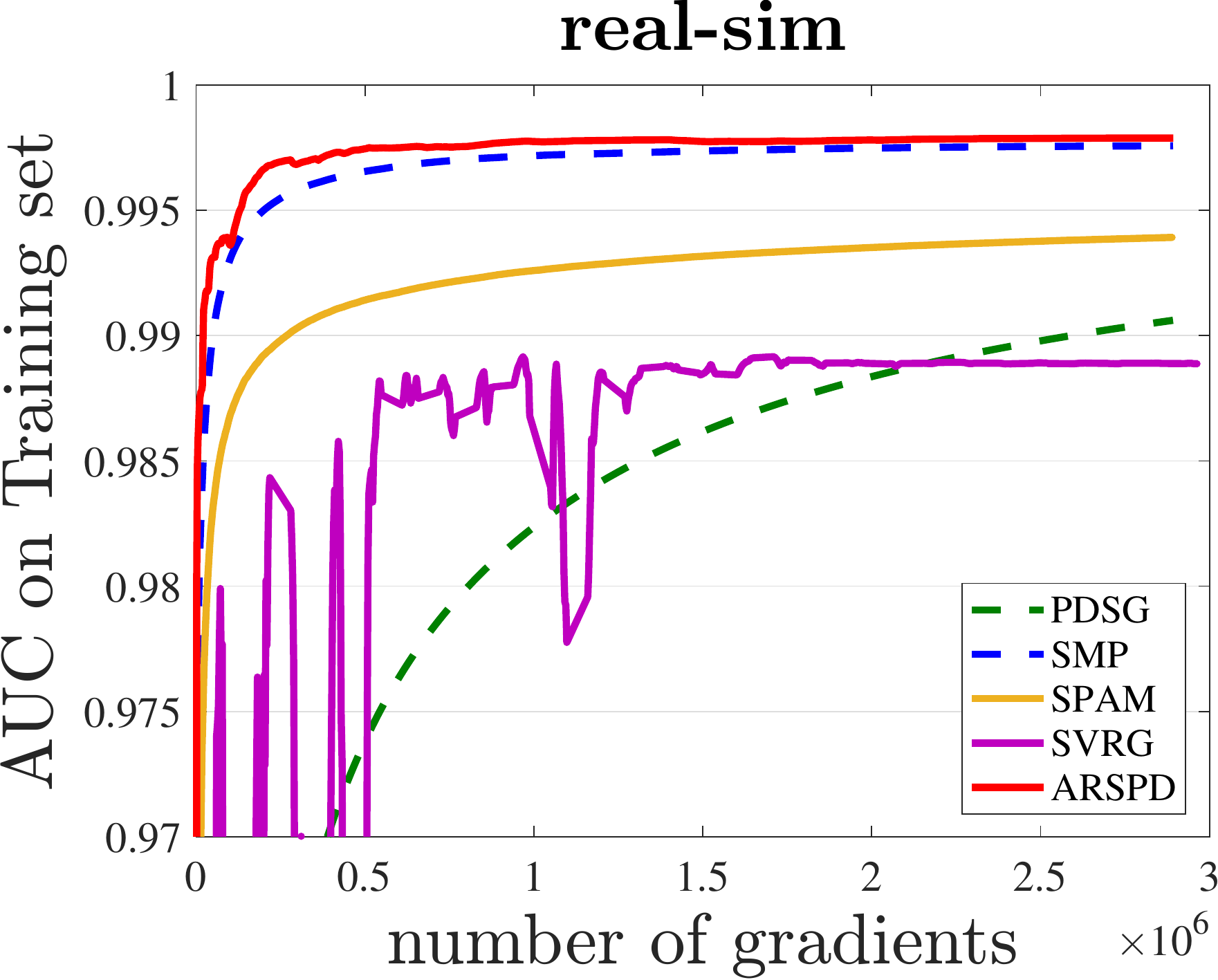}}
\hspace{-0.1in}
{\includegraphics[scale=.2125]{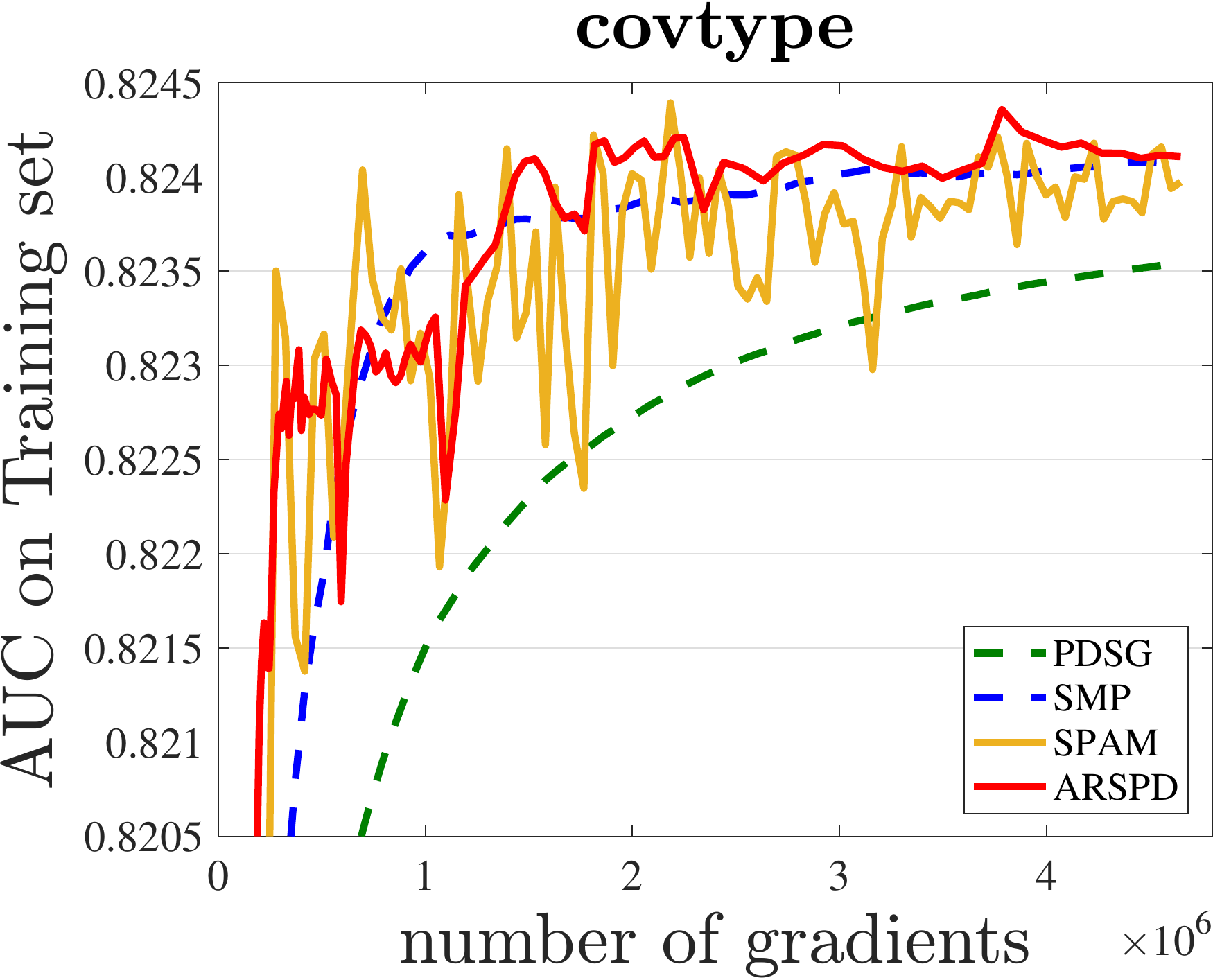}}
\hspace{-0.1in}
{\includegraphics[scale=.2125]{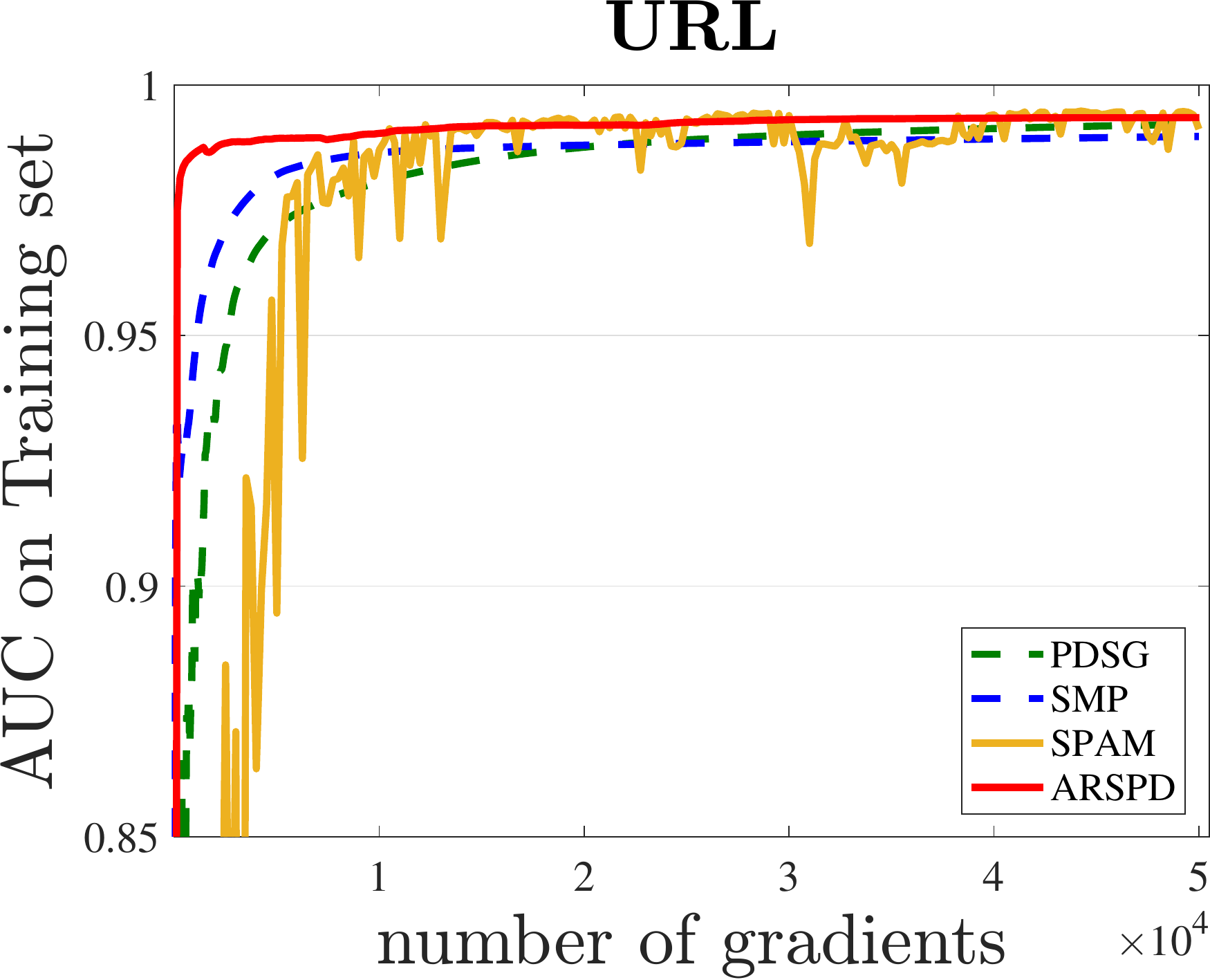}}
\caption{Results for AUC Maximization (with L1 ball constraint)}
\label{figure:auc_n_grad_l1}
\end{figure*}
\begin{figure*}[t]
\centering
\hspace{-0.1in}
{\includegraphics[scale=.2125]{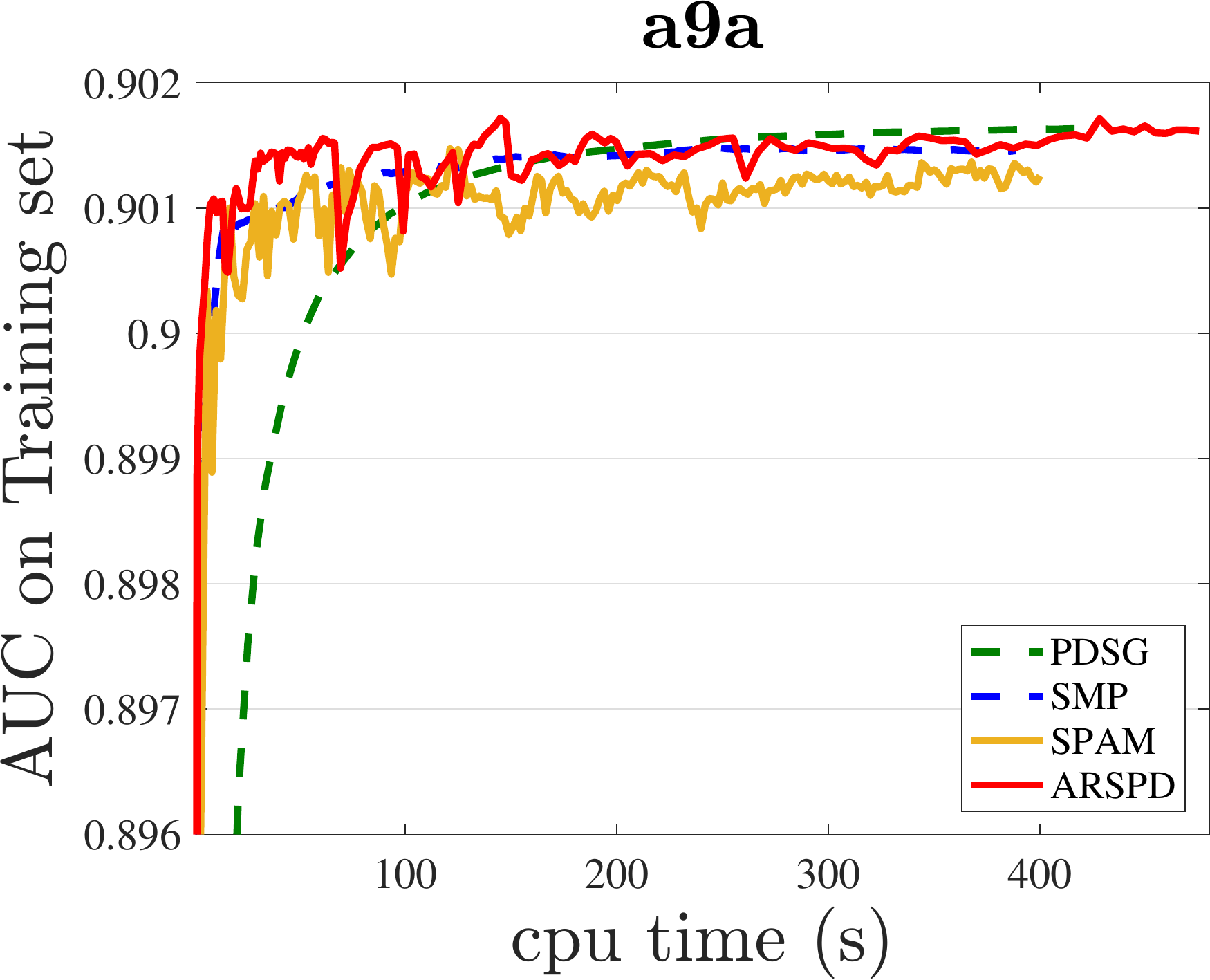}}
\hspace{-0.1in}
{\includegraphics[scale=.2125]{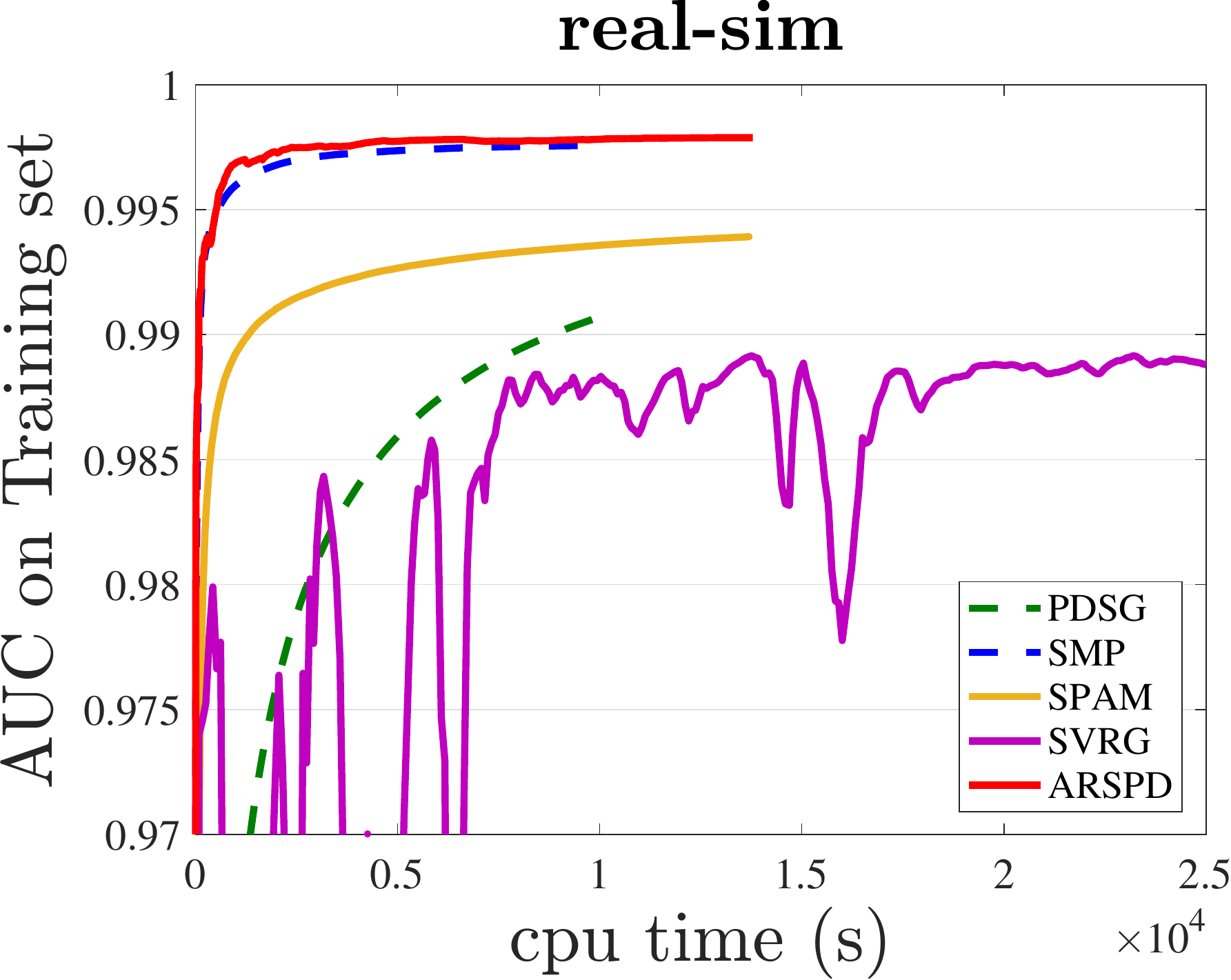}}
\hspace{-0.1in}
{\includegraphics[scale=.2125]{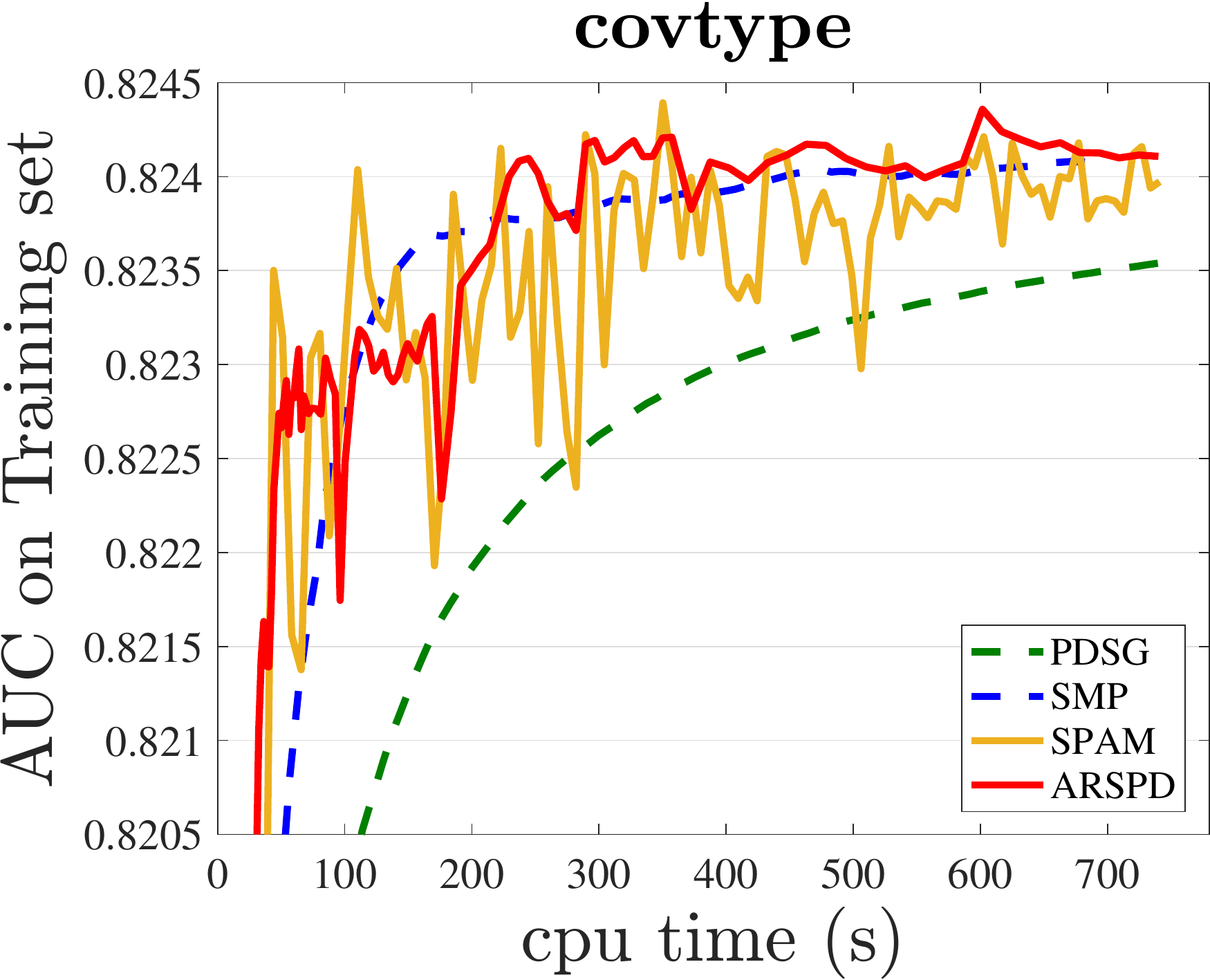}}
\hspace{-0.1in}
{\includegraphics[scale=.2125]{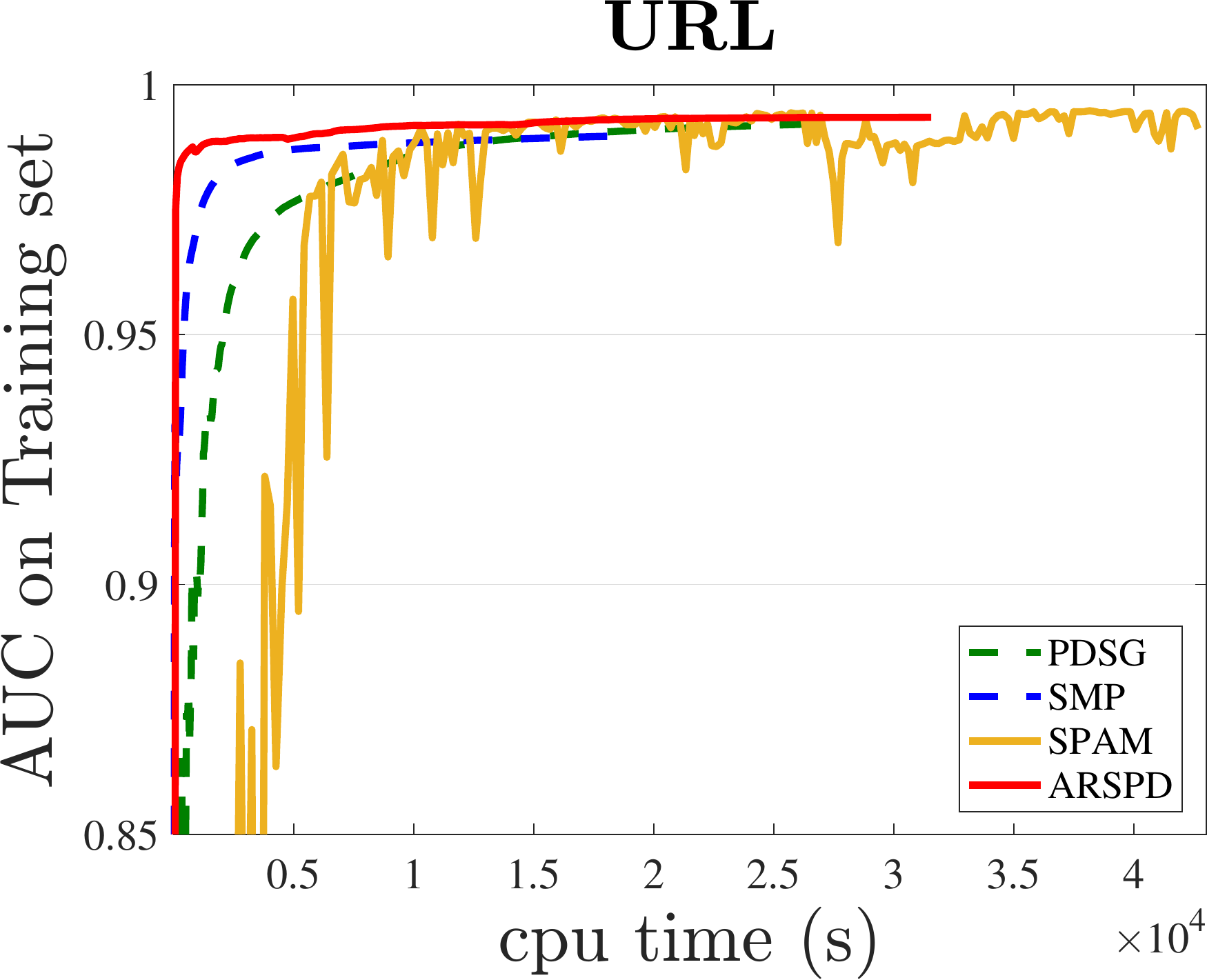}}
\caption{Results for AUC Maximization by CPU time (with L1 ball constraint)}
\label{figure:auc_cputime_l1}
\end{figure*}

Since the function $F$ is smooth in terms of $\v$ and $\alpha$, we include more applicable baselines for comparison. In particular, we compare with four algorithms, i.e., PDSG~\citep{Nemirovski:2009:RSA:1654243.1654247}, SPAM~\citep{DBLP:conf/icml/NatoleYL18}, SMP~\citep{juditsky2011} and primal-dual SVRG~\citep{DBLP:conf/nips/PalaniappanB16}. 
For primal-dua SVRG, we directly use the formulation of AUC proposed in the paper and conduct the experiment using the code provided by the authors
\footnote{Code derived at \url{https://sites.google.com/site/pbalamuru/home/sagsaddle-code}}. 
SPAM is an algorithm proposed particularly for the stochastic AUC maximization. SMP and SVRG utilize the smoothness of the objective function. The complexity of PDSG and SMP for finding an $\epsilon$-stationary solution is given by $O(d/\epsilon^2)$. Note that both SPAM and SVRG require a strong convexity of the objective function on the primal variable. To this end, we add an $\ell_{2}$ regularizer, i.e., $\frac{\lambda}{2}||\w||_{2}^{2}$ with a small value of $\lambda = \Theta(\epsilon)$. These two algorithms have a total time complexity for finding a solution $\v$ such that $\|\v- \v_*\|^2\leq \epsilon$  given by $\widetilde O(d/\epsilon^2)$ and $\widetilde O(nd + nd/\epsilon)$, respectively. We can see that all baseline algorithms have worse time complexity than RSPD, especially the primal dual SVRG algorithm. 


In the $\ell_{2}$ ball setting, we fix $B=10$ and $\lambda = 10^{-4}$ on all datasets.
In the $\ell_{1}$ ball setting, we set $B = 100$ on a9a, covtype and URL, and $B = 1000$ on real-sim.
The initial step sizes of all algorithms are tuned in the range of $\{ 10^{-5:1:3} \}$. For ARSPD, we set $S=5$ and simply set $\theta = 0$ pretending that we do not know the value of true $\theta$ and tune $\kappa = \{ 0.25, 0.5, 0.75, 1 \}$. The initial solution of all algorithms are set to $\mathbf{0}$.
For the $\ell_{2}$ ball setting, the convergence curves of AUC on four data sets against the number of gradients and CPU time are shown in Figure~\ref{fig:auc_n_grad} and Figure~\ref{figure:auc_cputime}, including two large-scale datasets covtype and URL, on which SVRG is too slow to be plotted. 
For the $\ell_{1}$ ball setting, the convergence curves of AUC against the number of gradients and CPU time are shown in Figure~\ref{figure:auc_n_grad_l1} and Figure{\ref{figure:auc_cputime_l1}}.
We can see that the overall performance of ARSPD is the best among all algorithms.

\section{Conclusion}\label{sec:conc}
In this paper, we have proposed novel stochastic primal-dual algorithms for solving convex-concave problems with no bilinear structure assumed, which  employ a mixture of stochastic gradient updates and deterministic dual updates. 
A fast convergence rate of $O(1/T)$ was achieved  under strong convexity on the primal and dual variables.
In addition, we design variants for more general problems without strong convexity achiving adaptive rates. 
Empirical results verify the effectiveness of our algorithms.


\bibliography{all}

\newpage
\onecolumn
\appendix

\section{A Lemma Regarding $\mathcal A(x)$}
\label{app:sec:regarding_A}

\begin{lem}\label{lemma:supp:1}
Let $\mathcal A(x) = \arg\max_{y\in\text{dom}(\phi^*)}y^\top\ell(x) - \phi^*(y)$, where $\phi^*$ is the convex conjugate of a differentiable function $\phi$, then
\begin{align*}
\mathcal A(x) = \nabla \phi(\ell(x)).
\end{align*}
\end{lem}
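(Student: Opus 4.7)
The plan is to identify the maximizer via the Fenchel--Young duality between $\phi$ and its conjugate $\phi^*$. Write $z = \ell(x)$ so that $\mathcal{A}(x) = \arg\max_{y \in \mathrm{dom}(\phi^*)} \{ y^\top z - \phi^*(y) \}$. By the very definition of convex conjugation, the optimal value of this maximization is $\phi^{**}(z)$. Under the standing convexity and lower-semicontinuity assumptions on $\phi$ (which hold throughout the paper), the Fenchel--Moreau theorem gives $\phi^{**} = \phi$, so the supremum equals $\phi(z)$ and is attained.

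Next I would characterize the maximizer $y^*$ through the first-order optimality condition. Since $y^*$ maximizes a concave function, $0 \in z - \partial \phi^*(y^*)$, i.e., $z \in \partial \phi^*(y^*)$. The standard conjugate subgradient identity states
\begin{equation*}
z \in \partial \phi^*(y^*) \quad \Longleftrightarrow \quad y^* \in \partial \phi(z),
\end{equation*}
which is an immediate consequence of Fenchel--Young (the inequality $\phi(z) + \phi^*(y) \geq y^\top z$ holds with equality exactly on these subgradient pairs). Because $\phi$ is differentiable by hypothesis, $\partial \phi(z)$ is the singleton $\{\nabla \phi(z)\}$, so $y^* = \nabla \phi(z) = \nabla \phi(\ell(x))$, which is exactly the claim.

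The only delicate point is justifying $\phi^{**} = \phi$ and the subgradient equivalence, i.e., making sure $\phi$ is a proper, closed, convex function so that conjugate duality applies. These are satisfied under the paper's blanket assumptions (e.g., Assumption~\ref{assumption:general}(3) forces $\phi^*$ to be uniformly convex, hence proper and closed, so its conjugate $\phi$ is likewise proper, closed, convex, and in fact differentiable with Hölder gradient). Given these, the proof is just an invocation of classical convex analysis facts, with no technical obstacle remaining beyond naming them.
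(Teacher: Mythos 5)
Your proposal is correct and follows essentially the same route as the paper: the paper also observes that the maximum value equals $\phi(\ell(x))$, so Fenchel--Young holds with equality at the maximizer, and then invokes the standard conjugate-subgradient equivalence together with differentiability of $\phi$ to conclude $\mathcal A(x)=\nabla\phi(\ell(x))$. Your detour through the first-order optimality condition $z\in\partial\phi^*(y^*)$ is just another phrasing of the same classical fact, and your remark about needing $\phi$ proper, closed, and convex is a reasonable (and slightly more careful) justification of what the paper takes for granted.
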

\begin{proof}
Let $\hat y =\mathcal A(x)$, then we know
\begin{align*}
\phi(\ell(x)) = \hat y^\top\ell(x) - \phi^*(\hat y).
\end{align*}
Since $\phi$ is differentinable, and then by using Lemma 11.4 in~\citep{bianchi} we have
\begin{align*}
\hat y = \nabla \phi(\ell(x)).
\end{align*}
That is 
\begin{align*}
\mathcal A(x) = \nabla \phi(\ell(x)).
\end{align*}
\end{proof}

\section{Proof of Lemma~\ref{lemma:convergence_RSPDsc_per_stage}}
\label{app:sec:proof:lem:convergence_RSPDsc_per_stage}

For simplicity of presentation, we use the notations $\Delta_{x}^{t} = \nabla_{x} f(x_{t}, y_{t}; \xi_{t})$, $\Delta_{y}^{t} = \nabla_{y} f(x_{t}, y_{t}, ; \xi_{t})$, $\partial_{x}^{t} = \nabla_{x} f(x_{t}, y_{t})$ and $\partial_{y}^{t} = \nabla_{y} f(x_{t}, y_{t})$.
To prove Lemma~\ref{lemma:convergence_RSPDsc_per_stage}, we would leverage the following two update approaches:
\begin{align}
\label{eq1:two_update_sequences}
  & 
  \left \{ 
    \begin{array}{cc}
    x_{t+1} = \arg \min_{x \in X } &  x^{\top} \Delta_{x}^{t} + \frac{1}{2 \eta_x } || x - x_{t} ||^{2}   \\
    y_{t+1} = \arg \min_{y \in Y } & -y^{\top} \Delta_{y}^{t} + \frac{1}{2 \eta_y } || y - y_{t} ||^{2}   
    \end{array}
  \right.     \nonumber   \\
  & 
  \left \{ 
    \begin{array}{cc}
    \xtilde_{t+1} = \arg \min_{x \in X} &  x^{\top} ( \partial_{x}^{t} - \Delta_{x}^{t} ) + \frac{1}{2 \eta_x } || x - \xtilde_{t} ||^{2}   \\
    \ytilde_{t+1} = \arg \min_{y \in Y} & -y^{\top} ( \partial_{y}^{t} - \Delta_{y}^{t} ) + \frac{1}{2 \eta_y } || y - \ytilde_{t} ||^{2}    ,
    \end{array}
  \right. 
\end{align}
where $x_{0} = \xtilde_{0}$ and $y_{0} = \ytilde_{0}$.
The first two updates are identical to Line~\ref{alg1:line:pd_sa_update_x} and Line~\ref{alg1:line:pd_sa_update_y} in Algorithm~\ref{alg:RSPDsc}.
This can be verified easily.
Take the first one as example:
\begin{align*}
      x_{t+1} 
=    \Pi_{X } ( x_{t} - \eta_x \Delta_{x}^{t} )  
=     \arg\min_{x \in X } || x - (x_{t} - \eta_x \Delta_{x}^{t}) ||^{2} 
=    \arg\min_{x \in X } \frac{1}{2 \eta_x} || x - x_{t} ||^{2} + x^{\top} \Delta_{x}^{t}  .
\end{align*}
Let $\psi(x) = x^{\top} u + \frac{1}{2\gamma} || x - v ||^{2}$ with $x' = \arg \min_{x \in X} \psi(x)$, which includes the four update approaches in~(\ref{eq1:two_update_sequences}) as special cases.
By using the strong convexity of $\psi(x)$ and the first order optimality condition ($\nabla \psi(x')^{\top} ( x - x' ) \geq 0$), for any $x$, we have
\begin{align*}
       \psi(x) - \psi(x') \geq & \nabla \psi(x')^T ( x - x') + \frac{1}{2\gamma} || x - x' ||^{2} \geq \frac{1}{2\gamma} || x - x' ||^{2},
\end{align*}
which implies
\begin{align*}
       0 \leq & ( x - x' )^{\top} u + \frac{1}{2\gamma}  || x - v ||^{2} - \frac{1}{2\gamma}  || x' - v ||^{2} - \frac{1}{2\gamma} || x - x' ||^{2}  \\
=    & ( v- x' )^{\top}  u- (v - x )^{\top} u + \frac{1}{2\gamma}  || x - v ||^{2} - \frac{1}{2\gamma} || x' - v ||^{2} - \frac{1}{2\gamma}  || x - x' ||^{2} \\
=    & - \frac{1}{2\gamma}  || x' - v ||^{2} + ( v - x' )^{\top} u  + \frac{1}{2\gamma} || x - v ||^{2}  - \frac{1}{2\gamma}  || x - x' ||^{2} - (v - x )^{\top} u  \\
\leq &  \frac{\gamma}{2} || u ||^{2} + \frac{1}{2\gamma}  || x - v ||^{2}  - \frac{1}{2\gamma} || x - x' ||^{2} - (v - x )^{\top} u. 
\end{align*}
Then
\begin{align}
\label{eq1:proof_expectation1}
(v - x )^{\top} u  \leq \frac{\gamma}{2} || u ||^{2} + \frac{1}{2\gamma}  || x - v ||^{2}  - \frac{1}{2\gamma}  || x - x' ||^{2} .
\end{align}
Applying the above result to the updates in~(\ref{eq1:two_update_sequences}), we have
\begin{align}
\label{eq1:seperate_all}
& ( x_{t} - x )^{\top} \Delta_{x}^{t} \leq \frac{1}{2 \eta_x} || x - x_{t} ||^{2} - \frac{1}{2 \eta_x} || x - x_{t+1} ||^{2} + \frac{\eta_x}{2} || \Delta_{x}^{t} ||^{2}   \nonumber\\
& ( y - y_{t} )^{\top} \Delta_{y}^{t} \leq \frac{1}{2 \eta_y} || y - y_{t} ||^{2} - \frac{1}{2 \eta_y} || y - y_{t+1} ||^{2} + \frac{\eta_y}{2} || \Delta_{y}^{t} ||^{2}   \nonumber\\
& ( \xtilde_{t} - x )^{\top} ( \partial_{x}^{t} - \Delta_{x}^{t} ) \leq \frac{1}{2 \eta_x} || x - \xtilde_{t} ||^{2} - \frac{1}{2 \eta_x} || x - \xtilde_{t+1} ||^{2} + \frac{\eta_x}{2} || \partial_{x}^{t} - \Delta_{x}^{t} ||^{2}   \nonumber\\
& ( y - \ytilde_{t} )^{\top} ( \partial_{y}^{t} - \Delta_{y}^{t} ) \leq \frac{1}{2 \eta_y} || y - \ytilde_{t} ||^{2} - \frac{1}{2 \eta_y} || y - \ytilde_{t+1} ||^{2} + \frac{\eta_y}{2} || \partial_{y}^{t} - \Delta_{y}^{t} ||^{2} .
\end{align}
Adding the above four inequalities, we have
\begin{align}
\label{eq1:combine_all}
\LHS 
=    & 
       ( x_{t} - x )^{\top} \Delta_{x}^{t} + ( y - y_{t} )^{\top} \Delta_{y}^{t} + ( \xtilde_{t} - x )^{\top} ( \partial_{x}^{t} - \Delta_{x}^{t} ) + ( y - \ytilde_{t} )^{\top} ( \partial_{y}^{t} - \Delta_{y}^{t} )      \nonumber\\
=    & 
       ( x_{t} - x )^{\top} \partial_{x}^{t} + ( x_{t} - x )^{\top} ( \Delta_{x}^{t} - \partial_{x}^{t} )
       + ( y - y_{t} )^{\top} \partial_{y}^{t} + ( y - y_{t} )^{\top} ( \Delta_{y}^{t} - \partial_{y}^{t} )    \nonumber\\
     & + ( \xtilde_{t} - x )^{\top} ( \partial_{x}^{t} - \Delta_{x}^{t} ) + ( y - \ytilde_{t} )^{\top} ( \partial_{y}^{t} - \Delta_{y}^{t} )   \nonumber\\
=    & 
       - ( x - x_{t} )^{\top} \partial_{x}^{t} + (y - y_{t})^{\top} \partial_{y}^{t}  - ( x_{t} - \xtilde_{t} )^{\top} ( \partial_{x}^{t} - \Delta_{x}^{t} ) - ( \ytilde_{t} - y_{t} )^{\top} ( \partial_{y}^{t} - \Delta_{y}^{t} ) \nonumber\\
\geq & 
       - ( f(x, y_{t}) - f(x_{t}, y_{t}) ) + ( f(x_{t}, y) - f(x_{t}, y_{t}) ) - ( x_{t} - \xtilde_{t} )^{\top} ( \partial_{x}^{t} - \Delta_{x}^{t} ) - ( \ytilde_{t} - y_{t} )^{\top} ( \partial_{y}^{t} - \Delta_{y}^{t} )   \nonumber\\
\RHS
=    & \frac{1}{2\eta_x} \Big\{ || x - x_{t} ||^{2} - || x - x_{t+1} ||^{2} + || x - \xtilde_{t} ||^{2} - || x - \xtilde_{t+1} ||^{2} \Big\} + \frac{\eta_x}{2} \Big\{ || \Delta_{x}^{t} ||^{2} + || \partial_{x}^{t} - \Delta_{x}^{t} ||^{2} \Big\}   \nonumber\\
    &+ \frac{1}{2\eta_y} \Big\{  || y - y_{t} ||^{2} - || y - y_{t+1} ||^{2}  + || y - \ytilde_{t} ||^{2} - || y - \ytilde_{t+1} ||^{2} \Big\} + \frac{\eta_y}{2} \Big\{ || \Delta_{y}^{t} ||^{2} + || \partial_{y}^{t} - \Delta_{y}^{t} ||^{2} \Big\}  \nonumber\\
\leq & \frac{1}{2\eta_x} \Big\{ || x - x_{t} ||^{2} - || x - x_{t+1} ||^{2} + || x - \xtilde_{t} ||^{2} - || x - \xtilde_{t+1} ||^{2} \Big\} + \frac{5\eta_x M^2}{2}   \nonumber\\
    &+ \frac{1}{2\eta_y} \Big\{  || y - y_{t} ||^{2} - || y - y_{t+1} ||^{2}  + || y - \ytilde_{t} ||^{2} - || y - \ytilde_{t+1} ||^{2} \Big\} + \frac{5 \eta_y B^2}{2},
\end{align}
where the last inequality uses the facts that $|| \Delta_{x}^{t} || \leq M$, $|| \partial_{x}^{t} || \leq M$, $|| \Delta_{y}^{t} || \leq B$ and $|| \partial_{y}^{t} || \leq B$.
Then we combine the LHS and RHS by summing up $t = 0, ..., T-1$:
\begin{align}
\label{eq1:combine_all2}
       \sum_{t=0}^{T-1} ( f(x_{t}, y) - f(x, y_{t}) )
\leq & \frac{1}{2\eta_x} \Big\{ || x - x_{0} ||^{2} - || x - x_{T} ||^{2} + || x - \xtilde_{0} ||^{2} - || x - \xtilde_{T} ||^{2}  \Big\} + \frac{5\eta_x T M^2}{2}  \nonumber\\
     &\frac{1}{2\eta_y} \Big\{ || y - y_{0} ||^{2} - || y - y_{T} ||^{2} + || y - \ytilde_{0} ||^{2} - || y - \ytilde_{T} ||^{2} \Big\} + \frac{5\eta_y T  B^2}{2} \nonumber\\
     & + \sum_{t=0}^{T-1} \Big( ( x_{t} - \xtilde_{t} )^{\top} ( \partial_{x}^{t} - \Delta_{x}^{t} ) + ( y_{t} - \ytilde_{t} )^{\top} ( \partial_{y}^{t} - \Delta_{y}^{t} ) \Big)  .
\end{align}
By Jensen's inequality, we have
\begin{align}
\label{eq1:combine_all3}
       f(\xbar_T, y) - f(x, \ybar_T)
\leq & 
       \frac{|| x - x_{0} ||^{2} }{\eta_x T  }  +  \frac{ || y - y_{0} ||^{2} }{\eta_yT  }+ \frac{5\eta_x M^2}{2}    + \frac{5\eta_y B^2}{2}  \nonumber\\
     & + \frac{1}{T} \sum_{t=0}^{T-1} \Big( (  x_{t} - \xtilde_{t} )^{\top} ( \partial_{x}^{t} - \Delta_{x}^{t} ) + (  y_{t} - \ytilde_{t} )^{\top} ( \partial_{y}^{t} - \Delta_{y}^{t} ) \Big),
\end{align}
where $\bar x_T = \sum_{t=0}^{T-1} x_{t}/T$,  $\bar y_T = \sum_{t=0}^{T-1} y_{t}/T$.
Let $\hat y_T = \arg\max_{y\in Y}f(\bar x_T, y)$ and $x_*\in X^*$, we get
\begin{align*}
     \max_{y\in Y} f(\bar x_T, y) - f(x_*, \bar y_T)\leq&   \frac{|| x_* - x_{0} ||^{2} }{\eta_x   T  } +  \frac{ || \hat y_T - y_{0} ||^{2} }{\eta_y T  }+ \frac{5\eta_x M^2}{2}    + \frac{5\eta_y B^2}{2}\\
     & + \frac{1}{T} \sum_{t=0}^{T-1} \Big( (  x_{t} - \xtilde_{t} )^{\top} ( \partial_{x}^{t} - \Delta_{x}^{t} ) + (  y_{t} - \ytilde_{t} )^{\top} ( \partial_{y}^{t} - \Delta_{y}^{t} ) \Big), 
\end{align*}

Then we complete the proof by taking the expectation on both sides of above inequality and using the the facts that $\E[(  x_{t} - \xtilde_{t} )^{\top} ( \partial_{x}^{t} - \Delta_{x}^{t} ) + (  y_{t} - \ytilde_{t} )^{\top} ( \partial_{y}^{t} - \Delta_{y}^{t} )]=0$.

\section{Proof of Lemma~\ref{lemma:convergence_rspd_per_stage}}
\label{app:sec:proof:lem:convergence_rspd_per_stage}
For simplicity of presentation, we use the notations $\Delta_{x}^{t} = \nabla_{x} f(x_{t}, y_{t}; \xi_{t})$, $\Delta_{y}^{t} = \nabla_{y} f(x_{t}, y_{t}, ; \xi_{t})$, $\partial_{x}^{t} = \nabla_{x} f(x_{t}, y_{t})$ and $\partial_{y}^{t} = \nabla_{y} f(x_{t}, y_{t})$.

To prove Lemma~\ref{lemma:convergence_rspd_per_stage}, we would leverage the following two update approaches:
\begin{align}
\label{eq:two_update_sequences}
  & 
  \left \{ 
    \begin{array}{cc}
    x_{t+1} = \arg \min_{x \in X \cap \calB(x_{0}, R_x)} &  x^{\top} \Delta_{x}^{t} + \frac{1}{2 \eta_x } || x - x_{t} ||^{2}   \\
    y_{t+1} = \arg \min_{y \in Y \cap \calB(y_{0}, R_y)} & -y^{\top} \Delta_{y}^{t} + \frac{1}{2 \eta_y } || y - y_{t} ||^{2}   
    \end{array}
  \right.     \nonumber   \\
  & 
  \left \{ 
    \begin{array}{cc}
    \xtilde_{t+1} = \arg \min_{x \in X \cap \calB(x_{0}, R_x)} &  x^{\top} ( \partial_{x}^{t} - \Delta_{x}^{t} ) + \frac{1}{2 \eta_x } || x - \xtilde_{t} ||^{2}   \\
    \ytilde_{t+1} = \arg \min_{y \in Y \cap \calB(y_{0}, R_y)} & -y^{\top} ( \partial_{y}^{t} - \Delta_{y}^{t} ) + \frac{1}{2 \eta_y } || y - \ytilde_{t} ||^{2}    ,
    \end{array}
  \right. 
\end{align}
where $x_{0} = \xtilde_{0}$ and $y_{0} = \ytilde_{0}$.
The first two lines are identical to Line 5 and 6 in Algorithm~\ref{alg:restart_primal_dual_algorithm_sa}.
This can be verified easily.
Take the first one as example:
\begin{align*}
x_{t+1} 
= &
\Pi_{X \cap \calB(x_{0}, R_x)} ( x_{t} - \eta_x \Delta_{x}^{t} )  
\\
= &
\arg\min_{x \in X \cap \calB(x_{0}, R_x)} \| x - (x_{t} - \eta_x \Delta_{x}^{t}) \|^{2} 
\\
= &
\arg\min_{x \in X \cap \calB(x_{0}, R_x)} \frac{1}{2 \eta_x} \| x - x_{t} \|^{2} + x^{\top} \Delta_{x}^{t}  .
\end{align*}
Let us define $\psi(x) = x^{\top} u + \frac{1}{2\gamma} || x - v ||^{2}$ with $x' = \arg \min_{x \in X} \psi(x)$, which includes the four update approaches in~(\ref{eq:two_update_sequences}) as special cases.
By using the strong convexity of $\psi(x)$ and the first order optimality condition ($\nabla \psi(x')^{\top} ( x - x' ) \geq 0$), for any $x$, we have
\begin{align*}
       \psi(x) - \psi(x') \geq & \nabla \psi(x')^T ( x - x') + \frac{1}{2\gamma} || x - x' ||^{2} \geq \frac{1}{2\gamma} || x - x' ||^{2},
\end{align*}
which implies
\begin{align*}
       0 \leq & ( x - x' )^{\top} u + \frac{1}{2\gamma}  || x - v ||^{2} - \frac{1}{2\gamma}  || x' - v ||^{2} - \frac{1}{2\gamma} || x - x' ||^{2}  \\
=    & ( v- x' )^{\top}  u- (v - x )^{\top} u + \frac{1}{2\gamma}  || x - v ||^{2} - \frac{1}{2\gamma} || x' - v ||^{2} - \frac{1}{2\gamma}  || x - x' ||^{2} \\
=    & - \frac{1}{2\gamma}  || x' - v ||^{2} + ( v - x' )^{\top} u  + \frac{1}{2\gamma} || x - v ||^{2}  - \frac{1}{2\gamma}  || x - x' ||^{2} - (v - x )^{\top} u  \\
\leq &  \frac{\gamma}{2} || u ||^{2} + \frac{1}{2\gamma}  || x - v ||^{2}  - \frac{1}{2\gamma} || x - x' ||^{2} - (v - x )^{\top} u. 
\end{align*}
Then
\begin{align}
\label{eq:proof_expectation1}
(v - x )^{\top} u  \leq \frac{\gamma}{2} || u ||^{2} + \frac{1}{2\gamma}  || x - v ||^{2}  - \frac{1}{2\gamma}  || x - x' ||^{2} .
\end{align}
Applying the above result to the updates in~(\ref{eq:two_update_sequences}) (treating $u$ above as $\Delta_x^t$, $\Delta_y^t$, $\partial_x^t - \Delta_x^t$, $\partial_y^t - \Delta_y^t$, respectively), we have
\begin{align}
\label{eq:seperate_all}
& ( x_{t} - x )^{\top} \Delta_{x}^{t} \leq \frac{1}{2 \eta_x} || x - x_{t} ||^{2} - \frac{1}{2 \eta_x} || x - x_{t+1} ||^{2} + \frac{\eta_x}{2} || \Delta_{x}^{t} ||^{2}   \nonumber\\
& ( y - y_{t} )^{\top} \Delta_{y}^{t} \leq \frac{1}{2 \eta_y} || y - y_{t} ||^{2} - \frac{1}{2 \eta_y} || y - y_{t+1} ||^{2} + \frac{\eta_y}{2} || \Delta_{y}^{t} ||^{2}   \nonumber\\
& ( \xtilde_{t} - x )^{\top} ( \partial_{x}^{t} - \Delta_{x}^{t} ) \leq \frac{1}{2 \eta_x} || x - \xtilde_{t} ||^{2} - \frac{1}{2 \eta_x} || x - \xtilde_{t+1} ||^{2} + \frac{\eta_x}{2} || \partial_{x}^{t} - \Delta_{x}^{t} ||^{2}   \nonumber\\
& ( y - \ytilde_{t} )^{\top} ( \partial_{y}^{t} - \Delta_{y}^{t} ) \leq \frac{1}{2 \eta_y} || y - \ytilde_{t} ||^{2} - \frac{1}{2 \eta_y} || y - \ytilde_{t+1} ||^{2} + \frac{\eta_y}{2} || \partial_{y}^{t} - \Delta_{y}^{t} ||^{2} .
\end{align}
Adding the above four inequalities, we have
\begin{align}
\label{eq:combine_all}
\LHS 
=    & 
       ( x_{t} - x )^{\top} \Delta_{x}^{t} + ( y - y_{t} )^{\top} \Delta_{y}^{t} + ( \xtilde_{t} - x )^{\top} ( \partial_{x}^{t} - \Delta_{x}^{t} ) + ( y - \ytilde_{t} )^{\top} ( \partial_{y}^{t} - \Delta_{y}^{t} )      \nonumber\\
=    & 
       ( x_{t} - x )^{\top} \partial_{x}^{t} + ( x_{t} - x )^{\top} ( \Delta_{x}^{t} - \partial_{x}^{t} )
       + ( y - y_{t} )^{\top} \partial_{y}^{t} + ( y - y_{t} )^{\top} ( \Delta_{y}^{t} - \partial_{y}^{t} )    \nonumber\\
     & + ( \xtilde_{t} - x )^{\top} ( \partial_{x}^{t} - \Delta_{x}^{t} ) + ( y - \ytilde_{t} )^{\top} ( \partial_{y}^{t} - \Delta_{y}^{t} )   \nonumber\\
=    & 
       - ( x - x_{t} )^{\top} \partial_{x}^{t} + (y - y_{t})^{\top} \partial_{y}^{t}  - ( x_{t} - \xtilde_{t} )^{\top} ( \partial_{x}^{t} - \Delta_{x}^{t} ) - ( \ytilde_{t} - y_{t} )^{\top} ( \partial_{y}^{t} - \Delta_{y}^{t} ) \nonumber\\
\geq & 
       - ( f(x, y_{t}) - f(x_{t}, y_{t}) ) + ( f(x_{t}, y) - f(x_{t}, y_{t}) ) - ( x_{t} - \xtilde_{t} )^{\top} ( \partial_{x}^{t} - \Delta_{x}^{t} ) - ( \ytilde_{t} - y_{t} )^{\top} ( \partial_{y}^{t} - \Delta_{y}^{t} )   \nonumber\\
\RHS
=    & \frac{1}{2\eta_x} \Big\{ || x - x_{t} ||^{2} - || x - x_{t+1} ||^{2} + || x - \xtilde_{t} ||^{2} - || x - \xtilde_{t+1} ||^{2} \Big\} + \frac{\eta_x}{2} \Big\{ || \Delta_{x}^{t} ||^{2} + || \partial_{x}^{t} - \Delta_{x}^{t} ||^{2} \Big\}   \nonumber\\
    &+ \frac{1}{2\eta_y} \Big\{  || y - y_{t} ||^{2} - || y - y_{t+1} ||^{2}  + || y - \ytilde_{t} ||^{2} - || y - \ytilde_{t+1} ||^{2} \Big\} + \frac{\eta_y}{2} \Big\{ || \Delta_{y}^{t} ||^{2} + || \partial_{y}^{t} - \Delta_{y}^{t} ||^{2} \Big\}  \nonumber\\
\leq & \frac{1}{2\eta_x} \Big\{ || x - x_{t} ||^{2} - || x - x_{t+1} ||^{2} + || x - \xtilde_{t} ||^{2} - || x - \xtilde_{t+1} ||^{2} \Big\} + \frac{5\eta_x M^2}{2}   \nonumber\\
    &+ \frac{1}{2\eta_y} \Big\{  || y - y_{t} ||^{2} - || y - y_{t+1} ||^{2}  + || y - \ytilde_{t} ||^{2} - || y - \ytilde_{t+1} ||^{2} \Big\} + \frac{5 \eta_y B^2}{2},
\end{align}
where the last inequality uses the facts that $|| \Delta_{x}^{t} || \leq M$, $|| \partial_{x}^{t} || \leq M$, $|| \Delta_{y}^{t} || \leq B$ and $|| \partial_{y}^{t} || \leq B$.
Then we combine the LHS and RHS by summing up $t = 0, ..., T-1$:
\begin{align}
\label{eq:combine_all2}
       \sum_{t=0}^{T-1} ( f(x_{t}, y) - f(x, y_{t}) )
\leq & \frac{1}{2\eta_x} \Big\{ || x - x_{0} ||^{2} - || x - x_{T} ||^{2} + || x - \xtilde_{0} ||^{2} - || x - \xtilde_{T} ||^{2}  \Big\} + \frac{5\eta_x T M^2}{2}  \nonumber\\
     &\frac{1}{2\eta_y} \Big\{ || y - y_{0} ||^{2} - || y - y_{T} ||^{2} + || y - \ytilde_{0} ||^{2} - || y - \ytilde_{T} ||^{2} \Big\} + \frac{5\eta_y T  B^2}{2} \nonumber\\
     & + \sum_{t=0}^{T-1} \Big( ( x_{t} - \xtilde_{t} )^{\top} ( \partial_{x}^{t} - \Delta_{x}^{t} ) + ( y_{t} - \ytilde_{t} )^{\top} ( \partial_{y}^{t} - \Delta_{y}^{t} ) \Big)  .
\end{align}
By Jensen's inequality, we have
\begin{align}
\label{eq:combine_all3}
       f(\xbar_T, y) - f(x, \ybar_T)
\leq & 
       \frac{|| x - x_{0} ||^{2} }{\eta_x T  }  +  \frac{ || y - y_{0} ||^{2} }{\eta_yT  }+ \frac{5\eta_x M^2}{2}    + \frac{5\eta_y B^2}{2}  \nonumber\\
     & + \frac{1}{T} \sum_{t=0}^{T-1} \Big( (  x_{t} - \xtilde_{t} )^{\top} ( \partial_{x}^{t} - \Delta_{x}^{t} ) + (  y_{t} - \ytilde_{t} )^{\top} ( \partial_{y}^{t} - \Delta_{y}^{t} ) \Big),
\end{align}
where $\bar x_T = \sum_{t=0}^{T-1} x_{t}/T$,  $\bar y_T = \sum_{t=0}^{T-1} y_{t}/T$.
Let $\hat y_T = \arg\max_{y \in Y \cap \calB(y_0, R_y) } f(\bar x_T, y)$ and any fixed $x \in X \cap \calB(x_0, R_x)$, we get
\begin{align*}
\max_{y \in Y \cap \calB(y_0, R_y)} f(\bar x_T, y) - f(x, \bar y_T)
\leq &
\frac{|| x - x_{0} ||^{2} }{\eta_x   T  }
+ \frac{ || \hat y_T - y_{0} ||^{2} }{\eta_y T  }
+ \frac{5\eta_x M^2}{2}    
+ \frac{5\eta_y B^2}{2} 
\\
& + \frac{1}{T} \sum_{t=0}^{T-1} \Big( (  x_{t} - \xtilde_{t} )^{\top} ( \partial_{x}^{t} - \Delta_{x}^{t} ) + (  y_{t} - \ytilde_{t} )^{\top} ( \partial_{y}^{t} - \Delta_{y}^{t} ) \Big), 
\end{align*}

Then we employ Azuma's inequality (Lemma~\ref{lemma:azuma}) to upper bound the last term with a high probability.
Let $V_{t} = ( \xtilde_{t} - x_{t} )^{T} ( \partial_{x}^{t} - \Delta_{x}^{t} ) + ( \ytilde_{t} - y_{t} )^{T} ( \partial_{y}^{t} - \Delta_{y}^{t} )$ be martingale difference sequence.
We have
\begin{align*}
       | V_{t} | 
=    &
       | ( \xtilde_{t} - x_{t} )^{T} ( \partial_{x}^{t} - \Delta_{x}^{t} ) + ( \ytilde_{t} - y_{t} )^{T} ( \partial_{y}^{t} - \Delta_{y}^{t} ) |   \nonumber\\
\leq & 
       | ( \xtilde_{t} - x_{t} )^{T} ( \partial_{x}^{t} - \Delta_{x}^{t} ) |  + | ( \ytilde_{t} - y_{t} )^{T} ( \partial_{y}^{t} - \Delta_{y}^{t} ) |   \nonumber\\
\leq &
       || \xtilde_{t} - x_{t} || ( || \partial_{x}^{t} || + || \Delta_{x}^{t} || ) + || \ytilde_{t} - y_{t} || ( || \partial_{y}^{t} || + || \Delta_{y}^{t} || )  \nonumber\\
\leq & 
       2 M (|| \xtilde_{t} - x_{0} ||+|| x_{0} - x_{t} || )+ 2 B (|| \ytilde_{t} - y_{0}|| +|| y_{0} - y_{t} ||) \nonumber\\
\leq & 
       4 M R_x + 4 B R_y   ,
\end{align*}
where the first inequality is due to the triangle inequality,
the second inequality is due to Cauchy–Schwarz inequality,
the third inequality is due to Assumption \ref{assumption:general} (2), and
the last inequality is due to $\tilde x_t, x_t \in X \cap \calB(x_0, R_x)$, $\tilde y_t, y_t \in Y \cap \calB(y_0, R_y)$.
Therefore, by Azuma's inequality with probability at least $1 - \tildedelta$, we have for any $x \in X \cap \calB(x_0, R_x)$
\begin{align*}
\max_{y \in Y \cap \calB(y_0, R_y)} f(\xbar_T, y) - f(x, \ybar_T)
\leq &
\frac{|| x - x_{0} ||^{2} }{\eta_x T  }  
+ \frac{ || \hat y_T - y_{0} ||^{2} }{\eta_y T }
+ \frac{5\eta_x M^2}{2}    
+ \frac{5\eta_y B^2}{2} 
\\
&
+ \frac{ 4 ( M R_x + B R_y ) \sqrt{2 \log\frac{1}{\tildedelta}} }{ \sqrt{T} }  .
\end{align*}


\section{Proof of Theorem~\ref{theorem:convergence_rspd_adaptive_c} (Theorem~\ref{theorem:convergence_rspd_adaptive_theta})}
\label{app:sec:proof:thm:convergence_rspd_adaptive}

The proof is similar to the proof of Theorem 3 in~\citep{ICMLASSG}. For completeness, we include it here. The proof of Theorem~\ref{theorem:convergence_rspd_adaptive_theta} can be also obtained by a slight change of the following proof. 
\begin{proof}
Based on the proof of Theorem~\ref{theorem:convergence_rspd}, since $v=1$ and by the settings of $S =  \lceil \log_2(\frac{\epsilon_0}{\epsilon})\rceil \geq  \lceil \log_2(\frac{\epsilon_0}{\hat\epsilon_1})\rceil$,  $R_1^{(1)} = \frac{c\epsilon_0}{\hat\epsilon_1^{1-\theta}}$, 
$
T_1 
= 
\max \bigg\{ 320 M^2, 320 B^2 L^2 G^2, 8192 \log(\frac{1}{\tildedelta}) M^2, 8192 \log(\frac{1}{\tildedelta}) B^2 L^2 G^2 \bigg\}
\cdot \frac{(R_1^{(1)})^2}{\epsilon_0^2}   ,
$
it can be shown that
\begin{align} \label{eqn:RRSPD}
P(x^{(1)}) - P^* \leq 2\hat\epsilon_1
\end{align}
with a probability $1-\frac{\delta}{S+1}$.
Next, by running RSPD with initial $x^{(1)}$ satisfying (\ref{eqn:RRSPD}) and the settings of $S =  \lceil \log_2(\frac{\epsilon_0}{\epsilon})\rceil \geq  \lceil \log_2(\frac{2\hat\epsilon_1}{\hat\epsilon_1/2})\rceil$, $R_1^{(2)} = \frac{c\epsilon_0}{(\hat\epsilon_1/2)^{1-\theta}} \geq \frac{c2\hat\epsilon_1}{(\hat\epsilon_1/2)^{1-\theta}}$, and 
$
T_2 = T_1 \cdot 2^{2(1-\theta)} 
= 
\max \bigg\{ 320 M^2, 320 B^2 L^2 G^2, 8192 \log(\frac{1}{\tildedelta}) M^2, 8192 \log(\frac{1}{\tildedelta}) B^2 L^2 G^2 \bigg\}
\cdot \frac{(R_1^{(2)})^2}{\epsilon_0^2}   ,
$
Theorem~\ref{theorem:convergence_rspd} ensures that with a probability at least $(1-\delta/(S+1))^2$,
\begin{align*} 
P(x^{(2)}) - P^* \leq \hat\epsilon_1.
\end{align*}
By continuing this process with $K =  \lceil \log_2(\hat\epsilon_1/\epsilon) \rceil + 1$, we can show that
\begin{align*} 
P(x^{(K)}) - P^* \leq 2\hat\epsilon_1/2^{K-1} \leq 2\epsilon
\end{align*}
with a probability at least $(1-\delta/(S+1))^K\geq 1-\delta\frac{K}{S+1}\geq 1- \delta$.
The total number of iterations for  $K$ calls of RSPD can be bounded by
\begin{align*} 
T_K &= S\sum_{k=1}^{K}T_k = S\sum_{k=1}^{K}T_12^{2(k-1)(1-\theta)} = ST_12^{2(K-1)(1-\theta)}\sum_{k=1}^{K} \left( 1/2^{2(1-\theta)}\right)^{K-k} \\ 
& \leq ST_12^{2(K-1)(1-\theta)} \frac{1}{1-1/2^{2(1-\theta)}} \leq  O\left( ST_1 \left( \frac{\hat\epsilon_1}{\epsilon}\right)^{2(1-\theta)} \right) \leq \widetilde O(\log(1/\delta)/\epsilon^{2(1-\theta)}).
\end{align*}
\end{proof}

\section{Piecewise Quadratic Function of Distributionally Robust Optimization}
\label{app:sec:proof:piecewise_quadratic}

We would like to prove the $\ell_{1}$ regularized DRO function is convex and piecewise quadratic, so it satifies the LEB condition with $\theta=1/2$. 
First we present the following proposition.

\begin{propo}
(Proposition 2.3~\citep{rockafellar1987linear})
Let $\rho_{V, Q} (s) = \sup_{v \in V} \{ s^{\top} v - \frac{1}{2} v^{\top} Q v\}$ where $Q$ is symmetric and positive semidefinite,
and $\rho_{V, Q} (s)$ is lower semicontinuous, convex and piecewise linear-quadratic.
Its effective domain $L = \{ s | \rho_{V, Q} < \infty \}$ is nonempty convex polyhedron that can be decomposed into finitely many polyhedral convex sets, on each of which $\rho_{V, Q}$ is quadratic or linear.
\end{propo}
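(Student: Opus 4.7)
The proposition is a direct citation of Proposition 2.3 of Rockafellar--Wets, so the plan is to reconstruct the standard argument via parametric quadratic programming, assuming throughout (as in the cited source) that $V$ is a nonempty polyhedral convex set, since the displayed statement only makes sense under that convention.

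First, for each fixed $v \in V$ the map $s \mapsto s^\top v - \tfrac{1}{2} v^\top Q v$ is affine in $s$, so $\rho_{V,Q}$ is convex and lower semicontinuous as a pointwise supremum of affine functions. Next, to identify the effective domain $L$, we observe that the supremum is finite iff the concave (since $Q \succeq 0$) linear-quadratic objective is bounded above on the polyhedron $V$; by the standard recession-cone test this reduces to requiring $s^\top d \leq \tfrac{1}{2} d^\top Q d$ for every $d \in 0^+ V$, which, since the quadratic term dominates unless $Qd = 0$, collapses to the finite system of linear inequalities $s^\top d \leq 0$ for $d$ in a finite set of generators of $\{d \in 0^+V : Qd = 0\}$. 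Hence $L$ is a nonempty convex polyhedron.

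For the piecewise structure, we would write $V = \{v : A v \leq b\}$ and parametrize by the active-set index $I \subseteq \{1,\dots,m\}$. Restricting the KKT conditions for the inner maximization to the face $F_I = \{v : A_I v = b_I\}$ yields the linear system $Q v + A_I^\top \lambda = s$, $A_I v = b_I$, $\lambda \geq 0$, $A_{\bar I} v \leq b_{\bar I}$, whose solution $(v(s), \lambda(s))$ is affine in $s$ on a polyhedral region $R_I \subseteq L$ (the region on which $\lambda(s) \geq 0$ and $A_{\bar I} v(s) \leq b_{\bar I}$ both hold). Substituting back gives
\[
\rho_{V,Q}(s) = s^\top v(s) - \tfrac{1}{2} v(s)^\top Q v(s)
\]
on $R_I$, which is linear-quadratic in $s$. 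Since the number of active-set patterns is finite, this produces the required decomposition of $L$ into finitely many polyhedral pieces on each of which $\rho_{V,Q}$ is quadratic or linear.

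The main obstacle is verifying that the collection $\{R_I\}$ actually covers all of $L$ (i.e.\ that every $s \in L$ admits a KKT-type maximizer with some active set, which requires existence of an attaining $v$ under the bounded-above condition) and that the piecewise formulas agree on overlaps so that $\rho_{V,Q}$ is well defined as a single piecewise linear-quadratic function rather than a correspondence. This is precisely the content of the parametric quadratic-programming lemmas in Rockafellar--Wets, which we would invoke rather than reprove.
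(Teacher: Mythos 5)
The paper itself offers no proof of this proposition: it is quoted verbatim as Proposition 2.3 of \citet{rockafellar1987linear} and invoked as a black box in Appendix~E, so there is no in-paper argument to measure your proposal against. Your reconstruction is the standard parametric quadratic-programming route and is essentially correct. You are right to restore the hypothesis that $V$ is a nonempty polyhedral convex set, which the paper's transcription silently drops but which is indispensable (for general convex $V$ the conclusion fails, e.g.\ $V$ a Euclidean disk with $Q=0$ gives the non-polyhedral support function $\|s\|$). The convexity and lower semicontinuity of $\rho_{V,Q}$ as a supremum of affine functions, and the polyhedrality of $L$ via the recession cone, are both fine; one small imprecision is your intermediate condition $s^\top d \leq \tfrac{1}{2} d^\top Q d$, which is not the recession-function test --- the correct dichotomy is that a direction $d\in 0^+V$ with $d^\top Q d>0$ drives the objective to $-\infty$ along the ray and hence imposes no restriction on $s$, while $Qd=0$ forces $s^\top d\leq 0$ --- but the collapsed description you arrive at, namely $L=\{s: s^\top d\leq 0 \text{ for all } d\in 0^+V \text{ with } Qd=0\}$, is the right one and is polyhedral because that cone is finitely generated. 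The active-set decomposition and the back-substitution giving a linear-quadratic formula on each region $R_I$ are standard, and the two issues you flag (that the $R_I$ cover $L$, i.e.\ attainment of the supremum whenever it is finite, and consistency of the formulas on overlaps) are genuine but constitute exactly the parametric-QP lemmas of the cited source, so deferring them is appropriate for an imported result; nothing in the paper's downstream use of the proposition requires more than the statement itself.
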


We can rewrite DRO as $\max_{y \in \Delta_{n}} \sum_{i=1}^{n} y_{i} \ell_{i} (x) - \frac{\lambda_{1}}{2}|| ny - \mathbf{1}||^2 
= \max_{y \in \Delta_{n}} \sum_{i=1}^{n} y_{i} ( \ell_{i} (x) + n\lambda_{1}  ) - \frac{n^2\lambda_{1}}{2} y^{\top} \mathbf{I} y+\frac{n\lambda_{1}}{2}$,
which is piecewise linear-quadratic in $\Big( \ell(x) + n\lambda_{1} \mathbf{1} \Big)$ according to the above proposition.
If $\ell(x)$ is piecewise linear, the composition of the piecewise linear and piecewise quadratic functions is piecewise quadratic.
\end{document}